
\documentclass{article}




\usepackage[accepted]{icml2024}

\usepackage{amsmath}
\usepackage{amssymb}
\usepackage{mathtools}
\usepackage{amsthm}
\usepackage{hyperref}
\usepackage{url}
\usepackage{duckuments}

\usepackage{graphicx}
\usepackage{subcaption}
\usepackage[utf8]{inputenc} 
\usepackage[T1]{fontenc}    
\usepackage{natbib}
\usepackage{booktabs}       
\usepackage{multirow}
\usepackage{amsfonts}       
\usepackage{nicefrac}       
\usepackage{microtype}      
\usepackage{color}
\usepackage{lipsum}
\usepackage{booktabs} 
\usepackage{tabularx,longtable,multirow}
\usepackage{makecell}

\newcommand{\A}{\mathcal{A}}
\newcommand{\R}{\mathbb{R}}
\newcommand{\N}{\mathcal{N}}

\newcommand{\x}{\bm{x}}
\newcommand{\Bb}{\mathbf{b}}

\newcommand{\y}{\mathbf{y}}
\newcommand{\z}{\bm{z}}

\newcommand{\m}{\mathbf{m}}

\newcommand{\s}{\mathbf{s}}


\newcommand{\mparam}{\bm{\theta}}	


\def\1{\bm{1}}
\usepackage{bm}
\usepackage{enumitem}
\usepackage{wrapfig}
\usepackage[capitalize,noabbrev]{cleveref}
\DeclareMathOperator*{\argmax}{arg\,max}
\DeclareMathOperator*{\argmin}{arg\,min}

\theoremstyle{plain}
\newtheorem{theorem}{Theorem}[section]
\newtheorem{proposition}[theorem]{Proposition}
\newtheorem{lemma}[theorem]{Lemma}
\newtheorem{corollary}[theorem]{Corollary}
\theoremstyle{definition}
\newtheorem{definition}[theorem]{Definition}

\theoremstyle{remark}

\newlist{thmlist}{enumerate}{1}
\setlist[thmlist]{label=(\roman{thmlisti}),
                  ref=\thetheorem.(\roman{thmlisti}),
                  noitemsep}

\usepackage[textsize=tiny]{todonotes}

\icmltitlerunning{On the Identifiability of Switching Dynamical Systems}

\begin{document}

\twocolumn[
\icmltitle{On the Identifiability of Switching Dynamical Systems}




\begin{icmlauthorlist}
\icmlauthor{Carles Balsells-Rodas}{imp}
\icmlauthor{Yixin Wang}{mich}
\icmlauthor{Yingzhen Li}{imp}
\end{icmlauthorlist}

\icmlaffiliation{imp}{Imperial College London}
\icmlaffiliation{mich}{University of Michigan}

\icmlcorrespondingauthor{Carles Balsells-Rodas}{cb221@imperial.ac.uk}

\icmlkeywords{identifiability, probabilistic inference, generative modelling, state-space models, time series data, latent variable models}

\vskip 0.3in
]



\printAffiliationsAndNotice{}  

\begin{abstract}
The identifiability of latent variable models has received increasing attention due to its relevance in interpretability and out-of-distribution generalisation. In this work, we study the identifiability of Switching Dynamical Systems, taking an initial step toward extending identifiability analysis to sequential latent variable models. We first prove the identifiability of Markov Switching Models, which commonly serve as the prior distribution for the continuous latent variables in Switching Dynamical Systems. We present identification conditions for first-order Markov dependency structures, whose transition distribution is parametrised via non-linear Gaussians. We then establish the identifiability of the latent variables and non-linear mappings in Switching Dynamical Systems up to affine transformations, by leveraging identifiability analysis techniques from identifiable deep latent variable models. We finally develop estimation algorithms for identifiable Switching Dynamical Systems. Throughout empirical studies, we demonstrate the practicality of identifiable Switching Dynamical Systems for segmenting high-dimensional time series such as videos, and showcase the use of identifiable Markov Switching Models for regime-dependent causal discovery in climate data.
\end{abstract}
\section{Introduction}

State-space models (SSMs) are well-established sequence modelling techniques where their linear versions have been extensively studied \citep{lindgren1978markov,poritz1982linear, hamilton1989new}. Meanwhile, recurrent neural networks \citep{hochreiter1997long,cho-etal-2014-properties} have gained high popularity for sequence modelling thanks to their abilities in capturing non-linear and long-term dependencies. Nevertheless, significant progress has been made on fusing neural networks with SSMs, with \citet{gu2022efficiently} as one of the latest examples. Many of these advances focus on building sequential latent variable models (LVMs) as flexible deep generative models \citep{chung2015recurrent,Li2018DisentangledSA,babaeizadeh2018stochastic,Saxena2021ClockworkVA}, where SSMs have been incorporated as latent dynamical priors \citep{johnson2016composing,linderman2016recurrent,linderman2017bayesian,fraccaro2017disentangled,dong2020collapsed,ansari2021deep,smith2023simplified}. Despite the efforts in designing flexible SSM priors and developing stable training schemes, theoretical properties such as identifiability for sequential generative models are less studied, contrary to early literature on linear SSMs.

Identifiability, in general, establishes a one-to-one correspondence between the data likelihood and the model parameters (or latent variables), or an equivalence class of the latter. In causal discovery \citep{peters2017elements}, identifiability refers to whether the underlying causal structure can be correctly pinpointed from infinite observational data. In independent component analysis (ICA, \citet{comon1994independent}), identifiability analysis focuses on both the latent sources and the mapping from the latents to the observed. While general non-linear ICA is ill-defined \citep{hyvarinen1999nonlinear}, recent results show that identifiability can be achieved using conditional priors \citep{khemakhem2020variational}. For deep (non-temporal) latent variable models, the required access to auxiliary variables can be relaxed using a finite mixture prior \cite{kivva2022identifiability}. Recent works have attempted to extend these results to sequential models using non-linear ICA \citep{hyvarinen2017nonlinear,hyvarinen2019nonlinear,halva2020hidden,halva2021disentangling}, or latent causal processes \citep{yao2022temporally,yao2022learning,lippe2023causal,lippe2023biscuit}. 


In this work, we develop an identifiability analysis for Switching Dynamical Systems (SDSs) -- a class of sequential LVMs with SSM priors that allow regime-switching behaviours, where their associated inference methods have been explored recently \citep{dong2020collapsed,ansari2021deep}. Our approach differs fundamentally from non-linear ICA since the latent variables are no longer independent. To address this challenge, we first construct the latent dynamical prior using Markov Switching Models (MSMs) \citep{hamilton1989new} -- an extension of Hidden Markov Models (HMMs) with autoregressive connections, for which we provide the first identifiability analysis of this model family in non-linear transition settings. This also allows regime-dependent causal discovery \citep{saggioro2020reconstructing} thanks to having access to the transition derivatives.
We then extend the identifiability results to SDSs using recent identifiability analysis techniques for the non-temporal deep latent variable models \citep{kivva2022identifiability}. Importantly, the identifiability conditions we provide are less restrictive, significantly broadening their applicability. In contrast,\citet{yao2022temporally,yao2022learning,lippe2023biscuit} require auxiliary information to handle regime-switching effects or distribution shifts, and \citet{halva2021disentangling} imposes assumptions on the temporal correlations such as stationarity \citep{hyvarinen2017nonlinear}. Moreover, unlike the majority of existing works \citep{halva2021disentangling,yao2022temporally,yao2022learning}, our identifiability analysis does not require the injectivity of the decoder mapping. Below we summarise our main contributions in both theoretical and empirical forms:
\vspace{-1em}
\begin{itemize}
  \setlength{\itemsep}{1pt}
  \setlength{\parskip}{0pt}
  \setlength{\parsep}{0pt}
    \item We present conditions in which first-order MSMs with non-linear Gaussian transitions are identifiable up to permutations (Section \ref{sec:markov_switching_models}). We further provide the first analysis of identifiability conditions for non-parametric first-order MSMs in Appendix \ref{app:identifiability_msm}.
    \item We further provide conditions for SDS's identifiability up to affine transformations of the latent variables and non-linear emission (Section \ref{sec:switching_dynamical_systems}).
    \item We demonstrate the effectiveness of identifiable SDSs on causal discovery and sequence modelling tasks. These include discovery of time-dependent causal structures in climate data (Section \ref{sec:experiments_causal}), and time-series segmentation in videos (Section \ref{sec:experiments_salsa}).
\end{itemize}

\section{Background}
\subsection{Identifiable Latent Variable Models}\label{sec:kivva_background}
In the non-temporal case, many works explore identifiability of latent variable models \citep{khemakhem2020variational,kivva2022identifiability}. Specifically, consider a generative model where its latent variables $\z\in\R^m$ are drawn from a Gaussian mixture prior with $K$ components ($K < +\infty$). Then $\z$ is transformed via a (noisy) piece-wise linear mapping to obtain the observation $\x\in\R^n$, $n \geq m$:
\begin{align}
    \x = f(\z) + \bm{\epsilon},\quad \bm{\epsilon}\sim \mathcal{N}(\bm{0},\bm{\Sigma}), \\
    \z\sim p(\z) :=\sum_{k=1}^K p(s=k) \mathcal{N}\left(\z|\bm{\mu}_k,\bm{\Sigma}_k\right).
\end{align}
\citet{kivva2022identifiability} established that if the transformation $f$ is \emph{weakly injective} (see definition below), the prior distribution $p(\z)$ is identifiable up to affine transformations\footnote{See Def. 2.2 in \citet{kivva2022identifiability} or the equivalent adapted to SDSs (Def. \ref{def:identifiability_affine_transformations}).} from the observations. 
If we further assume $f$ is continuous and \emph{injective}, both prior distribution $p(\z)$ and non-linear mapping $f$ are identifiable up to affine transformations. In Section \ref{sec:switching_dynamical_systems} we extend these results to establish identifiability for SDSs using similar proof strategies.
\begin{definition}
\label{def:weakly_injective} A mapping $f : \mathbb{R}^m \rightarrow \mathbb{R}^n$ is said to be weakly injective if (i) there exsists $\x_0\in\R^{n}$ and $\delta > 0$ s.t. $|f^{-1}(\{\x\})|=1$ for every $\x\in B(\x_0,\delta)\cap f(\R^m)$, and (ii) $\{\x\in\R^n:|f^{-1}(\{\x\})|>1\}\subseteq f(\R^m)$ has measure zero with respect to the Lebesgue measure on $f(\R^m)$.\footnote{$B(\x_0,\delta) = \{\x\in\R^n: ||\x - \x_0 || < \delta\}$}. Here the notations are extended to sets, i.e., for a set $\mathcal{B}$, $f(\mathcal{B}) := \{f(\x):\x\in\mathcal{B}\}$ and $f^{-1}(\mathcal{B}) := \{\x: f(\x)\in\mathcal{B}\}$.
\end{definition}
\begin{definition}
\label{def:injective}
$f$ is injective if $|f^{-1}(\{\x\})|=1, \forall \x\in f(\R^m)$.
\end{definition}

\subsection{Switching Dynamical Systems}
A Switching Dynamical System (SDS), with an example graphical model illustrated in Figure \ref{fig:msm_snlds}, is a sequential latent variable model with its dynamics governed by both discrete and continuous latent states, $s_t\in\{1,\dots,K\}, \z_t\in\R^m$, respectively. At each time step $t$, the discrete state $s_t$ determines the regime of the dynamical prior that the current continuous latent variable $\z_t$ should follow, and the observation $\x_t$ is generated from $\z_t$ using a (noisy) non-linear transformation. This gives the following probabilistic model for the states and the observation variables:
\begin{multline}\label{eq:sds}\vspace{-.5em}
p_{\mparam}(\x_{1:T},\z_{1:T},\s_{1:T}) =\\[-.5em]
p_{\mparam}(\s_{1:T}) p_{\mparam}(\z_{1:T} | \s_{1:T}) \prod_{t=1}^T  p_{\mparam}(\x_{t} | \z_{t}).
\end{multline}
\begin{figure}
  \centering
\includegraphics[width=.75\linewidth]{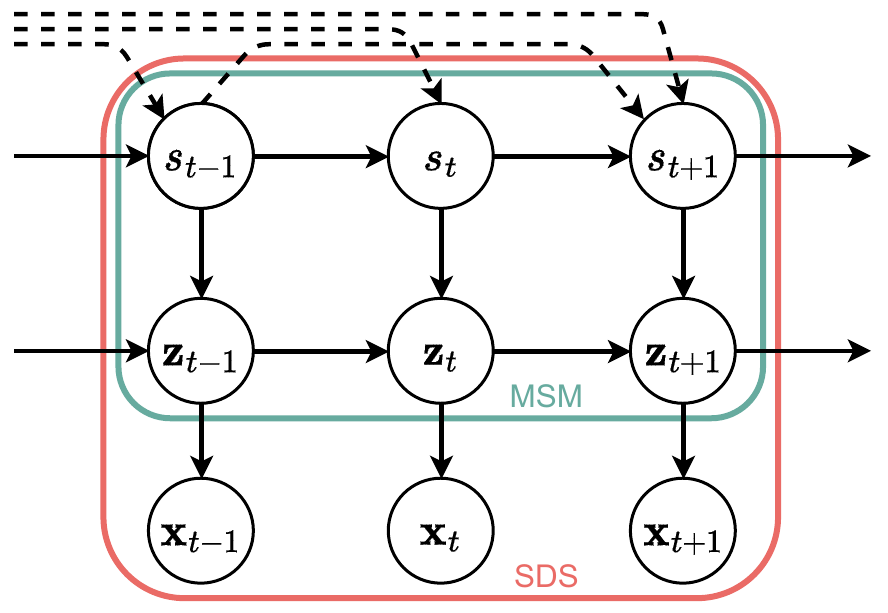}
\vspace{-.5em}
  \caption{The generative model considered in this work, where the MSM is indicated in green and the SDS is indicated in red. The dashed arrows indicate additional dependencies which are accommodated by our theoretical results.}
  \label{fig:msm_snlds}
\end{figure}
As $p_{\mparam}(\x_{1:T})$ is intractable, recent works \citep{dong2020collapsed,ansari2021deep} have developed inference techniques based on variational inference \citep{kingma2013autoencoding}. These works consider an additional dependency on the switch $s_t$ from $\x_t$ for more expressive segmentations, which is not considered in our theoretical analysis for simplicity.

For the latent dynamic prior $p_{\mparam}(\z_{1:T})$, we consider a Markov Switching Model (MSM) which has also been referred to as autoregressive HMM \citep{ephraim2005revisiting}. This type of prior uses the latent ``switch'' $s_t$ to condition the distribution of $\z_t$ at each time-step, and the conditional dynamic model of $\z_{1:T}$ given $\s_{1:T}$ follows an autoregressive process. Under first-order Markov assumption for this conditional auto-regressive processes, this leads to the following probabilistic model for the prior:
\begin{align}\label{eq:msm-likelihood}
    p_{\mparam}(\z_{1:T}) = \sum_{\s_{1:T}} p_{\mparam}(\s_{1:T}) p_{\mparam}(\z_{1:T} | \s_{1:T}),\\[-.5em]
    p_{\mparam}(\z_{1:T} | \s_{1:T}) = p_{\mparam}(\z_1 | s_1)\prod_{t=2}^T p_{\mparam}(\z_t | \z_{t-1}, s_t).
\end{align}
Note that the structure of the discrete latent state prior $p_{\mparam}(\s_{1:T})$ is not specified, and the identifiability results presented in the next section do not require further assumptions herein. As illustrated in Figure \ref{fig:msm_snlds} (solid lines), in experiments we use a first-order Markov process for $p_{\mparam}(\s_{1:T})$, described by a transition matrix $Q\in\R^{K\times K}$ such that $p_{\mparam}(s_t=j|s_{t-1}=i) = Q_{ij}$, and an initial distribution $p_{\mparam}(s_1)$. In such case we also provide identifiability guarantees for the $Q$ matrix and the initial distribution.

\section{Identifiability Theory for MSMs \& SDSs}\label{sec:theoretical}
This section establishes the identifiability of the SDS model (Eq.~(\ref{eq:sds})) with MSM latent prior (Eq.~(\ref{eq:msm-likelihood})) under suitable assumptions. We address this challenge by leveraging ideas from \citet{kivva2022identifiability} which uses finite mixture prior for static data generative models; importantly, this theory relies on the use of identifiable finite mixture priors (up to mixture component permutations). Inspired by this result, we first establish in Section \ref{sec:markov_switching_models} the identifiability of the Markov switching model $p_{\mparam}(\z_{1:T})$ as a finite mixture prior, which then allows us to extend the results of \citet{kivva2022identifiability} to the temporal setting in Section \ref{sec:switching_dynamical_systems} to prove identifiability of the SDS model. We drop the subscript $\mparam$ for simplicity.
\subsection{Identifiable Markov Switching Models}\label{sec:markov_switching_models}
The MSM $p(\z_{1:T})$ has an equivalent formulation as a finite mixture model. Suppose the discrete states satisfy $s_t \in \{1, ..., K \}$ with $K < +\infty$, then for any given $T < + \infty$, one can define a bijective \emph{path indexing function} $\varphi: \{1, ..., K^T \} \rightarrow \{1, ..., K \}^T$ such that each $i \in \{1, ..., K^T \}$ uniquely retrieves a set of states $\s_{1:T} = \varphi(i)$. Then we can use $c_i=p(\s_{1:T}=\varphi(i))$ to represent the joint probability of the states $\s_{1:T}=\varphi(i)$ under $p$. Let us further define the family of initial and transition distributions for the continuous states $\z_t$:
\begin{equation}
\Pi_{\A}:= \{p_a(\z_1)|a\in \A \}, \mathcal{P}_{\A} := \{ p_a(\z_t | \z_{t-1}) | a \in \A \}.
\end{equation}
where $\A$ is an index set satisfying mild measure-theoretic conditions (Appendix \ref{sec:finite_mixture_model_background}). Note $\mathcal{P}_{\A}$ assumes first-order Markov dynamics. 
Then, we can construct the family of first-order MSMs as
\begin{multline}
\label{eq:msm:family}
\mathcal{M}^T(\Pi_{\A},\mathcal{P}_{\A}) := \Bigg\{ \sum_{i=1}^{K^T} c_i p_{a_1^i}(\z_1) 
\prod_{t=2}^T p_{a_t^i}(\z_t | \z_{t-1}) \ | \\ \ K <+\infty, \ p_{a^i_1} \in \Pi_{\A}, \ p_{a^i_t} \in \mathcal{P}_{\A}, \ t \geq 2,\\
 a_t^i \in \A,\quad a^i_{1:T} \neq a^j_{1:T}, \forall i \neq j,\quad \sum_{i=1}^{K^T} c_i = 1 \Bigg\}.
\end{multline}
Since Eq.~(\ref{eq:msm:family}) requires $a^i_{1:T} \neq a^j_{1:T}$ for any $i \neq j$, this also builds an injective mapping $\phi(i) = a^i_{1:T}$ from $i \in \{1, ..., K^T\}$ to $\mathcal{A}$. Combined with the path indexing function, this establishes an injective mapping $\phi \circ \varphi^{-1}$ to uniquely map a set of states $\s_{1:T}$ to the $a_{1:T}$ indices, and we can view $p_{a_1^i}(\z_1)$ and $p_{a_t^i}(\z_t |\z_{t-1})$ as equivalent notations of $p(\z_1 | s_1)$ and $p(\z_t |\z_{t-1}, s_t)$ respectively for $\s_{1:T} = \varphi(i)$. This notation shows that the MSM extends finite mixture models to temporal settings as a finite mixture of $K^T$ trajectories composed by 
distributions in $\Pi_{\A}$ and $\mathcal{P}_{\A}$.

Having established the finite mixture model view of Markov switching models, we will use this notation of MSM in the rest of Section 3.1, as we will use finite mixture modelling techniques to establish its identifiability. 
In detail, we first define the identification of $\mathcal{M}^T(\Pi_{\A},\mathcal{P}_{\A})$ as follows. 
\begin{definition}
\label{def:identifiability_cond_transition}
The family of first-order MSMs $\mathcal{M}^T(\Pi_{\A},\mathcal{P}_{\A})$ is said to be identifiable up to permutations, when for $p_1, p_2 \in \mathcal{M}^T(\Pi_{\A},\mathcal{P}_{\A})$ with
\vspace{-1.2em} 
\begin{itemize}
  \setlength{\itemsep}{1pt}
  \setlength{\parskip}{0pt}
  \setlength{\parsep}{0pt}
    \item[] $p_1(\z_{1:T}) = \sum_{i=1}^{K^T} c_i p_{a_1^i}(\z_1)\prod_{t=2}^T p_{a_t^i}(\z_t | \z_{t-1}),$ 
    \item[] $p_2(\z_{1:T}) = \sum_{i=1}^{\hat{K}^T} \hat{c}_i p_{\hat{a}_1^i}(\z_1)\prod_{t=2}^T p_{\hat{a}_t^i}(\z_t | \z_{t-1})$, 
\end{itemize}
\vspace{-1.2em} 
$p_1(\z_{1:T}) = p_2(\z_{1:T}), \forall\z_{1:T} \in \mathbb{R}^{Tm}$, iff $K = \hat{K}$ and for each $1 \leq i \leq K^T$ there is some $1 \leq j \leq \hat{K}^T$ s.t.
\vspace{-1.2em} 
\begin{itemize}
  \setlength{\itemsep}{1pt}
  \setlength{\parskip}{0pt}
  \setlength{\parsep}{0pt}
    \item[1.] $c_i = \hat{c}_j$;
    \item[2.] if $a^i_{t_1} = a^i_{t_2}$ for $t_1,t_2\geq 2$ and $t_1 \neq t_2$, then $\hat{a}^j_{t_1} = \hat{a}^j_{t_2}$;
    \item[3.] $p_{a_t^i}(\z_t | \z_{t-1}) = p_{\hat{a}_t^j}(\z_t | \z_{t-1}), \forall t\geq 2$, $\z_t\in \mathbb{R}^m$;
    \item[4.] $p_{a_1^i}(\z_1) = p_{\hat{a}_1^j}(\z_1)$, $\forall\z_1 \in \mathbb{R}^m$.
\end{itemize}
\vspace{-1em} 
\end{definition}
The 2nd requirement eliminates the permutation equivalence of e.g., $\s_{1:4} = (1, 2, 3, 2)$; $\hat{\s}_{1:4} = (3, 1, 2, 3)$ which would be valid in the finite mixture case with vector indexing.

For the purpose of building deep generative models, we seek to define identifiable parametric families and defer the study of the non-parametric case in Appendix \ref{app:identifiability_msm}. In particular we use a non-linear Gaussian transition family as follows:
\begin{multline}
\label{eq:gaussian_transition_family}
    \mathcal{G}_{\A} = \{ p_a(\z_t | \z_{t-1}) = \mathcal{N}(\z_t; \bm{m}(\z_{t-1}, a), \\
    \bm{\Sigma}(\z_{t-1}, a)) \ | \ a \in \A, \z_t, \z_{t-1} \in \mathbb{R}^m \},
\end{multline}
where $\bm{m}(\z_{t-1}, a)$ and $\bm{\Sigma}(\z_{t-1}, a)$ are non-linear w.r.t.~$\z_{t-1}$ and denote the mean and covariance matrix, respectively. We further require the \emph{unique indexing} assumption:
\begin{multline}
\label{eq:unique_indexing_conditional_gaussian}
    \forall a \neq a' \in \A, \ \exists \z_{t-1} \in \mathbb{R}^d, \ s.t. \ \bm{m}(\z_{t-1}, a) \\
    \neq \bm{m}(\z_{t-1}, a') \ \text{or} \ \bm{\Sigma}(\z_{t-1}, a) \neq \bm{\Sigma}(\z_{t-1}, a').
\end{multline}
In other words, for such $\z_{t-1}$, $p_a(\z_t | \z_{t-1})$ and $p_{a'}(\z_t | \z_{t-1})$ are two different Gaussian distributions. We also introduce a family of \emph{initial distributions} with unique indexing:
\begin{equation}
\label{eq:gaussian_initial_family}
    \mathcal{I}_{\A} := \{p_a(\z_1) = \mathcal{N}(\z_1; \bm{\mu}(a), \bm{\Sigma}_1(a)) \ | \ a \in \A \},
\end{equation}
\begin{equation}
\label{eq:unique_indexing_marginal_gaussian}
    a \neq a' \in \A \Leftrightarrow \bm{\mu}(a) \neq \bm{\mu}(a') \text{ or } \bm{\Sigma}_1(a) \neq \bm{\Sigma}_1(a').
\end{equation}
The above Gaussian families paired with unique indexing assumptions satisfy conditions which favour identifiability of first-order MSMs under non-linear Gaussian transitions.
\begin{theorem}
\label{thm:identifiability_msm}
Define the following first-order Markov switching model family under the non-linear Gaussian families, $\mathcal{M}_{NL}^T = \mathcal{M}^T(\mathcal{I}_{\A},\mathcal{G}_{\A})$ with $\mathcal{G}_{\A}$, $\mathcal{I}_{\A}$ defined by Eqs. (\ref{eq:gaussian_transition_family}), (\ref{eq:gaussian_initial_family})  respectively.
Then, the MSM is identifiable in terms of Def. \ref{def:identifiability_cond_transition} under the following assumptions:
\vspace{-1em} 
\begin{itemize}
  \setlength{\itemsep}{1pt}
  \setlength{\parskip}{0pt}
  \setlength{\parsep}{0pt}
    \item[(a1)] Unique indexing for $\mathcal{G}_{\A}$ and $\mathcal{I}_{\A}$: Eqs.~(\ref{eq:unique_indexing_conditional_gaussian}), ~(\ref{eq:unique_indexing_marginal_gaussian}) hold;
    \item[(a2)] For any $a\in \A$, the mean and covariance in $\mathcal{G}_{\A}$, $\bm{m}(\cdot,a):\mathbb{R}^m\rightarrow\mathbb{R}^m$ and $\bm{\Sigma}(\cdot, a):\mathbb{R}^m\rightarrow\mathbb{R}^{m\times m}$, are analytic functions.
\end{itemize}
\vspace{-1em} 
\end{theorem}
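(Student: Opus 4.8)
The plan is to recast identifiability of $\mathcal{M}_{NL}^T$ as the uniqueness of a finite mixture representation. Writing each trajectory density as $q_{a_{1:T}}(\z_{1:T}) = p_{a_1}(\z_1)\prod_{t=2}^T p_{a_t}(\z_t\mid\z_{t-1})$ with factors drawn from $\mathcal{I}_{\A}$ and $\mathcal{G}_{\A}$, every element of $\mathcal{M}_{NL}^T$ is a finite convex combination $\sum_i c_i\, q_{a^i_{1:T}}$ over distinct paths $a^i_{1:T}$. If I can show that any finite set of distinct trajectory densities is linearly independent as functions on $\R^{Tm}$, then the standard finite-mixture identifiability argument (Appendix \ref{sec:finite_mixture_model_background}, in the spirit of Yakowitz--Spragins) applies: equating $p_1=p_2$ and collecting terms over the union of the two path sets forces every coefficient to vanish, giving a weight-preserving bijection between the positive-weight components. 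This bijection is exactly the permutation in Def. \ref{def:identifiability_cond_transition} and yields $c_i=\hat c_j$ (condition 1), $p_{a^i_t}(\z_t\mid\z_{t-1})=p_{\hat a^j_t}(\z_t\mid\z_{t-1})$ (condition 3), $p_{a^i_1}(\z_1)=p_{\hat a^j_1}(\z_1)$ (condition 4), and $K=\hat K$ by counting matched components. Condition 2 is then free: if $a^i_{t_1}=a^i_{t_2}$ the two transitions coincide, so by condition 3 the matched transitions $p_{\hat a^j_{t_1}}$ and $p_{\hat a^j_{t_2}}$ coincide, and unique indexing (Eq.~(\ref{eq:unique_indexing_conditional_gaussian})) forces $\hat a^j_{t_1}=\hat a^j_{t_2}$.

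The whole theorem thus reduces to the key lemma that distinct trajectory densities are linearly independent, which I would prove by induction on $T$. For the base case $T=1$, $q_{a_1}=\mathcal{N}(\cdot\,;\bm{\mu}(a_1),\bm{\Sigma}_1(a_1))$, and distinct indices give genuinely distinct Gaussians by the two-sided unique indexing of $\mathcal{I}_{\A}$ (Eq.~(\ref{eq:unique_indexing_marginal_gaussian})); since distinct non-degenerate Gaussian densities are classically linearly independent, the claim holds.

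For the inductive step, suppose $\sum_k \beta_k\, q_{b^k}(\z_{1:T-1})\,p_{a_T^k}(\z_T\mid\z_{T-1})=0$ for all $\z_{1:T}$, where $b^k=a^k_{1:T-1}$ are the prefixes and $a_T^k$ the last indices. Fixing $\z_{1:T-1}$ and viewing the identity as a function of $\z_T$, it is a linear combination of the transition Gaussians $p_{a_T^k}(\cdot\mid\z_{T-1})$; grouping indices by the value $\alpha$ of $a_T^k$ and invoking linear independence of distinct Gaussians, each group coefficient $\sum_{k:\,a_T^k=\alpha}\beta_k\,q_{b^k}(\z_{1:T-1})$ must vanish, \emph{provided} distinct $a_T$-values index distinct Gaussians at that $\z_{T-1}$. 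This proviso is the crux and is exactly where assumption (a2) enters: for $\alpha\neq\alpha'$ the analytic map $\z\mapsto(\bm{m}(\z,\alpha)-\bm{m}(\z,\alpha'),\,\bm{\Sigma}(\z,\alpha)-\bm{\Sigma}(\z,\alpha'))$ is, by unique indexing (Eq.~(\ref{eq:unique_indexing_conditional_gaussian})), not identically zero, so its zero set has Lebesgue measure zero. Taking the finite union over pairs that appear, for a.e. $\z_{T-1}$ all distinct $a_T$-values index distinct Gaussians, so $\sum_{k:\,a_T^k=\alpha}\beta_k\,q_{b^k}(\z_{1:T-1})=0$ for a.e. $\z_{T-1}$ and all $\z_{1:T-2}$; continuity of the $q_{b^k}$ in $\z_{T-1}$ upgrades this to all $\z_{1:T-1}$. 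Within each group the prefixes $b^k$ are distinct, so the inductive hypothesis forces every $\beta_k=0$, completing the induction.

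The main obstacle is precisely this separation of the final time step: at an individual $\z_{T-1}$ two distinct regimes may parametrise the \emph{same} conditional Gaussian, so the $\z_T$-dependence cannot be isolated pointwise. Analyticity (a2) is what rescues the argument, converting the qualitative ``differ somewhere'' of unique indexing into the quantitative ``differ almost everywhere'' needed to disentangle each transition component, with the only remaining care being the measure-theoretic bookkeeping (a.e. statements upgraded to everywhere via continuity). I would relegate the finite-mixture uniqueness step and the classical linear independence of Gaussians to cited results, keeping the analytic measure-zero lemma as the technical heart of the proof.
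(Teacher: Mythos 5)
Your proposal is correct and, at the top level, follows the same two-stage strategy as the paper: view the MSM as a finite mixture of $K^T$ trajectory densities, reduce identifiability in the sense of Def.~\ref{def:identifiability_cond_transition} to linear independence of trajectory densities with distinct paths (\citet{yakowitz1968identifiability}; Theorem~\ref{thm:identifiability_conditional}), and use analyticity plus unique indexing to control where two regimes' transition Gaussians coincide. Where you genuinely diverge is in the proof of the linear-independence lemma. The paper proves an abstract product lemma for two function families under hypotheses (b1)--(b6) (Lemma~\ref{lemma:linear_independence_two_nonlinear_gaussians}), whose proof runs through a delicate combinatorial argument --- splitting the index set into coincidence classes $J_k(\bm{\beta})$, choosing a minimizer $\bm{\beta}^*$ of the class cardinality, and stabilizing the split on an $\epsilon$-ball via continuity --- then performs the induction at the non-parametric level (Theorem~\ref{thm:linear_independence_joint_non_parametric}) and only at the end verifies that analytic Gaussian families satisfy the hypotheses, with \citet{mityagin2015zero} supplying (d3)$\Rightarrow$(b4). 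You instead run the induction directly on the Gaussian trajectory densities: fix $\z_{1:T-1}$, group terms by the last index, invoke classical linear independence of distinct Gaussians off the measure-zero coincidence set (Mityagin again, applied to the analytic difference of moments), and upgrade the resulting almost-everywhere identity on the prefix coefficients to an everywhere identity by continuity in $\z_{T-1}$ before applying the inductive hypothesis to the prefixes within each group. This argument is sound --- the exceptional set depends only on $\z_{T-1}$, its complement is dense, and within a group the prefixes are necessarily distinct since the full paths are --- and it is shorter and self-contained precisely because it exploits Gaussianity and analyticity inside the induction rather than through an abstract interface. What you lose relative to the paper is generality: the (b1)--(b6) abstraction is what yields the paper's separate non-parametric identifiability result (Appendix~\ref{app:identifiability_msm}), where no analyticity is available and coincidences must be handled by the continuity/splitting machinery rather than by measure-zero sets. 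One step you should make explicit: the finite-mixture argument gives equality of matched \emph{products} $p_{a^i_1}(\z_1)\prod_{t\ge 2} p_{a^i_t}(\z_t|\z_{t-1})$, not of the individual factors, so conditions 3--4 of Def.~\ref{def:identifiability_cond_transition} need one more line --- either the paper's marginalization-and-division argument, or, cleaner in your setup, the remark that your lemma makes distinct paths produce linearly independent (hence distinct) densities, so matched components must have \emph{identical} paths in $\A^T$, after which conditions 2--4 are immediate.
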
The proof strategy can be summarised in 4 steps.
\vspace{-1em} 
\begin{itemize}
  \setlength{\itemsep}{1pt}
  \setlength{\parskip}{0pt}
  \setlength{\parsep}{0pt}
\item[(1)] Under the finite mixture model view, it suffices to show $\{ p_{a^i_{1:T}}(\z_{1:T}) \ | \ a^i_{1:T}\in \A \times \dots \times \A \}$ contains linearly independent functions \citep{yakowitz1968identifiability}.
\item[(2)] Since $p_{a^i_{1:T}}(\z_{1:T})$ is first-order Markov, we just need to find conditions for the linear independence of $\{p_{a^i_{1:2}}(\z_{1:2}) \}$, and prove $T\geq 3$ cases by induction. 
\item[(3)] For \emph{non-parametric} $\{p_{a^i_{1:2}}(\z_{1:2}) \}$ case, we specify conditions on $\mathcal{P}_{\A} = \{p_{a}(\z_t | \z_{t-1})\}$ and $\Pi_{\A} = \{p_{a}(\z_1)\}$ to ensure linear independence of $\{p_{a^i_{1:2}}(\z_{1:2}) \}$.
\item[(4)] We show the MSM family with (non-linear) Gaussian transitions $\mathcal{M}_{NL}^T = \mathcal{M}^T(\mathcal{I}_{\A},\mathcal{G}_{\A})$ is a special case of (3) when assuming (a1 - a2).
\end{itemize}
\vspace{-1em} 
Intuitively, ``switches'' can be identified as discontinuities in the dynamic model and are fully controlled by the discrete states (assumptions a1, a2). Also assumption (a2) means $\bm{m}(\cdot, a)$ and $\bm{m}(\cdot, a')$ with $a \neq a'$ can only intercept at a set of points with zero Lebesgue measure (similarly for $\bm{\Sigma}(\cdot, a)$), and this smoothness property contributes to the identifiability for the continuous-state transitions $p_{a}(\z_t | \z_{t-1})$. 

The central part of our proof strategy involves showing linear independence in products of consecutive variables; e.g.,
\begin{equation}\label{eq:joint_first_order}
    \Pi_{\A} \otimes \mathcal{P}_{\A} \otimes \mathcal{P}_{\A} = \bigg\{ p_{a_1}(\textcolor{purple}{\z_{1}})p_{a_2}(\textcolor{teal}{\z_{2}}|\textcolor{purple}{\z_{1}})p_{a_3}(\z_3|\textcolor{teal}{\z_{2}})\bigg\},
\end{equation}
where the overlapping variable (indicated with different colours) challenges linear independence for any consecutive product of linearly independent functions. Figure \ref{fig:intuition_lemma_b3} illustrates the intuition behind our main linear independence result, Lemma \ref{lemma:linear_independence_two_nonlinear_gaussians}, showing that linear independence of the product function can be achieved, as long as for the function family of each component, linear dependence only happens in zero-measure sets. Our result only requires the above to hold in at least a non-zero measured set of the overlapping variable space, and it connects to non-linear Gaussians via assumption (d3) in Theorem \ref{thm:linear_independence_nonlinear_gaussian}. However, the parametric assumption (a2) is stronger,  as it ensures the non-zero measure intersection is satisfied almost everywhere.

The following indications of Thm.~\ref{thm:identifiability_msm}, when combined, show the profound expressivity of identifiable MSMs:
\vspace{-1em} 
\begin{itemize}
  \setlength{\itemsep}{1pt}
  \setlength{\parskip}{0pt}
  \setlength{\parsep}{0pt}
    \item Assumption (a2) enables parameterising $p_a(\z_t | \z_{t-1})$ with e.g., polynomials and neural networks (NNs) with analytic activations (e.g. SoftPlus). For NNs, identifiability applies to the functional form only, since NN weights do not uniquely index NN functions. 
    \item The identifiability result holds independently of the choice of $p(\s_{1:T})$, allowing e.g., non-stationarity and higher-order Markov dependencies. 
\end{itemize} 
\vspace{-.5em} 

Our experiments consider $p(\s_{1:T})$ to be stationary and first-order Markov, where its state transitions are also identifiable (Appendix \ref{app:msm_properties}). We leave other cases to future work. 
    %

\paragraph{Remark.}
Key to the full proof is the establishment of step (3), which is proved via a new theoretical result on linear independence (Lemma \ref{lemma:linear_independence_two_nonlinear_gaussians}), and it can be of independent interest. Unlike Hidden Markov Models (HMMs), in MSMs $\z_t$ and $\z_{t+1}$ are \emph{not} conditionally independent given the discrete states. Therefore, seminal HMM identifiability results \citep{allman2009identifiability,gassiat2016inference}, which exploit conditional independence of $\z_t$ and $\z_{t+1}$ in HMMs and leverage Kruskal's ``3-way arrays'' technique \citep{kruskal1977three}, do not apply to non-parametric MSMs, rendering our development of new theory (Lemma \ref{lemma:linear_independence_two_nonlinear_gaussians}) necessary.

\begin{figure}
    \centering
    \includegraphics[width=\linewidth]{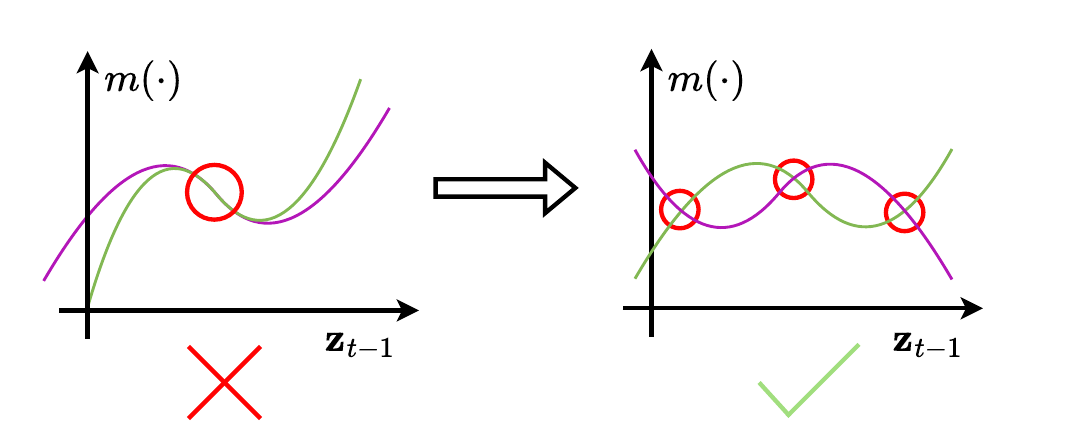}
    \vspace{-2em}
    \caption{Illustration of the intuition behind Lemma \ref{lemma:linear_independence_two_nonlinear_gaussians}, where linear independence holds if, for any pair of functions (shown in green and purple),  the intersection in the domain of the conditioned variable ($\z_{t-1}$) is zero-measured.}
    \label{fig:intuition_lemma_b3}
    \vspace{-2em}
\end{figure}

\subsection{Identifiable Switching Dynamical Systems}\label{sec:switching_dynamical_systems}
Based on identifiable MSMs, our analysis of Switching Dynamical Systems extends 
the setup from \citet{kivva2022identifiability} (Section \ref{sec:kivva_background}) to the temporal case. Assume the prior dynamic model $p(\z_{1:T})$ belongs to the non-linear Gaussian MSM family $\mathcal{M}^T_{NL}$ specified by Thm.~\ref{thm:identifiability_msm}. At each time step $t$, the latent $\z_t\in\R^m$ generates observed data $\x_t\in\R^n$ via a piece-wise linear transformation $f$. The generation process of such SDS model can be expressed as follows:
\begin{align}\label{eq:generative_model_sns}
\x_t = f(\z_t) + \bm{\epsilon}_t, \quad \bm{\epsilon}_t \sim \mathcal{N}(\mathbf{0},\bm{\Sigma})\\
\z_{1:T}\sim p(\z_{1:T})\in \mathcal{M}^T_{NL}.
\end{align}
%
Note that we can also write the decoding process as $\x_{1:T} = \mathcal{F}(\z_{1:T}) + \mathcal{E}$ with $\mathcal{E} = (\bm{\epsilon}_1, ..., \bm{\epsilon}_T)^\top$, where $\mathcal{F}$ is a factored transformation composed by $f$ as defined below. Importantly, this notion allows $\mathcal{F}$ to inherit e.g., piece-wise linearity and (weakly) injectivity properties from $f$.  
\begin{definition}
    We say that a function $\mathcal{F}:\R^{mT}\rightarrow\R^{nT}$ is factored if it is composed by $f:\R^m\rightarrow\R^n$, such that for any $\z_{1:T}\in\R^{mT}$, $\mathcal{F}(\z_{1:T}) = \left(f(\z_1), \dots, f(\z_T) \right)^\top$.
\end{definition}

The identifiability analysis of the SDS model (Eq.~(\ref{eq:generative_model_sns})) follows two steps (see Figure \ref{fig:factored_trans}). 
(i) Following \citet{kivva2022identifiability}, we establish the identifiability for a prior $p(\z_{1:T}) \in\mathcal{M}^T_{NL}$ from $(\mathcal{F}_{\#}p)(\x_{1:T})$ in the noiseless case. Here $\mathcal{F}_{\#}p$ denotes the pushforward measure of $p$ by $\mathcal{F}$. (ii) When the noise $\mathcal{E}$ distribution is known, $\mathcal{F}_{\#}p$ is identifiable from $\left(\mathcal{F}_{\#}p\right)*\mathcal{E}$ using convolution tricks from \citet{khemakhem2020variational}.
In detail, we first define the notion of identifiability for $p(\z_{1:T}) \in \mathcal{M}^T(\Pi_{\A},\mathcal{P}_{\A})$ given noiseless observations.

\begin{definition}
\label{def:identifiability_affine_transformations}
Given a family of factored transformations $\mathbb{F}$, for $\mathcal{F}\in\mathbb{F}$ the prior $p \in  \mathcal{M}^T(\Pi_{\A},\mathcal{P}_{\A})$ is said to be identifiable up to affine transformations, when for any $\mathcal{F}'\in\mathbb{F}$ and $p'\in\mathcal{M}^T(\Pi_{\A},\mathcal{P}_{\A})$ s.t. $\mathcal{F}_{\#}p = \mathcal{F}'_{\#}p'$, there exists an invertible factored affine mapping $\mathcal{H}: \R^{mT} \rightarrow \R^{mT}$ composed by $h:\R^{m} \rightarrow \R^{m}$, where $p \equiv \mathcal{H}_{\#}p'$.
\end{definition}

\begin{figure}
  \centering
\includegraphics[width=0.85\linewidth]{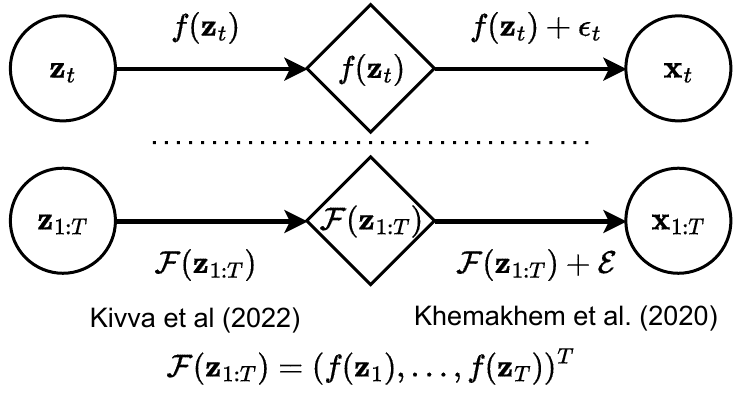}
\vspace{-0.65em}
  \caption{We assume $\z_t$ is transformed via $f$ with noise $\bm{\epsilon}_t$ at each time-step $t$ independently. We view this as a transformation on $\z_{1:T}$ via a factored $\mathcal{F}$ with noise $\mathcal{E}$.}
  \label{fig:factored_trans}
  \vspace{-.75em}
\end{figure}

For the factored $\mathcal{F}$, \emph{identifiability up to affine transformations} extends from Def. 2.2.1 in \citet{kivva2022identifiability}, which is defined on $f$. I.e., for factored mappings $\mathcal{F}$, $\mathcal{F}'$ composed by $f$, $f'$, respectively, if $f=(f'\circ h)$, where $h:\R^{m} \rightarrow \R^{m}$ is an invertible affine transformation, then $\mathcal{F}=(\mathcal{F}' \circ \mathcal{H})$ with $\mathcal{H}$ a factored mapping composed by $h$.
Now we can state the following identifiability result for (non-linear) SDS (proof in Appendix \ref{app:sds}). 
\begin{theorem}\label{thm:identifiability_sds}
    Assume the observations $\bm{x}_{1:T}$ are generated from a latent variable model following Eq. (\ref{eq:generative_model_sns}), whose prior $p(\z_{1:T})$ belongs to $\mathcal{M}_{NL}^T = \mathcal{M}^T(\mathcal{I}_{\A},\mathcal{G}_{\A})$ and satisfies (a1 - a2), and $f$ is a piece-wise linear mapping. Then:
    \vspace{-1em} 
    \begin{thmlist}
        \item If $f$, which composes $\mathcal{F}$, is weakly injective (Def. \ref{def:weakly_injective}), the prior $p(\z_{1:T})$ is identifiable up to affine transformations as defined in Def. \ref{def:identifiability_affine_transformations}.\label{thm:identifiability_sds:i}
        \item If $f$, which composes $\mathcal{F}$, is continuous and injective (Def. \ref{def:injective}), both prior $p(\z_{1:T})$ and f are identifiable up to affine transformations (Def. \ref{def:identifiability_affine_transformations}).\label{thm:identifiability_sds:ii}
    \end{thmlist}
    \vspace{-1em} 
\end{theorem}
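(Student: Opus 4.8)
The plan is to follow the two-stage decomposition advertised after Definition \ref{def:identifiability_affine_transformations}: first strip off the observation noise to reduce to a noiseless pushforward-matching problem, and then lift the static identifiability theory of \citet{kivva2022identifiability} to the temporal MSM mixture, invoking Theorem \ref{thm:identifiability_msm} precisely where the static argument invokes identifiability of Gaussian mixtures. Throughout, let $p, p' \in \mathcal{M}_{NL}^T$ and $\mathcal{F},\mathcal{F}'$ be factored by $f,f'$, and suppose the two models induce the same observed law.

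\textbf{Noise removal.} With the noise distribution known, equality of observed laws reads $(\mathcal{F}_{\#}p)*\mathcal{N}(\bm{0},\bm{\Sigma}_T) = (\mathcal{F}'_{\#}p')*\mathcal{N}(\bm{0},\bm{\Sigma}_T)$ on $\R^{nT}$, where $\bm{\Sigma}_T$ is the block-diagonal noise covariance. Passing to characteristic functions and using that a nondegenerate Gaussian characteristic function is everywhere nonzero, the convolution operator is injective on finite measures (the deconvolution trick of \citet{khemakhem2020variational}), so $\mathcal{F}_{\#}p = \mathcal{F}'_{\#}p'$ as measures. This is the routine step; it leaves the noiseless problem of identifying $p$ (and $f$) from $\mathcal{F}_{\#}p$.

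\textbf{Noiseless identifiability of the prior (claim \ref{thm:identifiability_sds:i}).} Because $\mathcal{F}$ is factored, it inherits from $f$ both piecewise-linearity and weak injectivity (Def. \ref{def:weakly_injective}): on any product region where each copy of $f$ is injective and affine, $\mathcal{F}$ is injective and affine with block-diagonal linear part $\mathrm{diag}(A,\dots,A)$. Crucially, $p(\z_{1:T})$ is a \emph{finite mixture of real-analytic densities} — each path component is a product of Gaussians whose means and covariances are analytic in the conditioning variable by (a2) and remain positive-definite — hence $p$ is globally real-analytic on $\R^{mT}$. I would then re-derive Kivva's weakly-injective argument with two substitutions: analyticity of the MSM components replaces analyticity of Gaussians in the analytic-continuation steps that discard the measure-zero non-injective set and glue across linear pieces of $\mathcal{F}$; and Theorem \ref{thm:identifiability_msm} (identifiability of $\mathcal{M}_{NL}^T$ up to permutation) replaces identifiability of Gaussian mixtures when matching the components of $p$ and $p'$. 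This produces an invertible affine $\mathcal{H}_0:\R^{mT}\to\R^{mT}$ with $p = (\mathcal{H}_0)_{\#}p'$.

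\textbf{Factoredness and claim \ref{thm:identifiability_sds:ii}.} It then remains to show $\mathcal{H}_0$ is \emph{factored}, i.e. block-diagonal with a single repeated block $h$, as Definition \ref{def:identifiability_affine_transformations} demands. I would pin down the block $h$ from the first-time marginal $p(\z_1)$, which is a genuine finite Gaussian mixture, so the static theorem directly yields an affine $h$ with $p(\z_1) = h_{\#}p'(\z_1)$; then use that $\mathcal{H}_0$ is assembled from block-diagonal linear pieces $\mathrm{diag}(A,\dots,A)$ (from the factored $\mathcal{F},\mathcal{F}'$) together with the first-order Markov transition-consistency of both MSMs to force the same block $h$ at every coordinate, giving a factored $\mathcal{H}$ with $p=\mathcal{H}_{\#}p'$. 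For claim \ref{thm:identifiability_sds:ii}, injectivity of $f$ (Def. \ref{def:injective}) makes $\mathcal{F}$ injective; combining $p=\mathcal{H}_{\#}p'$ with $\mathcal{F}_{\#}p=\mathcal{F}'_{\#}p'$ and inverting the injective pushforward gives $f = f'\circ h$ on the support, i.e. $f$ is identifiable up to affine (Rem. \ref{rem:identifiability_f}). I expect the genuine obstacle to lie in this middle part: re-deriving Kivva's Gaussian-specific lemmas (supports of affine images, separation of mixture components) from analyticity alone, and above all proving that the recovered affine map is factored rather than an arbitrary invertible map on $\R^{mT}$ — the step where the \emph{factored} form of $\mathcal{F}$ and the Markov transition structure of the MSM, not merely its finite-mixture structure, are essential.
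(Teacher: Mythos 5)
Your noise-removal step and your plan for claim \ref{thm:identifiability_sds:i} track the paper's actual proof closely: the paper also deconvolves the known Gaussian noise via \citet{khemakhem2020variational}, and its Lemma \ref{lemma:extension_msm_preimage}, Corollary \ref{cor:invertibility} and Theorem \ref{thm:identifiability-affine-msm} are precisely the ``re-derivation of Kivva with the MSM in place of the GMM'' that you outline --- generic points, pre-image counting by integrating the analytic extension of the locally pushed-forward density (using Proposition \ref{prop:msm_affine_transform}, closure of the MSM family under factored invertible affine maps), and the identity theorem for analytic functions. On factoredness your instinct is right but your mechanism is heavier than needed: in the paper, factoredness of $\mathcal{H}$ is automatic, because the local affine inverses $\mathcal{H}_{\mathcal{F}}^{-1}, \mathcal{H}_{\mathcal{G}}^{-1}$ must coincide on an open set with the inverses of the factored maps $\mathcal{F}, \mathcal{G}$, and the inverse of a repeated-block block-diagonal affine map is again repeated-block block-diagonal; hence $\mathcal{H} = \mathcal{H}_{\mathcal{G}}^{-1}\circ\mathcal{H}_{\mathcal{F}}$ is factored with no appeal to the first-time marginal or to Markov ``transition-consistency.''

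The genuine gap is in claim \ref{thm:identifiability_sds:ii}. ``Inverting the injective pushforward'' does not give $f = f'\circ h$: equality of pushforwards under two injective maps only shows that $\mathcal{S} := \mathcal{F}'^{-1}\circ\mathcal{F}\circ\mathcal{H}$ is a measure-preserving self-map of $p'$, i.e.\ $\mathcal{S}_{\#}p' = p'$, not that it is the identity. Concretely, take a symmetric MSM (zero-mean Gaussian initial distributions, odd transition means, so that $\z_{1:T}\mapsto -\z_{1:T}$ preserves the law), set $p' = p$ and $f' = f\circ r$ with $r(\z) = -\z$: both models induce the same observed law, $\mathcal{H} = \mathrm{id}$ is a legitimate output of step (i), yet $f \neq f'\circ\mathrm{id}$. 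The missing ingredient is exactly Lemma \ref{lemma:gmm_affine_id} (imported from \citet{kivva2022identifiability}): a \emph{continuous piecewise affine} map that preserves a reduced Gaussian mixture in distribution must be affine. The paper's Theorem \ref{thm:identifiability-function-sds} uses factoredness to reduce $\mathcal{S}$ (which is continuous piecewise affine, by composition) to a single repeated block acting on $\z_1$, whose law is the reduced GMM given by the MSM initial distribution, applies this rigidity lemma to conclude the block $h_2$ is affine, and only then obtains $f = f'\circ h$ with $h = h_1\circ h_2$ --- the part-(i) map composed with this extra affine self-equivalence, not the part-(i) map itself. Without this rigidity step, your argument for claim \ref{thm:identifiability_sds:ii} does not go through.
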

Thm.~\ref{thm:identifiability_sds} allows identifiable SDSs to employ e.g., SoftPlus networks for $p_{\mparam}(\z_t|\z_{t-1},s_t)$, and certain (Leaky) ReLU networks for $f$ (see \citet{kivva2022identifiability}). Again, for neural networks identifiability refers only to their functional form.

\paragraph{Remark.}
The MSM family is closed under factored affine transformations (Prop.~\ref{prop:msm_affine_transform}) and the proved MSM identifiability result (Thm.~\ref{thm:identifiability_msm}) is up to permutation equivalence. This means for $p_1,p_2\in\mathcal{M}^T_{NL}$, one can show that $p_1=\mathcal{H}_{\#}p_2$ with an invertible factored affine transformation $\mathcal{H}$ composed by $h(\z) = A\z + \Bb, \z\in\R^m$. It indicates the following relations for $\bm{m}_i(\cdot,a)$ and $\bm{\Sigma}_i(\cdot,a)$, $i=1, 2$, with $\sigma(\cdot)$ a permutation over indices $a\in\A$: 
\begin{align}\label{eq:transition_means}
        \bm{m}_1(\z,a) = A\bm{m}_2\left(A^{-1}(\z - \Bb),\sigma(a)\right) + \Bb,\\
        \bm{\Sigma}_1(\z,a) = A\bm{\Sigma}_2\Big( 
A^{-1}\left(\z - \Bb \right) ,\sigma(a)\Big)A^T.
\end{align}

%

\subsection{On Identifiability Conditions for Sequential LVMs}\label{sec:identifiability_seq}

Key to the design of identifiable models is the specification of constraints to eliminate unwanted equivalences or symmetries. For deep sequential latent variable models, these conditions are required for (i) the temporal dependencies of the latent dynamic model $p(\z_{1:T})$ , and (ii) the emission model $p(\x_{1:T} | \z_{1:T})$ to translate the latent variables to observations. We compare our proposed identifiability conditions with existing work, with \textcolor{teal}{flexible}/\textcolor{purple}{restrictive} conditions highlighted in different colours.
\vspace{-1em}
\begin{itemize}
  \setlength{\itemsep}{1pt}
  \setlength{\parskip}{0pt}
  \setlength{\parsep}{0pt}
    \item Identifiable SDS (ours): An MSM dynamic model $p(\z_{1:T})$ with smoothness conditions on $p(\z_t | \z_{t-1}, s_t)$ but \textcolor{teal}{no restrictions on $p(s_{1:T})$}. \textcolor{purple}{The emission model is restricted to use (weakly) injective mappings.}
    \item HMM-ICA \citep{halva2020hidden}: Latent and \textcolor{purple}{stationary} HMM priors following \citet{gassiat2016inference}, which \textcolor{purple}{requires stationarity and bijective emissions}.
    \item SNICA \citep{halva2021disentangling}: Includes SDS under \textcolor{teal}{general conditions on latent temporal dependencies (\emph{weak stationarity})} but \textcolor{purple}{restricts the emission to use injective mappings}. \underline{Identifiability does not cover transitions.}
    \item IIA-HMM \citep{morioka2021independent}: Introduces a recurrent structure in the emissions with \textcolor{purple}{bijectivity of the demixing function}. Allows \textcolor{teal}{non-stationary innovations} using regime-switching sources \textcolor{purple}{ from a stationary HMM \citet{gassiat2016inference}}. 
    \item Mechanism Sparsity \citep{lachapelle2022disentanglement,lachapelle2024nonparametric}: A temporal model $p(\z_{1:T} | \bm{u}_{1:T})$ with \textcolor{purple}{auxiliary observations $\bm{u}_{1:T}$} and sparse interactions within $\z_{1:T}$. \textcolor{purple}{The emission model is restricted to use diffeomorphisms.}
\end{itemize}
\vspace{-0.75em}

Overall, often relaxed constraints for one part of the latent dynamic model need to be compensated with restrictive conditions on the other components.
The above works share a limitation on the use of at least (weakly) injective mappings for the emission model, which is a common issue in identifiable DGM research since \citet{khemakhem2020variational} and thus requires substantial future work to address. 
\section{Model Parameter Estimation}
We consider two modelling choices for $N$ sequences $\mathcal{D} = \{\x_{1:T} \}$ of length $T$: (a) the MSM model (Eq.~(\ref{eq:msm-likelihood}) with $\z_{1:T}$ replaced by $\x_{1:T}$) and (b) the SDS model with MSM latent prior (Eqs.~(\ref{eq:sds}),(\ref{eq:msm-likelihood})). Although our theory imposes no restrictions on the discrete latent state distribution, the methods are implemented with a first-order stationary Markov chain, i.e., $p_{\mparam}(\s_{1:T}) = p_{\mparam}(s_1) \prod_{t=2}^T p_{\mparam}(s_t | s_{t-1})$.
\paragraph{Markov Switching Models} We use expectation maximisation (EM) for efficient estimation of mixture models \citep{bishop2006pattern}. Below we discuss the M-step update of the transition distribution parametrised by neural networks; more details (including polynomial parametrisations) can be found in Appendix \ref{app:estimation}.
When considering analytic neural networks, we take a Generalised EM (GEM) \citep{dempster1977maximum} approach where a gradient ascent step is performed 
\begin{multline}\label{eq:update_rule}
    \mparam^{\text{new}} \leftarrow \mparam^{\text{old}} +\\[-.25em]
    \eta \sum_{t=2}^T \sum_{k=1}^K \gamma_{t,k} \nabla_{\mparam} \log p_{\mparam}(\x_t| \x_{t-1}, s_t=k),
\end{multline}
where $\gamma_{t,k} = p_{\mparam}(s_t=k|\x_{1:T})$ and the update rule can be computed using back-propagation. In the equation we indicate the update rule for a single sample, but note that we use mini-batch stochastic gradient ascent when $N$ is large. Convergence is guaranteed for this approach to a local maximum of the likelihood \citep{bengio1994input}.  
\paragraph{Switching Dynamical Systems} We adopt variational inference as presented in \citet{ansari2021deep}. The parameters are learned by maximising the evidence lower bound (ELBO) \citep{kingma2013autoencoding}, and the proposed approximate posterior over the latent variables $\{\z_{1:T}, \s_{1:T}\}$ incorporates the exact posterior of the discrete latent states given the continuous latent variables:
\begin{multline}
q_{\phi,\mparam}(\z_{1:T},\s_{1:T}|\x_{1:T}) = q_{\phi}(\z_{1}|\x_{1:T})\\
\prod_{t=2}^Tq_{\phi}(\z_{t}|\z_{1:t-1},\x_{1:T}) p_{\mparam}(\s_{1:T}|\z_{1:T}).
\end{multline}
As in \citet{ansari2021deep,dong2020collapsed}, the variational posterior over the continuous variables $q_{\phi}(\z_{1:T}|\x_{1:T})$ simulates a smoothing process by first using a bi-directional RNN on $\x_{1:T}$, and then a forward RNN on the resulting embeddings. By introducing an exact posterior, the discrete latent variables can be marginalised from the ELBO objective (see Appendix \ref{app:vi_sds} for details),
\begin{multline}
    p_{\mparam}(\x_{1:T}) \geq \mathbb{E}_{q_{\phi}(\z_{1:T}|\x_{1:T})} \Big[\log p_{\mparam}(\x_{1:T}|\z_{1:T})\Big]\\
    - KL\Big(q_{\phi}(\z_{1:T}|\x_{1:T}) || p_{\mparam}(\z_{1:T})\Big).
\end{multline}
We use Monte Carlo estimation with samples $\z_{1:T} \sim q_{\phi}(\z_{1:T}|\x_{1:T})$ and the reparametrization trick for back-propagation \citep{kingma2013autoencoding}, and jointly learn the parameters using stochastic gradient ascent on the ELBO objective. The prior $p_{\mparam}(\z_{1:T})$ can be computed exactly using the forward algorithm \citep{bishop2006pattern} with messages $\{ \alpha_{t,k}(\z_{1:t}) = p_{\mparam}(\z_{1:t},s_t=k) \}$ by marginalising out $\s_{1:T}$:\vspace{-.5em}
\begin{equation}\label{eq:alpha}
p_{\mparam}(\z_{1:T}) = \sum_{k=1}^K \alpha_{T,k}(\z_{1:T}),
\end{equation}

\vspace{-2.5em}
\begin{multline}
\alpha_{t,k}(\z_{1:t}) = \sum_{k'=1}^K p_{\mparam}(\z_t | \z_{t-1}, s_t = k) \\
\times p_{\mparam}(s_t = k | s_{t-1} = k') \alpha_{t-1,k'}(\z_{1:t-1}).
\end{multline}
An alternative approach for SDS inference is presented in \citet{dong2020collapsed}, although \citet{ansari2021deep} outlines some disadvantages (see Appendix \ref{app:vi_sds} for a discussion). Note that estimating parameters with ELBO maximisation has no consistency guarantees in general, and we leave additional analyses regarding consistency for future work.


\section{Related Work}

\begin{figure*}
    \centering
    \begin{subfigure}{0.3\textwidth}
        \centering
        \includegraphics[width=\linewidth,height=9.5em]{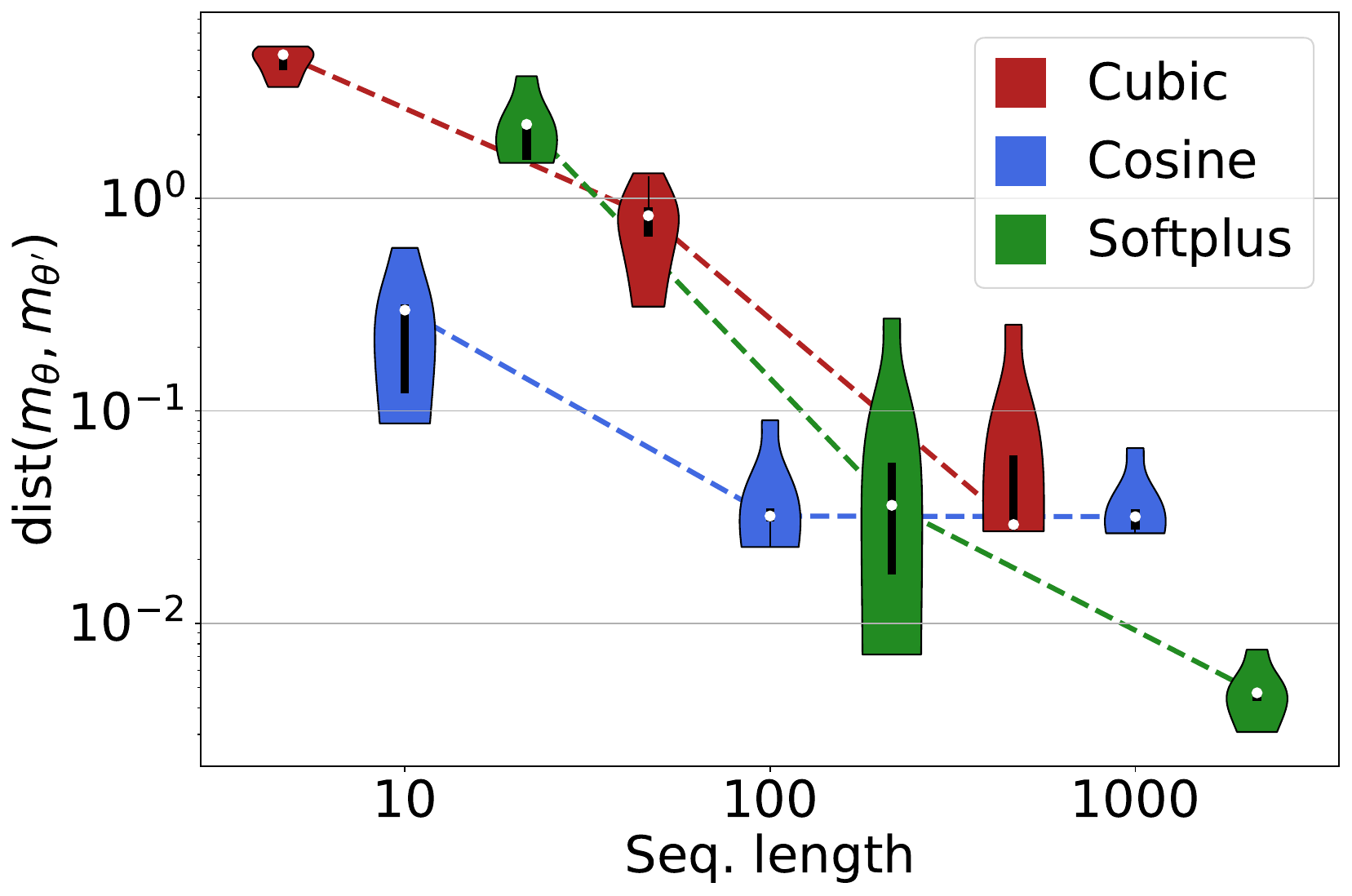}
        \vspace{-1.65em}
        \caption{Function forms}
        \label{fig:functions}
    \end{subfigure}
    \quad
    \begin{subfigure}{0.3\textwidth}
        \centering
        \includegraphics[width=\linewidth,,height=9.5em]{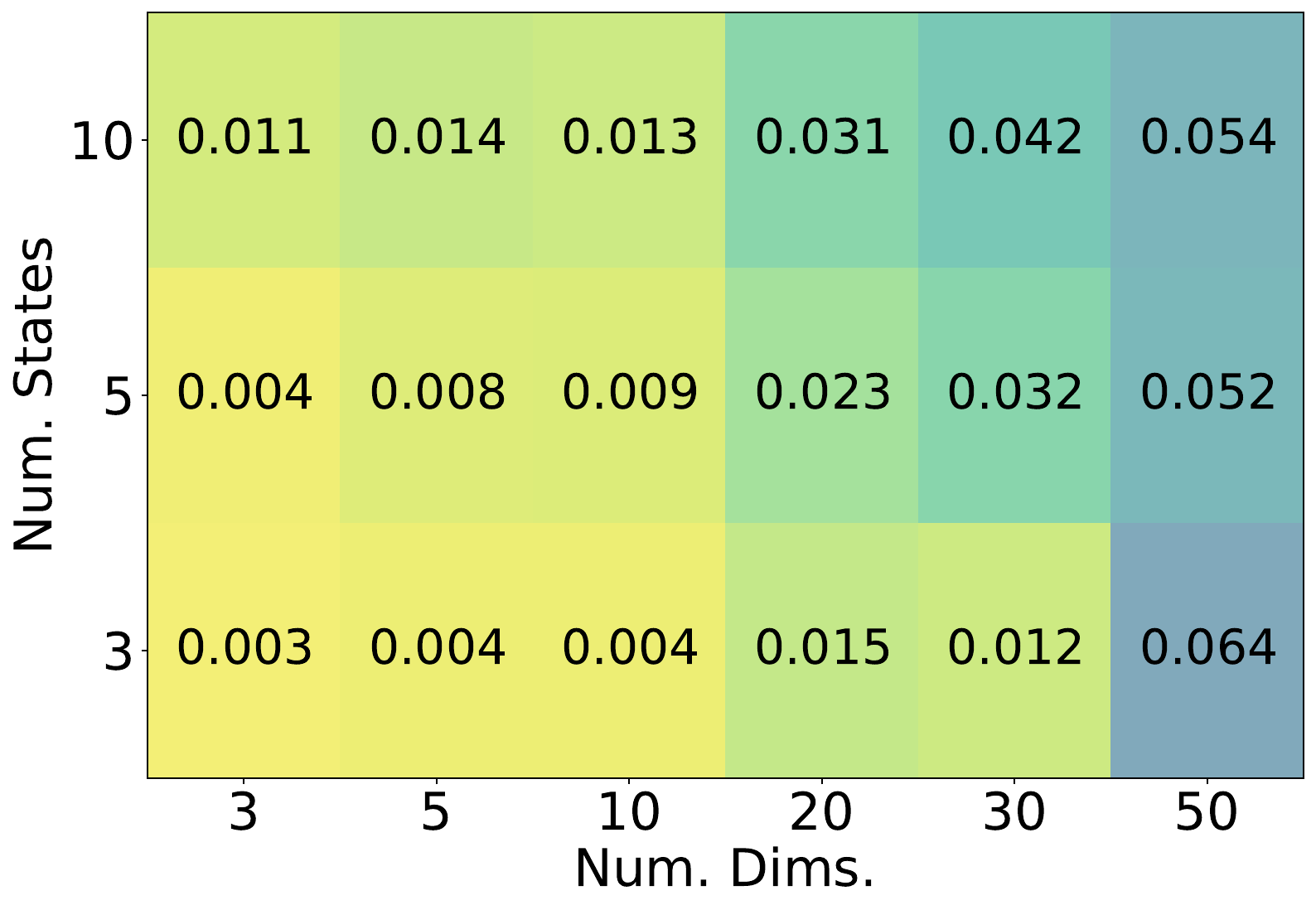}
        \vspace{-1.65em}
        \caption{$L_2$ distance}
        \label{fig:l2_dist}
    \end{subfigure}
    \quad
    \begin{subfigure}{0.3\textwidth}
        \centering
        \includegraphics[width=\linewidth,,height=9.5em]{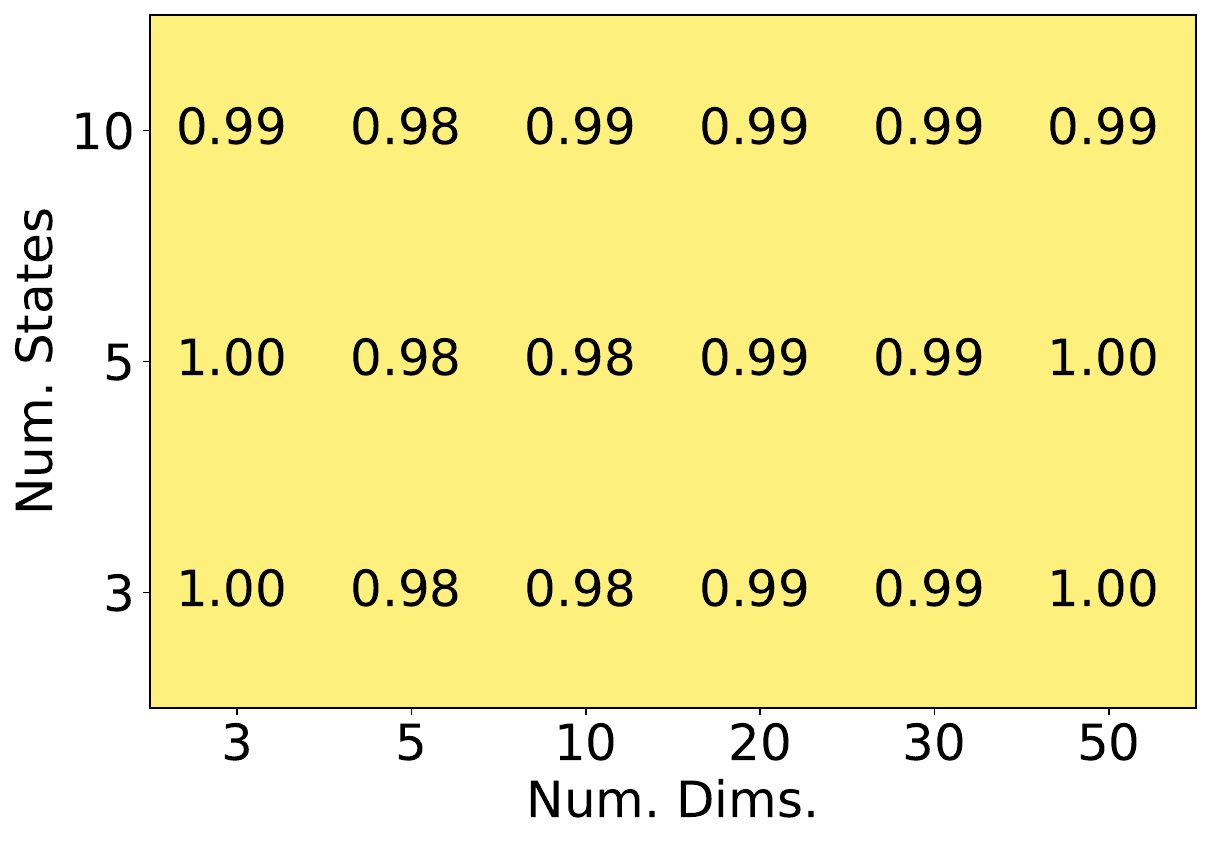}
        \vspace{-1.65em}
        \caption{Avg. $F_1$ score}
        \label{fig:f1_score}
    \end{subfigure}
    \vspace{-.65em}
    \caption{Synthetic experiment results on MSMs. (a) $L_2$ distance error using different transition functions with varying $T$. (b) $L_2$ distance error and (c) averaged $F_1$ score of non-linear data (cosine activations) with increasing states and dimensions.}
    \label{fig:results_synth}
    \vspace{-1em}
\end{figure*}

Sequential generative models based on non-linear SDSs have gained interest over the recent years. Notable works include structured VAEs \citep{johnson2016composing}, soft-switching Kalman Filters \citep{fraccaro2017disentangled}, or recurrent SDSs \citep{linderman2016recurrent,dong2020collapsed} which can include explicit duration models on the switches \citep{ansari2021deep}. However, identifiability in such highly non-linear scenarios is rarely studied. A similar situation is found for MSMs, which were first introduced by \citet{poritz1982linear} and have been studied decades ago for speech analysis \citep{poritz1982linear,ephraim2005revisiting} and economics \citep{hamilton1989new}. Although studies have been conducted on maximum likelihood estimation for some first-order and stationary MSMs and their asymptotic properties \citep{fruhwirth2006finite}, identifiability for general high-order autoregressive MSMs has not been proved. The main complication arises from the explicit dependency on the observed variables, which poses a great challenge to prove linear independence of $p(\z_{1:T}|\bm{s}_{1:T})$. To our knowledge, identifiability in MSMs has been explicitly studied only in \citet{an2013identifiability} which assumes discrete $\z_t$ states and uses the joint probability of four consecutive observations.

Regarding non-linear ICA for time series, identifiability results extend from \citet{khemakhem2020variational} where past values are used as auxiliary information \citep{hyvarinen2017nonlinear, hyvarinen2019nonlinear,klindt2021towards}.
Specifically, \citet{yao2022temporally,yao2022learning} recover temporal latent causal variables and allow non-stationarity via distribution shifts or time-dependent effects across observed regimes. Another line of work leverages intervention targets \citep{lippe2023causal} or unknown binary interactions with regime information \citep{lippe2023biscuit}. Again these identifiability results require at least injectivity for the decoder function -- see Section \ref{sec:identifiability_seq} for further discussions on this limitation.

\begin{figure*}
  \begin{minipage}{0.63\textwidth}
    \centering
    \includegraphics[width=\linewidth]{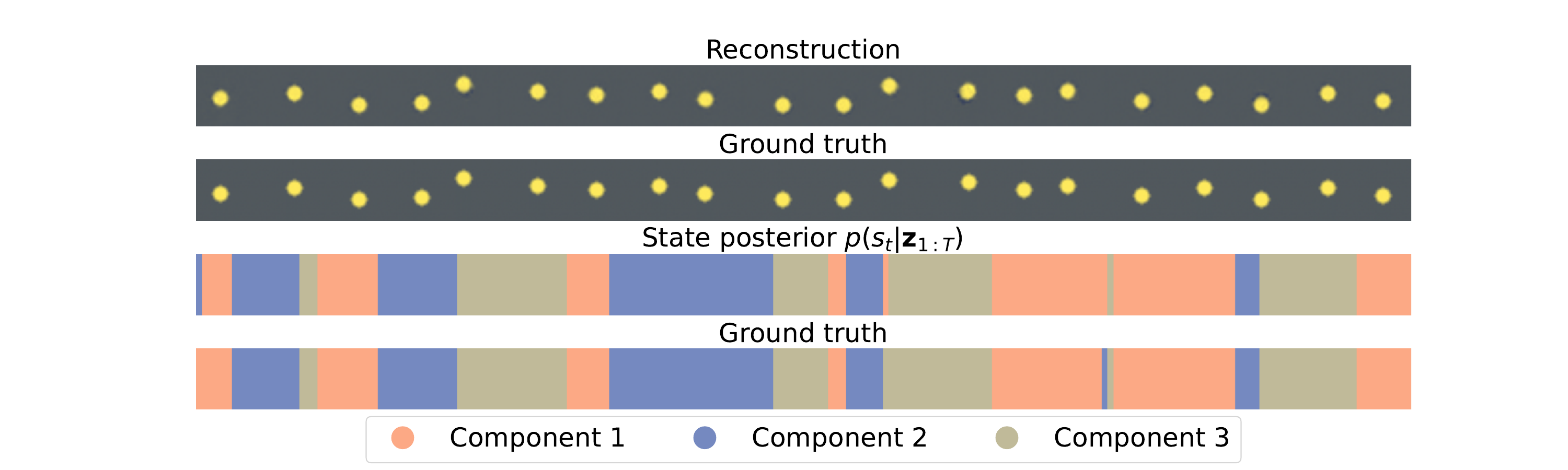}
    \vspace{-1.em}
    \caption{Reconstruction and segmentation (with ground truth) of a video generated from 2D latent variables sampled from a MSM (frame size $32\times 32$).}
    \label{fig:ball_segmentation_sds}
  \end{minipage}%
  \hfill
  \begin{minipage}{0.35\textwidth}
  \centering
    \includegraphics[width=\linewidth]{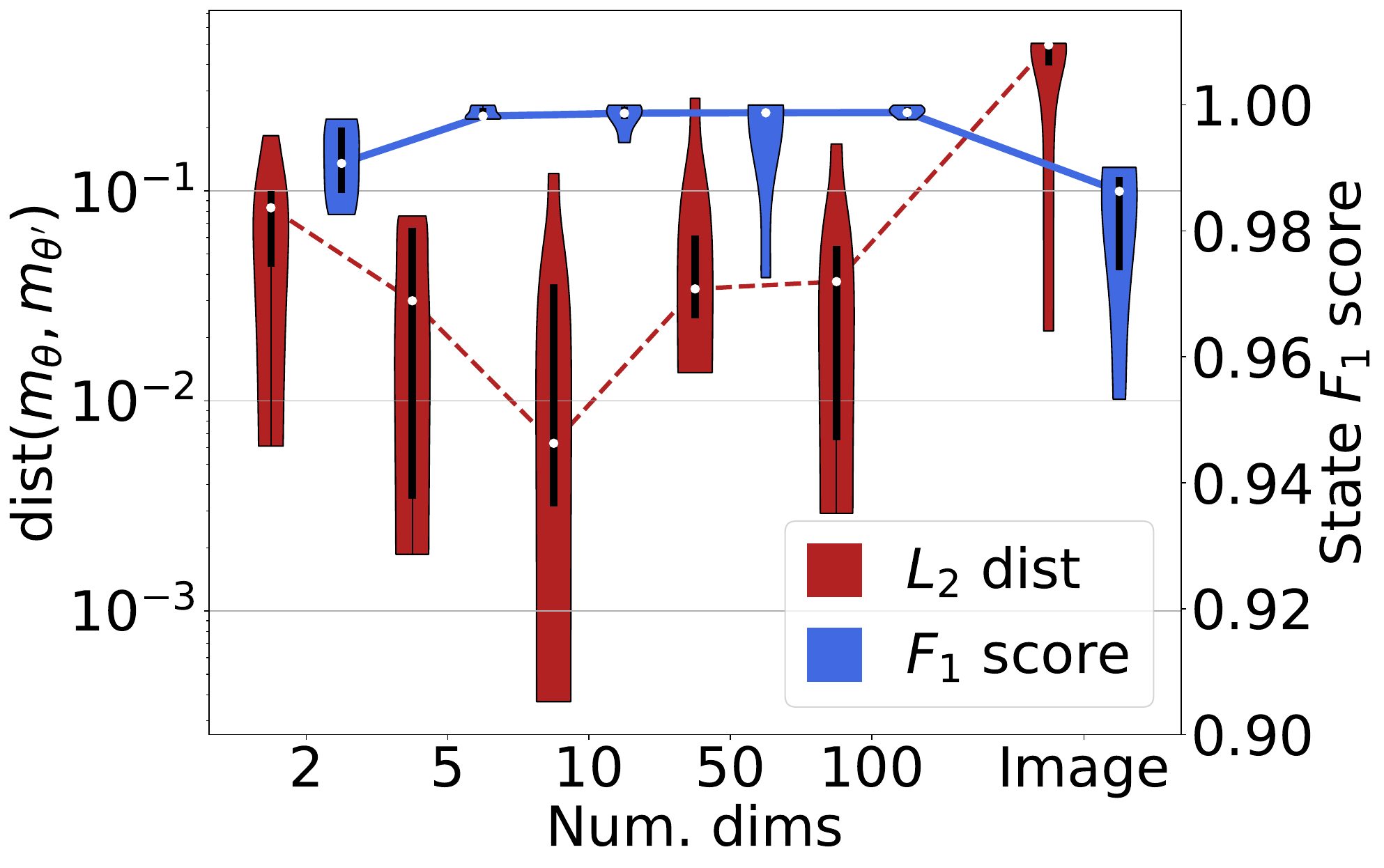}
    \vspace{-2em}
    \caption{Synthetic experiment on SDSs for increasing $\x_t$ dimension.}
    \label{fig:exp_sds}
  \end{minipage}
  \vspace{-1em}
\end{figure*}

\section{Experiments}\label{sec:experiments}

We evaluate the identifiable MSMs and SDSs with three experiments: (1) simulation studies with ground truth available for verification of the identifiability results; (2) regime-dependent causal discovery in climate data with identifiable MSMs; and (3) segmentation of high-dimensional sequences of salsa dancing using MSMs and SDSs. Additional results are also presented in Appendix \ref{app:experiment_details}. Code for inference and data generation can be found in \url{https://github.com/charlio23/identifiable-SDS}.
\vspace{-.5em}

\subsection{Synthetic Experiments}

\paragraph{MSMs}

We generate data using ground-truth MSMs and evaluate the estimated functions upon them. We parametrise the transition distribution means using random cubic polynomials or networks with cosine/SoftPlus activations; and use fixed transition covariances. When increasing dimensions, we use locally connected networks \citep{zheng2018dags} to construct regime-dependent causal structures for the grouth-truth MSMs, which encourages sparsity and stable data generation. The estimation error is computed with $L_2$ distance between grounth-truth and estimated transition mean functions. We also estimate the causal structure via thresholding the Jacobian of the estimated transition function, and compute the $F_1$ score to evaluate structure estimation accuracy. Both evaluation metrics are averaged over $K$ components after accounting for permutation equivalence.
See Appendices \ref{app:dist_comp} to \ref{app:synthetic_exp} for experimental details.

Figure \ref{fig:functions} shows increasing the sequence length generally reduces the $L_2$ estimation error. The polynomials are estimated with higher error for short sequences, which could be caused by the high-frequency components from the cubic terms. 
Meanwhile the smoothness of the softplus networks allows the MSM to achieve consistently better parameter estimates. 
Regarding scalability, Figure \ref{fig:l2_dist} shows low estimation errors when increasing the number of dimensions and components. 
For structure estimation acurracy, Figure \ref{fig:f1_score} shows that the MSM with non-linear transitions is able to maintain high $F_1$ scores, despite the differences in $L_2$ distance when increasing dimensions and states (\ref{fig:l2_dist}). Although the approach is restricted by first-order Markov assumptions, the synthetic setting shows promising directions for high-dimensional regime-dependent causal discovery.
\vspace{-1.em}
\paragraph{SDSs} We use data from 2D MSMs with cosine activations for the transition means, fixed covariances, and $K=3$ components, and a random Leaky ReLU network to generate observations (with no additive noise). For evaluation, we generate $1000$ test samples and compute the $F_1$ score of the state posterior with respect the ground truth component. We also compute the $L_2$ distance of $\bm{m}(\z,\cdot)$ using Eq. (\ref{eq:transition_means}). Again both metrics are computed after accounting for permutation and affine transform equivalence (Appendix \ref{app:dist_comp}). Results in Table \ref{fig:exp_sds} show that the method obtains high $F_1$ scores as well as a low $L_2$ distance between the estimated and ground-truth transition mean functions. 
We further apply identifiable SDSs to synthetic videos generated by treating the 2D latents as positions of a moving ball, and show a reconstruction with the corresponding segmentation in Figure \ref{fig:ball_segmentation_sds}. This high-dimensional setting increases the difficulty of accurate estimation as the reconstruction term of the ELBO out-weights the KL term for learning the latent MSM (Appendix \ref{app:training_specs}). This results in an increased $L_2$ distance from the ground truth MSM. Still, the estimated model achieves high-fidelity reconstructions (with an averaged pixel MSE of $8.89\cdot 10^{-5}$), and accurate structure estimation as indicated by the $F_1$ score.

In Table \ref{tab:mcc_baselines} we compare our identifiable SDS (iSDS) with iVAE \citep{khemakhem2020variational},  and $\Delta$-SNICA \citep{halva2021disentangling} in the nonlinear ICA domain. For iVAE, which requires auxiliary observations, we use the ground-truth discrete states to create a strong non-temporal baseline (iVAE\textbf{*}). Using $50$-dimensional test observations generated from the 2D latent MSMs, we compute the mean coefficient correlation (MCC-strong) as described in \citet{khemakhem2020variational}. Since our results achieve identifiability up to affine transformations, we also report the MCC scores after affine alignment of the latent variables (MCC-weak), as used in \citet{kivva2022identifiability}. Our approach achieves high MCC scores in both strong and weak settings. For $\Delta$-SNICA, we observe successful disentanglement despite the linear prior dynamics differing from the groundtruth MSM structure based on cosine networks. This is due to the mean-field variational inference design from \citet{johnson2016composing} and overparametrised linear SDSs, which allow flexible parametrisations but lose transition identifiability. 

\begin{table}
    \centering
    \caption{(weak) MCC scores (mean and $95$-CI on 5 datasets) of iSDS in comparison to other nonlinear ICA baselines.}
    \begin{tabular}{c|c|c}
    \toprule
     Model & MCC-strong & MCC-weak \\
    \midrule
      iVAE\textbf{*}   & $0.642 \pm 0.104$ & $0.770 \pm 0.059$ \\
      $\Delta$-SNICA & $0.940 \pm 0.041$ & $0.979 \pm 0.043$ \\
      iSDS (ours)  &$0.936 \pm 0.041$ & $0.992 \pm 0.008$\\ 
    \bottomrule
    \end{tabular}
    \vspace{-1.5em}
    \label{tab:mcc_baselines}
\end{table}

\vspace{-.35em}
\subsection{Regime-Dependent Causal Discovery}\label{sec:experiments_causal}

\begin{figure}
  \centering
    \begin{subfigure}{.45\linewidth}
      \includegraphics[width=\linewidth]{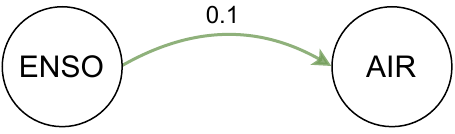}
      \vspace{-1.55em}
      \caption{linear effects}
      \label{fig:enso_graph_linear}
    \end{subfigure}
    \quad
    \begin{subfigure}{.45\linewidth}
      \includegraphics[width=\linewidth]{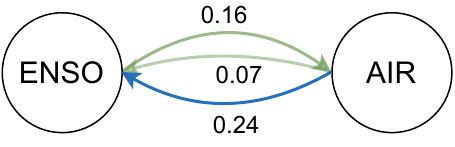}
      \vspace{-1.5em}
      \caption{non-linear effects}
      \label{fig:enso_graph_nonlinear}
    \end{subfigure}
\vspace{-.5em}
  \caption{Regime-dependent graphs generated assuming (a) linear and (b) non-linear effects. Green and blue lines indicate effects in summer and winter months respectively.}
  \label{fig:results_enso}
\end{figure}

\begin{figure*}
\begin{subfigure}{.49\linewidth}
    \centering
    \includegraphics[width=\linewidth]{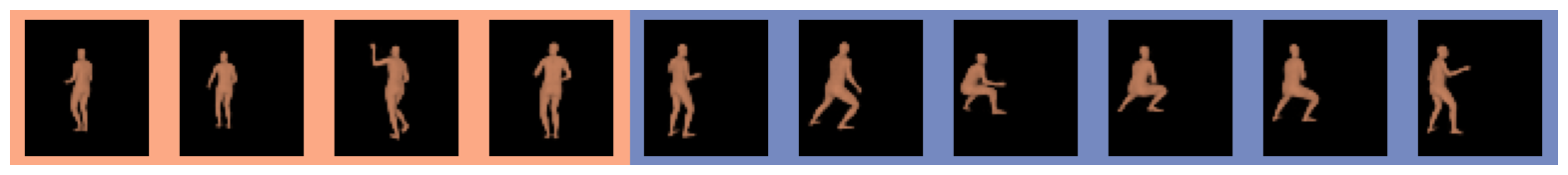}
    \includegraphics[width=\linewidth]{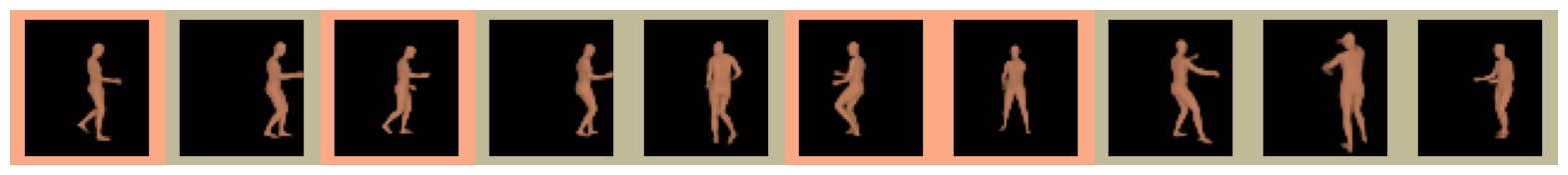}
    \includegraphics[width=\linewidth]{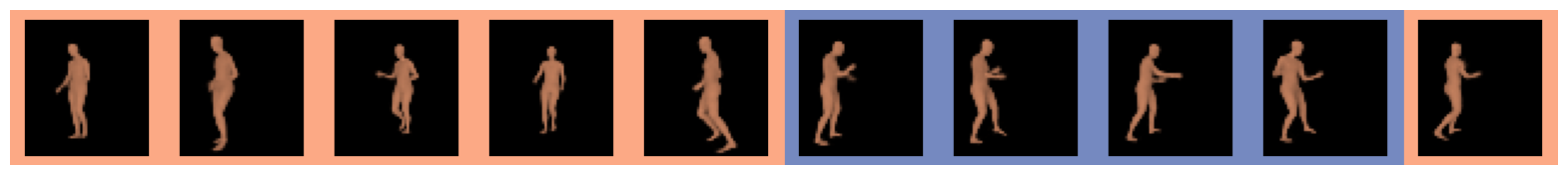}
    \vspace{-1.5em}
    \caption{Synthetic salsa videos (pixel MSE $2.26\cdot 10^{-4}$ per frame).}
    \label{fig:salsa_videos_sds}
\end{subfigure}
\begin{subfigure}{.49\linewidth}
    \centering
    \includegraphics[width=\linewidth]{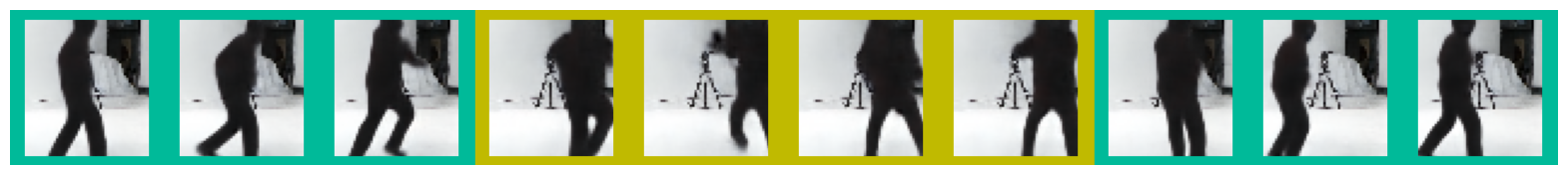}
    \includegraphics[width=\linewidth]{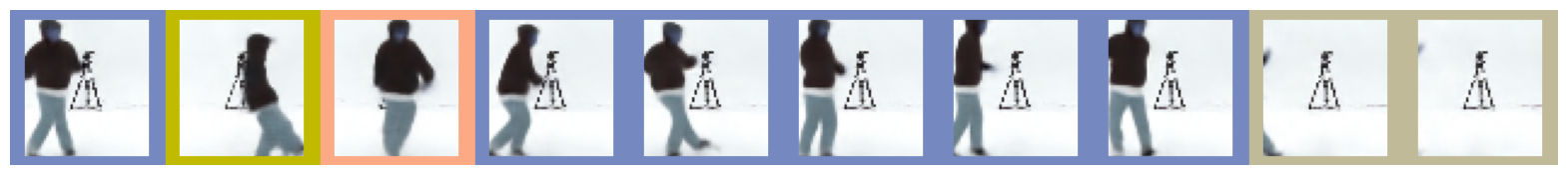}
    \includegraphics[width=\linewidth]{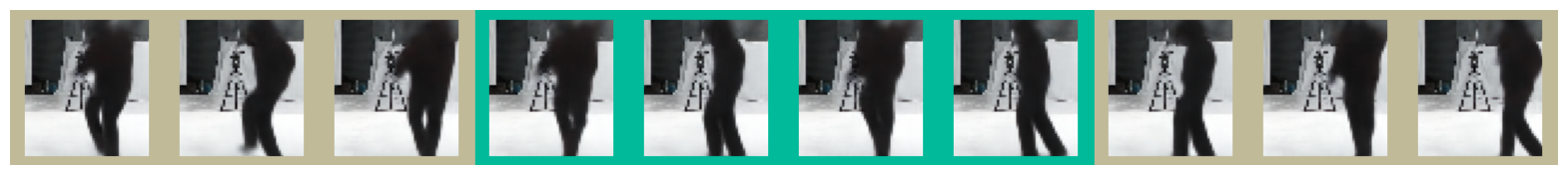}
    \vspace{-1.5em}
    \caption{AIST Hip-Hop videos (pixel MSE $7.85\cdot 10^{-3}$ per frame).}
    \label{fig:aist_videos_sds}
\end{subfigure}
    \vspace{-0.55em}
    \caption{Reconstructions and segmentations of (a) salsa dancing and (b) AIST Hip-Hop videos, where colour boxes indicate different components. The MSE results are computed on 560 and 100 test videos for salsa and Hip-Hop, respectively. }
    \vspace{-.5em}
    \label{fig:sds_videos}
\end{figure*}

We explore regime-dependent causal discovery using climate data from \citet{saggioro2020reconstructing}. The data consists on monthly observations of El Ni\~{n}o Southern Oscillation (ENSO) and All India Rainfall (AIR) from 1871 to 2016. We follow \citet{saggioro2020reconstructing} and train identifiable MSMs with linear and non-linear (softplus networks) transitions. Figure \ref{fig:results_enso} shows the regime-dependent graphs extracted from both models, where the edge weights denote the absolute value of the corresponding entries in the estimated transition function's Jacobian (we keep edges with weights $\geq 0.05$).  The MSMs capture regimes based on seasonality, as one component is assigned to summer months (May - September), and the other is assigned to winter months (October - April). In the linear case, the MSM discovers an effect from ENSO to AIR which occurs only during Summer. This suggests that ENSO has a direct effect on AIR during summer, but not in winter, which is consistent with \citet{saggioro2020reconstructing} and \citet{webster1997past}. For the non-linear case (Figure \ref{fig:enso_graph_nonlinear}), additional links are discovered which are yet to be supported by evidence in climate science. But as the flexibility of non-linear MSMs allows capturing correlations as causal effects in disguise, these additional links may imply the presence of confounders influencing both variables -- a likely result for cases with few observations.

\vspace{-.45em}
\subsection{Segmentation of Dancing Patterns}\label{sec:experiments_salsa}

\begin{figure}
    \centering
    \includegraphics[width=.9\linewidth]{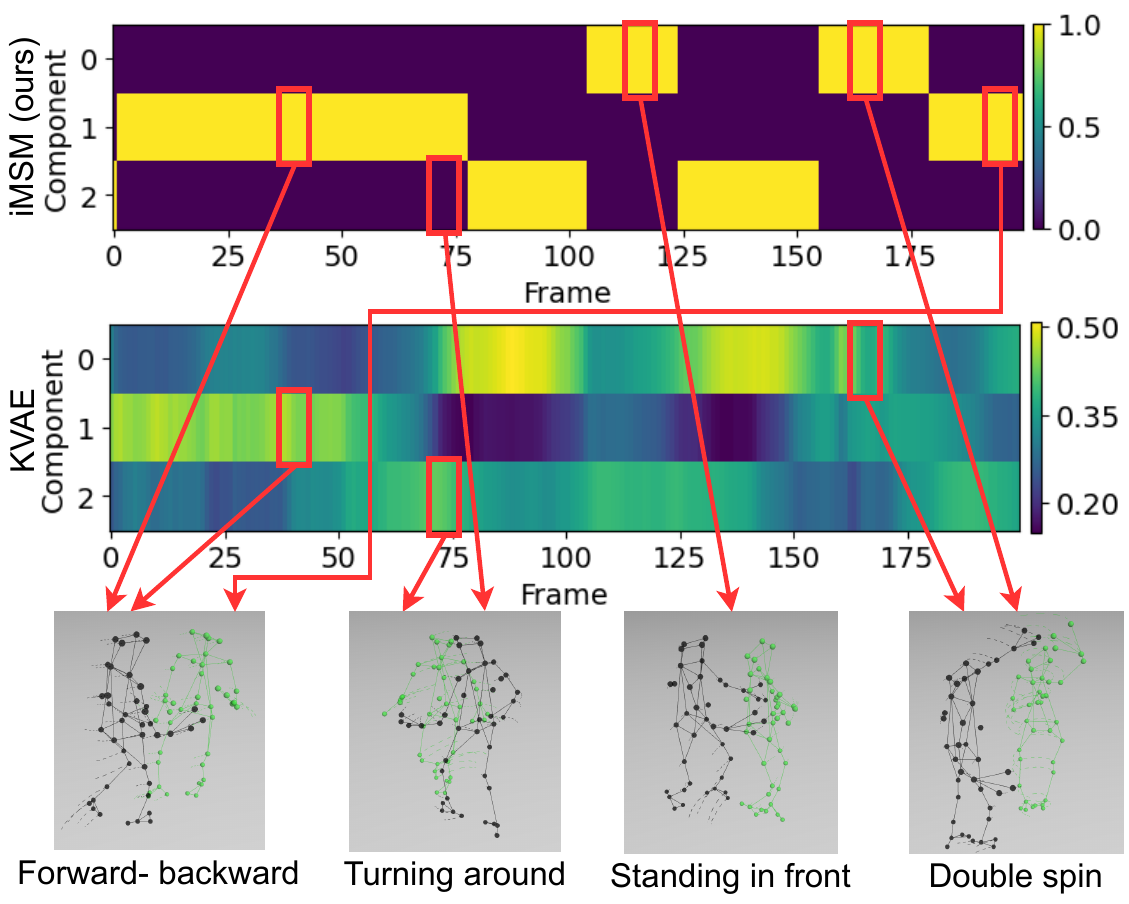}
    \vspace{-0.85em}
    \caption{Posterior probability of a \textit{salsa dancing} sequence of iMSM and KVAE \citep{fraccaro2017disentangled} along with several patterns distinguished in the example.}
    \label{fig:salsa_data}
    \vspace{-1.em}
\end{figure}

We consider salsa dancing sequences from \href{http://mocap.cs.cmu.edu/}{CMU mocap data} and Hip-Hop videos from AIST Dance DB \citep{aist-dance-db} to demonstrate our models' ability in segmenting high-dimensional time-series. See Appendix for details on the data (\ref{app:salsa_exp}), training (\ref{app:training_specs}), and additional results (\ref{app:dancing_videos}).

\paragraph{Key-point data} We present a sample using key-point data in Figure \ref{fig:salsa_data} with both the identifiable MSM (iMSM; softplus networks) and KVAE \citep{fraccaro2017disentangled}. The iMSM assigns different patterns to different states, e.g., the major pattern (forward-backward movements) is assigned to state 1. The iMSM also identifies this pattern at the end, which KVAE fails to recognise. 
Additionally, KVAE assigns a turning pattern into component 2, while iMSM treats turning as in state 1 patterns, but then jumps to component 2 consistently after observing it.  The iMSM also classifies other dancing patterns into state 0. 
For limitations, the soft-switching mechanism restricts KVAE from confident component assignments. The iMSM's first-order Markov transitions hinders learning e.g., acceleration that would provide richer features for better segmentation.


\paragraph{Dancing videos} We show reconstructions and segmentation results in Figure \ref{fig:sds_videos} to demonstrate identifiable SDS's applicability to videos. Here, we generate salsa videos by rendering 3D meshes from key-points using the renderer from \citet{amass2019} (Figure \ref{fig:salsa_videos_sds}). Again in this experiment, one component is more prominent and used for the majority of the forward and backward patterns; and the other components are used to model spinning and other dancing patterns. Meanwhile segmentation results on AIST videos (Figure \ref{fig:aist_videos_sds}) show more diverse patterns, illustrating a correlation between the discovered regimes and the moving position of the dancer and background information. The pixel MSE results also support identifiable SDSs as flexible sequential LVMs for video modelling. 
\section{Conclusion}
We present identifiability analysis for Markov Switching Models and Switching Dynamical Systems. Key to our innovation is the establishment of MSM identifiability using Gaussian transitions with analytic functions, independently of the discrete state prior. We further extend the results to develop identifiable SDSs fully parameterised by neural networks. We verify our theory with synthetic experiments, and apply our models to regime-dependent causal discovery and high-dimensional time-series segmentation. 

While our work focuses on identifiability, accurate estimation is also key to the success of causal discovery. Current estimation methods require intensive hyper-parameter tuning and are prone to state collapse in high-dimensional settings. Also variational learning for deep LVMs has no consistency guarantees unless assuming universal approximations of the $q$ posterior \citep{gong2023rhino}, which disagrees with the popular use of Gaussian encoders. Future work should design efficient estimation methods, and extend the identifiability and estimation innovations to higher-order MSMs/SDSs.

\section*{Impact Statement}

This paper presents work whose goal is to advance the field of Machine Learning. There are many potential societal consequences of our work, none which we feel must be specifically highlighted here.

\section*{Acknowledgments}
Yixin Wang was supported in part by the Office of Naval Research under grant number N00014-23-1-2590 and the National Science Foundation under Grant No. 2231174 and No. 2310831.

\bibliography{icml_bibfile}
\bibliographystyle{icml2024}
\clearpage
\appendix
\onecolumn
\section{Non-parametric finite mixture models}
\label{sec:finite_mixture_model_background}

We use the following existing result on identifying finite mixtures \citep{yakowitz1968identifiability}, which introduces the concept of linear independence to the identification of finite mixtures. Specifically, consider a distribution family that contains functions defined on $\x \in \mathbb{R}^d$:
\begin{equation}
    \mathcal{F}_{\A} := \{F_{a}(\x) | a \in \A \}
\end{equation}
where $F_{a}(\x)$ is an $d$-dimensional CDF and $\A$ is a measurable index set such that $F_a(\x)$ as a function of $(\x, a)$ is measurable on $\mathbb{R}^d \times \A$. In this paper, we assume this measure theoretic assumption on $\A$ is satisfied. Now consider the following finite mixture distribution family by linearly combining the CDFs in $\mathcal{F}$:
\begin{equation}
    \mathcal{H}_{\A} := \{ H(\x) = \sum_{i=1}^N c_i F_{a_i}(\x) | N < +\infty, a_i \in \A, a_i \neq a_j, \forall i \neq j, \sum_{i=1}^N c_i = 1 \}.
\end{equation}
Then we specify the definition of \emph{identifiable finite mixture family} as follows:
\begin{definition}
\label{def:identifiability_finite_mixture}
The finite mixture family $\mathcal{H}$ is said to be identifiable up to permutations, when for any two finite mixtures $H_1(x) = \sum_{i=1}^M c_i F_{a_i}(\x)$ and $H_2(x) = \sum_{i=1}^M \hat{c}_i F_{\hat{a}_i}(\x)$, $H_1(\x) = H_2(\x)$ for all $\x \in \mathbb{R}^d$, if and only if $M = N$ and for each $1 \leq i \leq N$ there is some $1 \leq j \leq M$ such that $c_i = \hat{c}_j$ and $F_{a_i}(\x) = F_{\hat{a}_j}(\x)$ for all $\x \in \mathbb{R}^d$.
\end{definition}
Then \citet{yakowitz1968identifiability} proved the identifiability results for finite mixtures. To see this, we first introduce the concept of linearly independent functions under finite mixtures as follows.
\begin{definition}
\label{def:linear_independence_finite_mixture}
A family of functions $\mathcal{F} = \{f_a(\x) | a \in \A \}$ is said to contain linearly independent functions under finite mixtures, if for any $\A_0 \subset \A$ such that $|\A_0| < +\infty$, the functions in $\{f_a(\x) | a \in \A_0 \}$ are linearly independent.
\end{definition}
This is a weaker requirement of linear independence on function classes as it allows linear dependency by representing one function as the linear combination of \emph{infinitely many} other functions. With this relaxed definition of linear independence we state the identifiability result of finite mixture models as follows.
\begin{proposition}\citep{yakowitz1968identifiability}
\label{prop:mixture_cdf_identifiability}
The finite mixture distribution family $\mathcal{H}$ is identifiable up to permutations, iff.~functions in $\mathcal{F}$ are linearly independent under the finite mixture model. 
\end{proposition}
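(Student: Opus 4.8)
The plan is to establish the biconditional by proving each implication separately, following the classical argument of \citet{yakowitz1968identifiability}. Throughout I would adopt the standing convention (implicit in Definition \ref{def:linear_independence_finite_mixture} and guaranteed in our applications by the unique-indexing assumptions) that distinct indices in $\A$ correspond to distinct CDFs, so that ``components'' and ``functions'' may be identified. The two directions use complementary tools: the forward implication is pure linear algebra on a merged linear combination, while the reverse implication is a construction that turns a vanishing combination into two genuinely different but equal mixtures.

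For the ``only if'' direction (identifiability $\Rightarrow$ linear independence) I would argue by contraposition. Suppose $\mathcal{F}_{\A}$ is \emph{not} linearly independent under finite mixtures: there are distinct indices $a_1,\dots,a_L\in\A$ and reals $\beta_1,\dots,\beta_L$, not all zero, with $\sum_{l=1}^L \beta_l F_{a_l}(\x)=0$ for all $\x$. Letting every coordinate of $\x$ tend to $+\infty$ and using $F_{a_l}(\x)\to 1$ yields $\sum_l \beta_l = 0$, so the sets $P=\{l:\beta_l>0\}$ and $Q=\{l:\beta_l<0\}$ are both nonempty with equal total mass $S:=\sum_{l\in P}\beta_l=\sum_{l\in Q}(-\beta_l)>0$. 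I would then define $H_1=\tfrac1S\sum_{l\in P}\beta_l F_{a_l}$ and $H_2=\tfrac1S\sum_{l\in Q}(-\beta_l)F_{a_l}$; each is a genuine element of $\mathcal{H}_{\A}$ (strictly positive weights summing to one), and $H_1=H_2$ by construction. Since $P$ and $Q$ are disjoint and index distinct CDFs, $H_1$ and $H_2$ are built from entirely different components, so no permutation matching in the sense of Definition \ref{def:identifiability_finite_mixture} exists, contradicting identifiability.

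For the ``if'' direction (linear independence $\Rightarrow$ identifiability), I would take $H_1=\sum_{i=1}^N c_i F_{a_i}$ and $H_2=\sum_{j=1}^M \hat{c}_j F_{\hat a_j}$ in $\mathcal{H}_{\A}$ with $H_1(\x)=H_2(\x)$ for all $\x$, assuming without loss of generality that all weights are strictly positive. Collecting terms over the union of the two index sets rewrites $H_1-H_2=0$ as a single finite linear combination of distinctly-indexed CDFs equal to the zero function: indices common to both carry coefficient $c_i-\hat c_j$, indices occurring only in $H_1$ (resp.\ $H_2$) carry $c_i$ (resp.\ $-\hat c_j$). Linear independence under finite mixtures forces every such coefficient to vanish. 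An index present in only one mixture would then require weight zero, contradicting positivity; hence the index sets coincide (so $N=M$) and matched weights satisfy $c_i=\hat c_j$, which is exactly the permutation equivalence of Definition \ref{def:identifiability_finite_mixture}.

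The main obstacle, and the only genuinely nontrivial step, is the positive/negative splitting in the ``only if'' direction: one must check that the renormalised pieces are bona fide probability mixtures — this relies on $\sum_l\beta_l=0$ from the limit at infinity to ensure the two parts have equal mass — and that their component sets are disjoint and distinctly indexed, so that the resulting equality truly violates identifiability rather than being absorbed into a permutation. The remaining work, merging coefficients and reading off the matching, is routine. I would therefore present the splitting construction in full and treat the bookkeeping more briefly.
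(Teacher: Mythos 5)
Your proof is correct. Note that there is no in-paper proof to compare against: the paper states this proposition as a known result, citing \citet{yakowitz1968identifiability}, and your argument is precisely the classical one from that reference --- necessity by taking the limit as every coordinate of $\x$ tends to $+\infty$ to force $\sum_l \beta_l = 0$, then splitting the vanishing combination into its positive and negative parts to manufacture two equal mixtures with disjoint component sets; sufficiency by merging the two equal mixtures into a single vanishing combination and letting linear independence kill every coefficient. The one hypothesis your argument genuinely needs --- that distinct indices $a \in \A$ correspond to distinct CDFs, so that the disjointness of $P$ and $Q$ actually rules out a matching in the sense of Definition \ref{def:identifiability_finite_mixture} --- is correctly flagged by your standing convention, and it is consistent with how the paper uses the result (the unique-indexing assumptions (a1)/(b2)/(d1) play exactly this role downstream). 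No gaps.
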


\section{Proof of theorem \ref{thm:identifiability_msm}}\label{app:identifiability_msm}

We follow the strategy described in the main text.

\subsection{Identifiability via linear independence}\label{app:proof_thm_ident_cond}

Proposition \ref{prop:mixture_cdf_identifiability} can be directly generalised to CDFs defined on $\z_{1:T} \in \mathbb{R}^{Tm}$. Furthermore, if we have a family of PDFs\footnote{In this case we assume that the probability measures are dominated by the Lebesgue measure on $\mathbb{R}^{Tm}$ and the CDFs are differentiable.}, e.g. $\mathcal{P}_{\A}^T:= \Pi_{\A}\otimes(\otimes_{t=2}^T \mathcal{P}_{\A})$, with linearly independent components, then their corresponding $Tm$-dimensional CDFs are also linearly independent (and vice versa). Therefore we have the following result as a direct extension of Proposition \ref{prop:mixture_cdf_identifiability}.
\begin{proposition}
\label{prop:mixture_pdf_identifiability}
Consider the distribution family given by Eq. \ref{eq:msm:family}.
Then the joint distribution in $\mathcal{M}^T(\Pi_{\A},\mathcal{P}_{\A})$ is identifiable up to permutations if and only if functions in $\mathcal{P}^T_{\A}$ are linearly independent under finite mixtures.
\end{proposition}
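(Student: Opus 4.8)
The plan is to recognise $\mathcal{M}^T(\Pi_{\A},\mathcal{P}_{\A})$ as an ordinary finite mixture family whose \emph{atoms} are the factored trajectory densities, and then to invoke the $\mathbb{R}^{Tm}$-version of Proposition \ref{prop:mixture_cdf_identifiability} together with the PDF/CDF linear-independence equivalence recorded just above. Concretely, I would set $\tilde{\A} := \A^T$ and, for each multi-index $a_{1:T}\in\tilde{\A}$, define the atom
\[
F_{a_{1:T}}(\z_{1:T}) := p_{a_1}(\z_1)\prod_{t=2}^T p_{a_t}(\z_t\svert\z_{t-1}),
\]
so that $\mathcal{P}^T_{\A} = \{F_{a_{1:T}}\svert a_{1:T}\in\tilde{\A}\}$ plays the role of the base family $\mathcal{F}_{\A}$ from Appendix \ref{sec:finite_mixture_model_background}. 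The constraint $a^i_{1:T}\neq a^j_{1:T}$ for $i\neq j$ in Eq.~(\ref{eq:msm:family}) is exactly the requirement $a_i\neq a_j$ appearing in the definition of $\mathcal{H}_{\A}$, so every member of $\mathcal{M}^T(\Pi_{\A},\mathcal{P}_{\A})$ is a finite mixture $\sum_i c_i F_{a^i_{1:T}}$ built from this base family, and conversely.

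With this identification, the generalisation of Proposition \ref{prop:mixture_cdf_identifiability} to $\mathbb{R}^{Tm}$ (combined with the stated equivalence between linear independence of the PDFs in $\mathcal{P}^T_{\A}$ and of their CDFs) immediately yields that the finite mixture family generated by $\mathcal{P}^T_{\A}$ is identifiable in the sense of Definition \ref{def:identifiability_finite_mixture} if and only if $\mathcal{P}^T_{\A}$ is linearly independent under finite mixtures. The remaining, and I expect principal, task is then a purely definitional translation: to show that identifiability of this mixture family in the sense of Definition \ref{def:identifiability_finite_mixture} is \emph{equivalent} to identifiability of $\mathcal{M}^T(\Pi_{\A},\mathcal{P}_{\A})$ in the sense of Definition \ref{def:identifiability_cond_transition}.

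For that translation I would argue that matching two atoms $F_{a^i_{1:T}}=F_{\hat{a}^j_{1:T}}$ is equivalent to matching their factors, i.e.\ to conditions 3 and 4 of Definition \ref{def:identifiability_cond_transition}. The reverse implication is immediate from the product form; for the forward implication I recover the factors from the first-order Markov joint, integrating out $\z_{2:T}$ to obtain $p_{a^i_1}(\z_1)=p_{\hat{a}^j_1}(\z_1)$ (condition 4) and reading off each $p_{a^i_t}(\z_t\svert\z_{t-1})$ from the pairwise marginal $p(\z_{t-1},\z_t)$ (condition 3); for the Gaussian families of interest the densities are strictly positive, so this recovery holds everywhere rather than merely a.e. Since condition 1 ($c_i=\hat{c}_j$) is common to both definitions, the two mixture-level matchings then coincide up to the extra structural requirement, condition 2.

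Finally, I must check that condition 2 comes for free under the linear-independence hypothesis. The key observation is that linear independence of $\mathcal{P}^T_{\A}$ forces \emph{unique indexing}: if two distinct indices $a\neq a'$ satisfied $p_a(\cdot\svert\cdot)=p_{a'}(\cdot\svert\cdot)$, then (for $T\geq 2$) swapping $a$ for $a'$ at a transition position would yield two distinct multi-indices with identical atoms $F$, contradicting linear independence, and likewise for $\Pi_{\A}$. Given unique indexing, $a^i_{t_1}=a^i_{t_2}$ forces $p_{a^i_{t_1}}=p_{a^i_{t_2}}$, hence by condition 3 $p_{\hat{a}^j_{t_1}}=p_{\hat{a}^j_{t_2}}$, and unique indexing applied to $\hat{a}^j$ gives $\hat{a}^j_{t_1}=\hat{a}^j_{t_2}$, which is condition 2. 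Assembling these pieces establishes the equivalence of Definitions \ref{def:identifiability_cond_transition} and \ref{def:identifiability_finite_mixture} for this family, and hence the claimed ``iff''. I anticipate the delicate point to be this last step together with the factor recovery, since it is precisely where the MSM definition departs from the generic vector-indexed finite-mixture definition, as flagged by the $(1,2,3,2)$ versus $(3,1,2,3)$ example in the main text.
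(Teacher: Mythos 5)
Your proposal is correct, and its core---viewing $\mathcal{M}^T(\Pi_{\A},\mathcal{P}_{\A})$ as an ordinary finite mixture family whose atoms are the trajectory densities, then invoking the $\mathbb{R}^{Tm}$ version of Proposition \ref{prop:mixture_cdf_identifiability} together with the PDF/CDF linear-independence equivalence---is exactly the paper's argument; indeed the paper offers nothing beyond that observation as its proof of this proposition. The difference is one of scope, arising from an ambiguity you resolved differently from the authors: the paper reads ``identifiable up to permutations'' here in the finite-mixture sense of Definition \ref{def:identifiability_finite_mixture} applied to the trajectory atoms (this is explicit when the proposition is later invoked in the proof of Theorem \ref{thm:identifiability_conditional}), whereas you read it as Definition \ref{def:identifiability_cond_transition}. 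Consequently your ``definitional translation'' step---recovering conditions 3 and 4 by marginalising the equal joints and dividing, and obtaining condition 2 by showing that linear independence forces unique indexing of $\mathcal{P}_{\A}$ and $\Pi_{\A}$---is not part of the paper's proof of this proposition at all: it is precisely the content, and essentially the method, of the paper's subsequent Theorem \ref{thm:identifiability_conditional}, which you have merged into the proposition. The merge is sound, and two aspects of your write-up are arguably improvements: your unique-indexing-first treatment of condition 2 is cleaner than the paper's contradiction argument, and you correctly note that the ``only if'' direction survives the stronger reading because conditions 1, 3 and 4 alone already imply atom-level matching, so Definition \ref{def:identifiability_cond_transition} identifiability still implies mixture identifiability and hence linear independence via \citet{yakowitz1968identifiability}. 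One caveat you share with the paper: the factor-recovery division requires the relevant marginals to be strictly positive (otherwise equality of the conditionals holds only almost everywhere); you flag this for the Gaussian case, but the proposition is stated for general families, where positivity is an implicit assumption in both treatments.
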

The above assumption of linear independence under finite mixtures over the joint distribution implies the following identifiability result.
\begin{theorem}
\label{thm:identifiability_conditional}
Assume the functions in $\mathcal{P}_{\A}^T:= \Pi_{\A}\otimes(\otimes_{t=2}^T \mathcal{P}_{\A})$ are linearly independent \textcolor{black}{under finite mixtures}, then the distribution family $\mathcal{M}^T(\Pi_{\A},\mathcal{P}_{\A})$ is identifiable as defined in Definition \ref{def:identifiability_cond_transition}.
\end{theorem}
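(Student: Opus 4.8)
The plan is to bridge the gap between the finite-mixture notion of identifiability (Def.~\ref{def:identifiability_finite_mixture}, phrased in terms of full trajectory densities) and the finer, factor-wise notion demanded by Def.~\ref{def:identifiability_cond_transition}. First I would invoke Prop.~\ref{prop:mixture_pdf_identifiability}: since the functions in $\mathcal{P}^T_{\A}$ are assumed linearly independent under finite mixtures, the family $\mathcal{M}^T(\Pi_{\A},\mathcal{P}_{\A})$ is identifiable up to permutations in the sense of Def.~\ref{def:identifiability_finite_mixture}. Hence, whenever $p_1(\z_{1:T}) = p_2(\z_{1:T})$ for all $\z_{1:T}$, we immediately obtain $K^T = \hat{K}^T$ (so $K = \hat{K}$) and, for each $1 \leq i \leq K^T$, a matching index $j$ with $c_i = \hat{c}_j$ (requirement 1) together with the \emph{joint} equality $p_{a^i_1}(\z_1)\prod_{t=2}^T p_{a^i_t}(\z_t|\z_{t-1}) = p_{\hat{a}^j_1}(\z_1)\prod_{t=2}^T p_{\hat{a}^j_t}(\z_t|\z_{t-1})$ holding for all $\z_{1:T}$. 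The remaining work is to decompose this trajectory-level equality into the per-factor equalities (requirements 3 and 4) and the index-consistency condition (requirement 2).

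Second, I would peel off the individual factors using marginalisation and conditioning, exploiting that every kernel in $\mathcal{P}_{\A}$ integrates to one. Integrating the joint equality over $\z_{2:T}$ collapses all transition factors from the right and yields $p_{a^i_1}(\z_1) = p_{\hat{a}^j_1}(\z_1)$ for all $\z_1$, which is requirement 4. For the transitions I would use the first-order Markov structure directly: under either joint, marginalising over all coordinates except $\z_{t-1}$ and $\z_t$ and dividing shows that the conditional density $p(\z_t\mid\z_{t-1})$ coincides with the factor $p_{a^i_t}(\z_t|\z_{t-1})$ (respectively $p_{\hat{a}^j_t}(\z_t|\z_{t-1})$). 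Since equal joints have equal bivariate marginals $p(\z_{t-1},\z_t)$ and equal univariate marginals $p(\z_{t-1})$, their conditionals agree, giving $p_{a^i_t}(\z_t|\z_{t-1}) = p_{\hat{a}^j_t}(\z_t|\z_{t-1})$ for every $t\geq 2$ (requirement 3).

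Third, I would show that the \emph{unique-indexing} property of $\mathcal{P}_{\A}$ and $\Pi_{\A}$ is itself a consequence of the linear-independence hypothesis: if two distinct indices $a \neq a'$ produced identical conditional densities, then two distinct tuples in $\A^T$ differing only in that coordinate would yield identical functions in $\mathcal{P}^T_{\A}$, and a two-element linearly dependent subset would contradict linear independence under finite mixtures. With unique indexing available, requirement 2 follows from requirement 3: if $a^i_{t_1} = a^i_{t_2}$ then the corresponding transition densities coincide, so by requirement 3 the densities indexed by $\hat{a}^j_{t_1}$ and $\hat{a}^j_{t_2}$ coincide, and unique indexing forces $\hat{a}^j_{t_1} = \hat{a}^j_{t_2}$.

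The main obstacle I anticipate lies in the second step, specifically in asserting the conditional equality of requirement 3 on the \emph{entire} domain $\z_{t-1}\in\R^m$ rather than merely almost everywhere. The division that recovers $p(\z_t\mid\z_{t-1})$ is only licit where the marginal of $\z_{t-1}$ is positive, so the factor equality is first secured on the support of that marginal; extending it to all of $\R^m$ as literally stated in Def.~\ref{def:identifiability_cond_transition} requires the marginals to have full support (which is automatic for the Gaussian families used in the specialisation of Thm.~\ref{thm:identifiability_msm}) or an analogous regularity condition. I would therefore make this support assumption explicit, handle the measure-zero exceptional set carefully, and then verify that the resulting factor-wise and index-consistency statements reassemble into exactly the four conditions of Def.~\ref{def:identifiability_cond_transition}.
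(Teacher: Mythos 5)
Your proposal follows essentially the same route as the paper's own proof: invoke Proposition~\ref{prop:mixture_pdf_identifiability} to obtain mixture-level identifiability (requirements 1 and the joint trajectory equality), peel off the individual factors by marginalising (each transition kernel integrates to one) and dividing to get requirements 3 and 4, and obtain requirement 2 by noting that equal transition kernels carrying distinct indices would produce a two-element linearly dependent family inside $\mathcal{P}_{\A}^T$, contradicting the linear-independence hypothesis. Your explicit treatment of the division/support caveat (the conditional equality is only forced where the relevant marginal is positive) is a sound refinement of a step the paper passes over implicitly, but it does not change the structure of the argument.
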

\begin{proof}
From proposition \ref{prop:mixture_pdf_identifiability} we see that, $\mathcal{P}_{\A}^T$ being linearly independent implies identifiability up to permutation for $\mathcal{M}^T(\Pi_{\A},\mathcal{P}_{\A})$ in the finite mixture sense (Definition \ref{def:identifiability_finite_mixture}). This means for $p_1(\z_{1:T})$ and $p_2(\z_{1:T})$ defined in Definition \ref{def:identifiability_cond_transition}, we have $K = \hat{K}$ and for every $1 \leq i \leq K^T$, there exists $1 \leq j \leq \hat{K}^T$ such that $c_i = \hat{c}_j$ and
$$ p_{a_1^i}(\z_1)\prod_{t=2}^T p_{a_t^i}(\z_t | \z_{t-1}) =  p_{\hat{a}_1^j}(\z_1)\prod_{t=2}^T p_{\hat{a}_t^j}(\z_t | \z_{t-1}), \quad \forall \z_{1:T} \in \mathbb{R}^{Tm}.$$
This also indicates that $p_{a_t^i}(\z_t | \z_{t-1}) = p_{\hat{a}_t^j}(\z_t | \z_{t-1})$ for all $t \geq 2$, $\z_t, \z_{t-1} \in \mathbb{R}^m$, which can be proved by noticing that $p_a(\z_t | \z_{t-1})$ are conditional PDFs. To see this, notice that as the joint distributions on $\z_{1:T}$ are equal, then the marginal distributions on $\z_{1:T-1}$ are also equal:
$$ p_{a_1^i}(\z_1)\prod_{t=2}^{T-1} p_{a_t^i}(\z_t | \z_{t-1}) =  p_{\hat{a}_1^j}(\z_1)\prod_{t=2}^{T-1} p_{\hat{a}_t^j}(\z_t | \z_{t-1}), \quad \forall \z_{1:T-1} \in \mathbb{R}^{(T-1)m},$$
which immediately implies $p_{a_T^i}(\z_T | \z_{T-1}) = p_{\hat{a}_T^j}(\z_T | \z_{T-1}), \forall \z_{T-1}, \z_T \in \mathbb{R}^m$. Similar logic applies to the other time indices $t \geq 1$, which also implies $p_{a_1^i}(\z_1) = p_{\hat{a}_1^j}(\z_1)$ for all $\z_1 \in \mathbb{R}^m$.

Lastly, if there exists $t_1 \neq t_2$ such that $a_{t_1}^i = a_{t_2}^i$ but $\hat{a}_{t_1}^j \neq \hat{a}_{t_2}^j$, then the proved fact that, for any $\bm{\alpha}, \bm{\beta} \in \mathbb{R}^m$,
\begin{align*}
p_{\hat{a}_{t_1}^j}(\z_{t_1} = \bm{\beta} | \z_{t_1 -1} = \bm{\alpha}) &= p_{a_{t_1}^i}(\z_{t_1} = \bm{\beta} | \z_{t_1 -1} = \bm{\alpha}) \\
&= p_{a_{t_2}^i}(\z_{t_2} = \bm{\beta} | \z_{t_2 -1} = \bm{\alpha}) \\
&= p_{\hat{a}_{t_2}^j}(\z_{t_2} = \bm{\beta} | \z_{t_2 -1} = \bm{\alpha}),
\end{align*}
implies linear dependence of $\mathcal{P}_{\A}$, which contradicts to the assumption that $\mathcal{P}_{\A}^T$ are linearly independent under finite mixtures.

We show the contradiction by assuming the case where $\mathcal{P}_{\A}^{t-1}$ is linearly independent for some $t>1$, and then we consider the linear independence on $\mathcal{P}_{\A}^{t}$. We should have
$$
\sum_{i, j} \gamma_{ij} p_{a^i_{1:t-1}}(\z_{1:t-1}) p_{a^j_t}(\z_t | \z_{t-1}) = 0, \quad \forall \z_{1:t} \in \mathbb{R}^{(t-1)m} \times \mathbb{R}^{m},
$$
with $\gamma_{ij}=0,\forall i,j$. We can swap the summations to observe that from linear dependence of $\mathcal{P}_{\A}$, we have $\gamma_{ij}\neq 0$, for some $i$ and $j$ such that $\sum_{j} \gamma_{ij} p_{a^j_t}(\z_t | \z_{t-1}) = 0$.
$$
\sum_{i} \left( \sum_{j} \gamma_{ij} p_{a^j_t}(\z_t | \z_{t-1}) \right) p_{a^i_{1:t-1}}(\z_{1:t-1}) = 0, \quad \forall \z_{1:t} \in \mathbb{R}^{(t-1)m} \times \mathbb{R}^{m},
$$
which satisfies the equation with $\gamma_{ij}\neq 0$ for some $i$ and $j$ and thus contradicts with the linear independence of $\mathcal{P}^t_{\A}$.

\end{proof}

\subsection{Linear independence for T=2}\label{app:proof_thm_lin_indep}

Following the strategy as described in the main text, the next step requires us to start from linear independence results for $T=2$, and then extend to $T>2$. We therefore prove the following linear independence result.
\begin{lemma}
\label{lemma:linear_independence_two_nonlinear_gaussians}
Consider two families $\mathcal{U}_I := \{u_i(\y, \x) | i \in I \}$ and $\mathcal{V}_J := \{v_j(\z, \y) | j \in J \}$ with $\x \in \mathcal{X}, \y \in \mathbb{R}^{d_y}$ and $\z \in \mathbb{R}^{d_z}$. We further assume the following assumptions:
\begin{itemize}
    \item[(b1)] Positive function values: $u_i(\y, \x) > 0$ for all $i \in I, (\y, \x) \in \mathbb{R}^{d_y} \times \mathcal{X}$. Similar positive function values assumption applies to $\mathcal{V}_J$: $v_j(\z, \y) > 0$ for all $j \in J, (\z, \y) \in \mathbb{R}^{d_z} \times \mathbb{R}^{d_y}$.
    \item[(b2)] Unique indexing: for $\mathcal{U}_I$, $i \neq i' \in I \Leftrightarrow \exists \ \x, \y \text{ s.t. } u_i(\x, \y) \neq u_{i'}(\x, \y)$. Similar unique indexing assumption applies to $\mathcal{V}_J$;
    \item[(b3)] Linear independence \textcolor{black}{under finite mixtures} on specific non-zero measure subsets for $\mathcal{U}_I$: for any non-zero measure subset $\mathcal{Y} \subset \mathbb{R}^{d_y}$, $\mathcal{U}_I$ contains linearly independent functions \textcolor{black}{under finite mixtures} on $(\y, \x) \in \mathcal{Y} \times \mathcal{X}$. 
    \item[(b4)] Linear independence \textcolor{black}{under finite mixtures} on specific non-zero measure subsets for $\mathcal{V}_J$: there exists a non-zero measure subset $\mathcal{Y} \subset \mathbb{R}^{d_y}$, such that for any non-zero measure subsets $\mathcal{Y}' \subset \mathcal{Y}$ and $\mathcal{Z} \subset \mathbb{R}^{d_z}$, $\mathcal{V}_J$ contains linearly independent functions \textcolor{black}{under finite mixtures} on $(\z, \y) \in \mathcal{Z} \times \mathcal{Y}'$;
    \item[(b5)] Linear dependence \textcolor{black}{under finite mixtures} for subsets of functions in $\mathcal{V}_J$ implies repeating functions: for any $\bm{\beta} \in \mathbb{R}^{d_y}$, any non-zero measure subset $\mathcal{Z} \subset \mathbb{R}^{d_z}$ and any subset $J_0 \subset J$ \textcolor{black}{such that $|J_0| < +\infty$}, $\{v_j(\z, \y = \bm{\beta}) | j \in J_0 \}$ contains linearly dependent functions on $\z \in \mathcal{Z}$ only if $ \exists \ j \neq j' \in J_0$ such that $v_j(\z, \bm{\beta}) = v_{j'}(\z, \bm{\beta})$ for all $\z \in \textcolor{black}{\mathbb{R}^{d_z}}$.
    \item[(b6)] Continuity for $\mathcal{V}_J$: for any $j \in J$, $v_j(\z, \y)$ is continuous in $\y \in \mathbb{R}^{d_y}$.
\end{itemize}
Then for any non-zero measure subset $\mathcal{Z} \subset \mathbb{R}^{d_z}$, $\mathcal{U}_I \otimes \mathcal{V}_J := \{v_j(\z, \y) u_i(\y, \x) | i \in I, j \in J \}$ 
contains linear indepedent functions under finite mixtures defined on $(\x, \y, \z) \in \mathcal{X} \times \mathbb{R}^{d_y} \times \mathcal{Z}$.
\end{lemma}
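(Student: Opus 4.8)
The plan is to prove that $\mathcal{U}_I\otimes\mathcal{V}_J$ is linearly independent under finite mixtures by showing that any vanishing finite linear combination of products has all-zero coefficients. Since any finite subfamily of the products is contained in $\{v_j u_i : i\in I_0,\ j\in J_0\}$ for some finite $I_0\subseteq I$, $J_0\subseteq J$ (pad the absent coefficients with zeros), it suffices to assume
\[ \sum_{i\in I_0}\sum_{j\in J_0}\gamma_{ij}\,v_j(\z,\y)\,u_i(\y,\x)=0 \quad\text{on } \mathcal{X}\times\mathbb{R}^{d_y}\times\mathcal{Z} \]
and deduce $\gamma_{ij}=0$ for all $i,j$. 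The crux is that $\y$ is shared by both families, so this is not a clean tensor product; the argument must therefore \emph{localise in} $\y$ before either independence assumption can be invoked.

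First I would group the sum by $j$, rewriting it as $\sum_{j\in J_0}c_j(\y,\x)\,v_j(\z,\y)=0$ with $c_j(\y,\x):=\sum_{i\in I_0}\gamma_{ij}u_i(\y,\x)$. For each fixed $(\y,\x)$ this is a constant-coefficient (in $\z$) vanishing combination of $\{v_j(\cdot,\y)\}_{j\in J_0}$ over $\z\in\mathcal{Z}$. The proof then has two stages: (I) exhibit a positive-measure set of values $\y=\bm{\beta}$ on which $\{v_j(\cdot,\bm{\beta})\}_{j\in J_0}$ is linearly independent on $\mathcal{Z}$, which forces $c_j(\bm{\beta},\x)\equiv 0$ there; and (II) feed $c_j\equiv 0$ into the independence of $\mathcal{U}_I$ to conclude $\gamma_{ij}=0$.

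I expect stage (I) to be the main obstacle, because at a single $\y=\bm{\beta}$ the slices $\{v_j(\cdot,\bm{\beta})\}$ may well be linearly dependent; this is precisely the phenomenon that (b5) governs. I would argue by contradiction: let $\mathcal{Y}_0$ be the distinguished set supplied by (b4) and suppose that for almost every $\bm{\beta}\in\mathcal{Y}_0$ the family $\{v_j(\cdot,\bm{\beta})\}_{j\in J_0}$ is linearly dependent on $\mathcal{Z}$. By (b5) each such $\bm{\beta}$ admits a pair $j\neq j'\in J_0$ with $v_j(\cdot,\bm{\beta})=v_{j'}(\cdot,\bm{\beta})$ on all of $\mathbb{R}^{d_z}$. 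As $J_0$ is finite there are finitely many candidate pairs, and the sets $\mathcal{Y}_{j,j'}:=\{\bm{\beta}: v_j(\cdot,\bm{\beta})=v_{j'}(\cdot,\bm{\beta})\}$ are measurable (indeed closed) thanks to the continuity assumption (b6); hence they cover $\mathcal{Y}_0$ up to a null set and, by pigeonhole, one of them meets $\mathcal{Y}_0$ in a positive-measure set for some pair $(j^\star,{j^\star}')$. But then $v_{j^\star}$ and $v_{{j^\star}'}$ agree on a product $\mathcal{Z}\times\mathcal{Y}'$ of positive-measure subsets, a nontrivial linear dependence of $\mathcal{V}_J$ there, contradicting (b4). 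Hence the set of $\bm{\beta}\in\mathcal{Y}_0$ on which $\{v_j(\cdot,\bm{\beta})\}_{j\in J_0}$ \emph{is} linearly independent on $\mathcal{Z}$ has positive measure; call it $\mathcal{Y}^\star$.

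Finally, for every $\bm{\beta}\in\mathcal{Y}^\star$ the independence of $\{v_j(\cdot,\bm{\beta})\}$ on $\mathcal{Z}$ forces $c_j(\bm{\beta},\x)=0$ for all $j\in J_0$ and $\x\in\mathcal{X}$, i.e.\ $\sum_{i\in I_0}\gamma_{ij}u_i(\y,\x)=0$ on $\mathcal{Y}^\star\times\mathcal{X}$. Applying (b3) with the positive-measure subset $\mathcal{Y}^\star$ gives linear independence of $\{u_i\}_{i\in I_0}$ there, whence $\gamma_{ij}=0$ for every $i\in I_0$; repeating over $j\in J_0$ yields $\gamma_{ij}=0$ for all $i,j$, which is the claim. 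The remaining work is measure-theoretic bookkeeping — measurability of the $\mathcal{Y}_{j,j'}$ via (b6) and the finiteness-plus-pigeonhole upgrade from an almost-everywhere statement to a single pair — while positivity (b1) and unique indexing (b2) serve mainly to keep the slice families well-posed and to make the contradiction with (b4) genuinely nontrivial.
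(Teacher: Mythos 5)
Your proposal is correct, and it takes a genuinely different route from the paper's own proof. Both arguments begin identically, grouping the vanishing combination as $\sum_{j\in J_0}c_j(\y,\x)\,v_j(\z,\y)=0$ with $c_j(\y,\x)=\sum_{i\in I_0}\gamma_{ij}u_i(\y,\x)$, and both hinge on producing a positive-measure set of $\bm{\beta}$ on which the slices $\{v_j(\cdot,\bm{\beta})\}_{j\in J_0}$ are linearly independent over $\mathcal{Z}$; but they manufacture that set very differently. The paper reasons about the combination itself: it uses (b1) and (b3) to argue that off a null set the coefficients cannot all vanish, deduces that the slices are linearly \emph{dependent} for every $\bm{\beta}$ in a positive-measure set $\mathcal{Y}_1$, partitions $J_0$ into equality classes via (b5), and then runs an extremal argument --- pick $\bm{\beta}^*$ minimising the smallest class cardinality $C(\bm{\beta})$, use continuity (b6) to freeze that minimal class on an $\epsilon$-ball around $\bm{\beta}^*$, and contradict (b4) if that class has two or more elements, or (b3) if it is a singleton. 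You bypass all of this: you prove, as a standalone claim not involving the coefficients at all, that the independence set $\mathcal{Y}^\star$ inside the distinguished set of (b4) (your $\mathcal{Y}_0$) has positive measure, because by (b5) the dependence set is covered by the finitely many equality sets $\mathcal{Y}_{j,j'}$, these are closed (hence measurable) by (b6), and if $\mathcal{Y}^\star$ were null then pigeonhole would yield a single pair of distinct indices agreeing on $\mathcal{Z}\times\mathcal{Y}'$ with $\mathcal{Y}'$ of positive measure, contradicting (b4); your stage (II) then coincides with the paper's concluding step, forcing $c_j\equiv 0$ on $\mathcal{Y}^\star\times\mathcal{X}$ and finishing with (b3). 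Your version is shorter, never needs (b1) nor the paper's preliminary ``at least two distinct $j$ indices'' step, and uses (b6) only for measurability rather than for any local construction; notably, it also sidesteps the most fragile step in the paper's argument, namely placing the ball $B_\epsilon(\bm{\beta}^*)$ inside $\mathcal{Y}_1\subset\mathcal{Y}$ --- a positive-measure set need not contain any ball, whereas your pigeonhole argument works with arbitrary measurable subsets. What the paper's extremal/continuity machinery buys is finer pointwise information about the equality-class structure near $\bm{\beta}^*$, but none of that is needed for the lemma's conclusion, so your route is both more economical and, on this point, more watertight.
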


\begin{proof}
Assume this sufficiency statement is false, then there exist a non-zero measure subset $\mathcal{Z} \subset \mathbb{R}^{d_z}$, $S_0 \subset I \times J$ \textcolor{black}{with $|S_0| < +\infty$} and a set of non-zero values $\{ \gamma_{ij} \in \mathbb{R} | (i, j) \in S_0 \}$, such that 
\begin{equation}
\label{eq:assumed_linear_dependence_gaussian}
\begin{aligned}
\sum_{ (i, j) \in S_0} \gamma_{ij} v_j(\z, \y) u_i(\y, \x) = 0, &\quad \forall (\x, \y, \z) \in \mathcal{X} \times \mathbb{R}^{d_y} \times \mathcal{Z}.
\end{aligned}
\end{equation}
Note that the choices of $S_0$ and $\gamma_{ij}$ are independent of any $\x, \y, \z$ values, but might be dependent on $\mathcal{Z}$. By assumptions (b1), the index set $S_0$ contains at least 2 different indices $(i, j)$ and $(i', j')$.
In particular, $S_0$ contains at least 2 different indices $(i, j)$ and $(i', j')$ with $j \neq j'$, otherwise we can extract the common term $v_j(\z, \y)$ out:
$$\sum_{ (i, j) \in S_0} \gamma_{ij} v_j(\z, \y) u_i(\y, \x) = v_j(\z, \y) \left( \sum_{i: (i, j) \in S_0} \gamma_{ij}  u_i(\y, \x) \right) = 0, \quad \forall (\x, \y, \z) \in \mathcal{X} \times \mathbb{R}^{d_y} \times \mathcal{Z},$$
and as there exist at least 2 different indices $(i', j)$ and $(i, j)$ in $S_0$, we have at least one $i' \neq i$, and the above equation contradicts to assumptions (b1) - (b3).

Now define $J_0 = \{j \in A | \exists (i, j) \in S_0 \}$ the set of all possible $j$ indices that appear in $S_0$, \textcolor{black}{and from $|S_0| < +\infty$ we have $|J_0| < +\infty$ as well}. We rewrite the linear combination equation (Eq.~(\ref{eq:assumed_linear_dependence_gaussian})) for any $\bm{\beta} \in \mathbb{R}^{d_y}$ as
\begin{equation}
\label{eq:linear_dependence_division}
\sum_{j \in J_0} \left(\sum_{ i: (i, j) \in S_0} \gamma_{ij} u_{i}(\y = \bm{\beta}, \x) \right) v_j(\z, \y = \bm{\beta}) = 0, \quad \forall (\x, \z) \in \mathcal{X} \times \mathcal{Z}. 
\end{equation}
\textcolor{black}{From assumption (b3) we know that the set $\mathcal{Y}_0 := \{\bm{\beta} \in \mathbb{R}^{d_y} | \sum_{ i: (i, j) \in S_0} \gamma_{ij} u_{i}(\y = \bm{\beta}, \x) = 0, \forall \x \in \mathcal{X} \}$ can only have zero measure in $\mathbb{R}^{d_y}$. Write $\mathcal{Y} \subset \mathbb{R}^{d_y}$ the non-zero measure subset defined by assumption (b4), we have $\mathcal{Y}_1 := \mathcal{Y} \backslash \mathcal{Y}_0 \subset \mathcal{Y}$ also has non-zero measure and satisfies assumption (b4). Combined with assumption (b1), we have for each $\bm{\beta} \in \mathcal{Y}_1$, there exists $\x \in \mathcal{X}$ such that $\sum_{ i: (i, j) \in S_0} \gamma_{ij} u_{i}(\y = \bm{\beta}, \x) \neq 0$ for at least two $j$ indices in $J_0$. } 
This means \textcolor{black}{for each $\bm{\beta} \in \mathcal{Y}_1$}, $\{v_j(\z, \y = \bm{\beta}) | j \in J_0 \}$ contains linearly dependent functions on $\z \in \mathcal{Z}$.
Now under assumption (b5), we can split the index set $J_0$ into subsets indexed by $k \in K(\bm{\beta})$ as follows, such that within each index subset $J_k(\bm{\beta})$ the functions with the corresponding indices are equal:
\begin{equation}
\label{eq:j_index_split_def}
\begin{aligned}
J_0 = \cup_{k \in K(\bm{\beta})} J_k(\bm{\beta}), \quad J_k(\bm{\beta}) \cap J_{k'}(\bm{\beta}) = \emptyset, \forall k \neq k' \in K(\bm{\beta}), \\
j \neq j' \in J_k(\bm{\beta}) \quad \Leftrightarrow \quad v_j(\z, \y = \bm{\beta}) = v_{j'}(\z, \y = \bm{\beta}), \quad \forall \z \in \mathcal{Z}.
\end{aligned}
\end{equation}
%

Then we can rewrite Eq.~(\ref{eq:linear_dependence_division}) for any \textcolor{black}{$\bm{\beta} \in \mathcal{Y}_1$} as
\begin{equation}
\sum_{k \in K(\bm{\beta})} \left(\sum_{j \in J_k(\bm{\beta})}\sum_{ i: (i, j) \in S_0} \gamma_{ij} u_{i}(\y = \bm{\beta}, \x) v_j(\z, \y = \bm{\beta}) \right) = 0, \quad \forall (\x, \z) \in \mathcal{X} \times \mathcal{Z}.   
\end{equation}
\textcolor{black}{Recall from Eq.~(\ref{eq:j_index_split_def}) that $v_j(\bm{z}, \bm{y} = \bm{\beta})$ and $v_{j'}(\bm{z}, \bm{y} = \bm{\beta})$ are the same functions on $\bm{z} \in \mathcal{Z}$ iff.~$j \neq j'$ are in the same index set $J_k(\bm{\beta})$.} This means if Eq.~(\ref{eq:assumed_linear_dependence_gaussian}) holds, then for any \textcolor{black}{$\bm{\beta} \in \mathcal{Y}_1$, under assumptions (b1) and (b5)},
\begin{equation}
\label{eq:zero_constraint_given_beta}
\sum_{j \in J_k(\bm{\beta})}\sum_{ i: (i, j) \in S_0} \gamma_{ij} u_{i}(\y = \bm{\beta}, \x) = 0, \quad \forall \x \in \mathbb{R}^d, \quad k \in K(\bm{\beta}).  
\end{equation}
Define $C(\bm{\beta}) = \min_k |J_k(\bm{\beta})|$ the minimum cardinality count for $j$ indices in the $J_k(\bm{\beta})$ subsets. Choose $\bm{\beta}^* \in \arg\min_{\bm{\beta} \in \textcolor{black}{\mathcal{Y}_1}} C(\bm{\beta})$:
\begin{itemize}
    \item[1.] We have $C(\bm{\beta}^*) < |J_0|$ and $|K(\bm{\beta}^*)| \geq 2$. Otherwise for all $j \neq j' \in J_0$ we have $v_j(\z, \y = \bm{\beta}) = v_{j'}(\z, \y = \bm{\beta})$ for all $\z \in \mathcal{Z}$ and $\bm{\beta} \in \textcolor{black}{\mathcal{Y}_1}$, so that they are linearly dependent on $(\z, \y) \in \mathcal{Z} \times \textcolor{black}{\mathcal{Y}_1}$, a contradiction to assumption (b4) by setting $\mathcal{Y}' = \textcolor{black}{\mathcal{Y}_1}$.
    \item[2.] Now assume $|J_{1}(\bm{\beta}^*)| = C(\bm{\beta}^*)$ w.l.o.g.. \textcolor{black}{From assumption (b5), we know that for any $j \in J_1(\bm{\beta}^*)$ and $j' \in J_0 \backslash J_{1}(\bm{\beta}^*)$, $v_j(\z, \y = \bm{\beta}) = v_{j'}(\z, \y = \bm{\beta})$ only on zero measure subset of $\mathcal{Z}$ at most. Then as $|J_0| < +\infty$ and $\mathcal{Z} \subset \mathbb{R}^{d_z}$ has non-zero measure, there exist $\z_0 \in \mathcal{Z}$ and $\delta > 0$} such that 
    $$|v_j(\z = \z_0, \y = \bm{\beta}^*) - v_{j'}(\z = \z_0, \y = \bm{\beta}^*)| \geq \delta, \quad \forall j \in J_{1}(\bm{\beta}^*), \forall j' \in J_0 \backslash J_{1}(\bm{\beta}^*).$$
    Under assumption (b6), there exists $\epsilon(j) > 0$ such that we can construct an $\epsilon$-ball $B_{\epsilon(j)}(\bm{\beta}^*)$ using $\ell_2$-norm, such that 
    $$|v_j(z = \z_0, \y = \bm{\beta}^*) - v_j(\z = \z_0, \y = \bm{\beta})| \leq \delta/3, \quad \forall \bm{\beta} \in B_{\epsilon(j)}(\bm{\beta}^*).$$
    \textcolor{black}{Choosing a suitable $0 < \epsilon \leq \min_{j \in J_0} \epsilon(j)$ (note that $\min_{j \in J_0} \epsilon(j) > 0$ as $|J_0| < +\infty$)} and constructing an $\ell_2$-norm-based $\epsilon$-ball $B_{\epsilon}(\bm{\beta}^*) \subset \textcolor{black}{\mathcal{Y}_1}$, we have for all $j \in J_{1}(\bm{\beta}^*), j' \in J_0 \backslash J_{1}(\bm{\beta}^*)$, $j' \notin J_1(\bm{\beta})$ for all $\bm{\beta} \in B_{\epsilon}(\bm{\beta}^*)$ due to
    $$|v_j(\z = \z_0, \y = \bm{\beta}) - v_{j'}(\z = \z_0, \y = \bm{\beta})| \geq \delta/3, \quad \forall \bm{\beta} \in B_{\epsilon}(\bm{\beta}^*).$$
    So this means for the split $\{ J_k(\bm{\beta}) \}$ of any $\bm{\beta} \in B_{\epsilon}(\bm{\beta}^*)$, we have $ J_1(\bm{\beta}) \subset J_1(\bm{\beta}^*)$ 
    and therefore $ |J_1(\bm{\beta})| \leq |J_1(\bm{\beta}^*)|$. Now by definition of $\bm{\beta}^* \in \arg\min_{\bm{\beta} \in \mathcal{Y}} C(\bm{\beta})$ and $|J_{1}(\bm{\beta}^*)| = C(\bm{\beta}^*)$, we have $J_1(\bm{\beta}) = J_1(\bm{\beta}^*)$ for all $\bm{\beta} \in B_{\epsilon}(\bm{\beta}^*)$. 
    \item[3.] One can show that $|J_{1}(\bm{\beta}^*)| = 1$, otherwise by definition of the split (Eq.~(\ref{eq:j_index_split_def})) and the above point, there exists $j \neq j' \in J_{1}(\bm{\beta}^*)$ such that $v_j(\z, \y = \bm{\beta}) = v_{j'}(\z, \y = \bm{\beta})$ for all $\z \in \mathcal{Z}$ and $\bm{\beta} \in B_{\epsilon}(\bm{\beta}^*)$, a contradiction to assumption (b4) by setting $\mathcal{Y}' = B_{\epsilon}(\bm{\beta}^*)$. 
    Now assume that $j \in J_1(\bm{\beta}^*)$ is the only index in the subset, then the fact proved in the above point that $J_1(\bm{\beta}) = J_1(\bm{\beta}^*)$ for all $\bm{\beta} \in B_{\epsilon}(\bm{\beta}^*)$ means
    \begin{equation*}
    \sum_{ i: (i, j) \in S_0} \gamma_{ij} u_{i}(\y = \bm{\beta}, \x) = 0, \quad \forall \x \in \mathcal{X}, \quad \forall \bm{\beta} \in B_{\epsilon}(\bm{\beta}^*),
    \end{equation*}
    again a contradiction to assumption (b3) by setting $\mathcal{Y} = B_{\epsilon}(\bm{\beta}^*)$. 
\end{itemize}
The above 3 points indicate that Eq.~(\ref{eq:zero_constraint_given_beta}) cannot hold for all $\bm{\beta} \in \textcolor{black}{\mathcal{Y}_1}$ (and therefore for all $\bm{\beta} \in \mathcal{Y}$) under assumptions (b3) - (b6), therefore a contradiction is reached.
\end{proof}

%
%
\subsection{Linear independence in the non-parametric case}

The previous result can be used to show conditions for the linear independence of the joint distribution family $\mathcal{P}_{\A}^T$ in the non-parametric case.

\begin{theorem}
\label{thm:linear_independence_joint_non_parametric}
Define the following joint distribution family
$$\left\{ p_{a_1, a_{2:T}}(\z_{1:T}) = p_{a_1}(\z_1) \prod_{t=2}^T p_{a_t}(\z_t | \z_{t-1}), \quad p_{a_1} \in \Pi_{\A}, \quad p_{a_t} \in \mathcal{P}_{\A}, t = 2, ..., T \right\},$$
and assume $\Pi_{\A}$ and $\mathcal{P}_{\A}$ satisfy assumptions (b1)-(b6) as follows,
\begin{itemize}
    \item[(c1)] $\Pi_{\A}$ and $\mathcal{P}_{\A}$ satisfy (b1) and (b2): positive function values and unique indexing,
    \item[(c2)] $\Pi_{\A}$ satisfies (b3), and
    \item[(c3)] $\mathcal{P}_{\A}$ satisfies (b4)-(b6).
\end{itemize}
Then the following statement holds: For any $T \geq 2$ and any subset $\mathcal{Z} \subset \mathbb{R}^m$ The joint distribution family contains linearly independent distributions for $(\z_{1:T-1}, \z_T) \in \mathbb{R}^{(T-1)m} \times \mathcal{Z}$.
\end{theorem}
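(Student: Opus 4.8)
The plan is to prove the statement by induction on $T$, using Lemma \ref{lemma:linear_independence_two_nonlinear_gaussians} as the single workhorse at each step. The key observation is that the first-order Markov factorisation $p_{a_{1:T}}(\z_{1:T}) = p_{a_{1:T-1}}(\z_{1:T-1})\, p_{a_T}(\z_T \mid \z_{T-1})$ matches exactly the product structure $v_j(\z,\y)\,u_i(\y,\x)$ appearing in the lemma, once we identify $\x = \z_{1:T-2}$, $\y = \z_{T-1}$, and $\z = \z_T$. The shared variable $\y = \z_{T-1}$ is precisely the bottleneck that couples the two factors, so it is the conditional Markov structure that makes the lemma applicable here.

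For the base case $T = 2$, I would set $\mathcal{U}_I = \Pi_{\A}$ with a trivial (single-point) input space $\mathcal{X}$, so that $u_i(\y,\x) = p_{a_1}(\z_1)$ with $\y = \z_1$, and $\mathcal{V}_J = \mathcal{P}_{\A}$ with $v_j(\z,\y) = p_{a_2}(\z_2 \mid \z_1)$. The assumptions (b1)--(b6) demanded by the lemma are then supplied directly by (c1)--(c3): positivity and unique indexing (b1)--(b2) from (c1), condition (b3) for $\Pi_{\A}$ from (c2), and conditions (b4)--(b6) for $\mathcal{P}_{\A}$ from (c3). The lemma then yields linear independence of $\{p_{a_1}(\z_1)\,p_{a_2}(\z_2 \mid \z_1)\}$ on $\R^m \times \mathcal{Z}$ for any non-zero measure subset $\mathcal{Z} \subset \R^m$, which is exactly the claim for $T = 2$.

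For the inductive step, I would assume the statement holds for $T-1$, i.e.\ that $\{p_{a_{1:T-1}}(\z_{1:T-1})\}$ contains linearly independent functions (under finite mixtures) on $\R^{(T-2)m} \times \mathcal{Y}$ for \emph{every} non-zero measure subset $\mathcal{Y} \subset \R^m$. I would then invoke the lemma with $\mathcal{U}_I = \mathcal{P}_{\A}^{T-1}$ (viewing $\z_{1:T-2}$ as $\x$ and $\z_{T-1}$ as $\y$) and $\mathcal{V}_J = \mathcal{P}_{\A}$ (viewing $\z_{T-1}$ as $\y$ and $\z_T$ as $\z$). Crucially, the lemma's hypothesis (b3) for $\mathcal{U}_I$---linear independence on $\mathcal{Y} \times \mathcal{X}$ for every non-zero measure $\mathcal{Y}$ in the $\y = \z_{T-1}$ coordinate---is verbatim the inductive hypothesis, which is exactly why the induction must be set up with an \emph{arbitrary} last-coordinate set $\mathcal{Z}$. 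Since $v_j u_i = p_{a_T}(\z_T \mid \z_{T-1})\, p_{a_{1:T-1}}(\z_{1:T-1}) = p_{a_{1:T}}(\z_{1:T})$ and distinct full sequences $a_{1:T}$ correspond to distinct index pairs $(i,j)$, the lemma delivers linear independence of $\mathcal{P}_{\A}^T$ on $\R^{(T-1)m} \times \mathcal{Z}$, closing the induction.

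The step I expect to be the main obstacle is verifying that $\mathcal{U}_I = \mathcal{P}_{\A}^{T-1}$ satisfies the unique-indexing requirement (b2), as this is not inherited directly from the inductive hypothesis. I would handle it by a marginalisation argument: if two index sequences $a_{1:T-1} \neq a'_{1:T-1}$ first disagree at position $t^*$, integrating out $\z_{t^*+1:T-1}$ collapses the two joints to $p_{a_{1:t^*}}(\z_{1:t^*})$ and $p_{a'_{1:t^*}}(\z_{1:t^*})$, which share a common strictly positive prefix $p_{a_{1:t^*-1}}(\z_{1:t^*-1})$ and hence differ iff $p_{a_{t^*}}(\z_{t^*} \mid \z_{t^*-1})$ and $p_{a'_{t^*}}(\z_{t^*} \mid \z_{t^*-1})$ differ---which they do by the component-level unique indexing in (c1). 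Positivity (b1) for $\mathcal{U}_I$ is immediate from (c1) as a product of positive factors, so once (b2) is established the lemma applies and the induction proceeds cleanly.
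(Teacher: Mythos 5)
Your proposal follows essentially the same route as the paper's own proof: induction on $T$ with Lemma \ref{lemma:linear_independence_two_nonlinear_gaussians} as the workhorse at every step, the same identification $\x = \z_{1:\tau-2}$, $\y = \z_{\tau-1}$, $\z = \z_{\tau}$, $u_i = p_{a_{1:\tau-1}}$, $v_j = p_{a_{\tau}}(\z_{\tau}\mid\z_{\tau-1})$, and the same use of the strengthened ``arbitrary non-zero measure $\mathcal{Z}$'' formulation so that the inductive hypothesis delivers (b3). Your marginalisation argument for the unique-indexing condition (b2) of the product family $\mathcal{P}_{\A}^{T-1}$ is correct and is actually a detail the paper asserts without justification (it merely states that (b1) and (b2) hold ``directly''), so your write-up is, if anything, slightly more complete on that point.
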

\begin{proof}
We proceed to prove the statement by induction as follows. Here we set $I = J = \A$.

(1) $T = 2$: The result can be proved using Lemma \ref{lemma:linear_independence_two_nonlinear_gaussians} by setting in the proof, $u_i(\y = \z_1, \x = \z_0) = \pi_i(\z_1), i\in \A$ and $v_j(\z = \z_2, \y = \z_1) = p_j(\z_2 | \z_1), j \in \A$. 

(2) $T > 2$: Assume the statement holds for the joint distribution family when $T = \tau - 1$.
Note that we can write $p_{a_{1:\tau}}(\z_{1:\tau})$ as
$$p_{a_{1:\tau}}(\z_{1:\tau}) = p_{a_1, a_{2:\tau-1}}(\x_{1:\tau-1}) p_{a_{\tau}}(\z_{\tau} | \z_{\tau-1}).$$
Then the statement for $T = \tau$ can be proved using Lemma \ref{lemma:linear_independence_two_nonlinear_gaussians} by setting
$u_i(\y = \z_{\tau-1}, \x = \z_{1:\tau-2}) = p_{ a_{1:\tau-1}}(\z_{1:\tau-1}), i = a_{1:\tau-1}$, and $v_j(\z = \z_{\tau}, \y = \z_{\tau - 1}) = p_{a_{\tau}}(\z_{\tau} | \z_{\tau-1}), j=a_{\tau}$. Note that the family spanned with $p_{a_{1:\tau-1}}(\z_{1:\tau-1}), i = a_{1:\tau-1}$ satisfies (b1) and (b2) from $\Pi_{\A}$ and $\mathcal{P}_{\A}$ directly, and (b3) from the induction hypothesis.
\end{proof}

With the result above, one can construct identifiable Markov Switching Models as long as the initial and transition distributions are consistent with assumptions (c1)-(c3).

\subsection{Linear independence in the non-linear Gaussian case}

As described, in the final step of the proof we explore properties of the Gaussian transition and initial distribution families (Eqs. (\ref{eq:gaussian_transition_family}) and (\ref{eq:gaussian_initial_family}) respectively). The unique indexing assumption of the Gaussian transition family (Eq. (\ref{eq:unique_indexing_conditional_gaussian})) implies linear independence as shown below.
\begin{proposition}
\label{prop:gaussian_linear_independence}
Functions in $\mathcal{G}_A$ are linearly independent on variables $(\z_t, \z_{t-1})$ if the unique indexing assumption (Eq.~(\ref{eq:unique_indexing_conditional_gaussian})) holds.
\end{proposition}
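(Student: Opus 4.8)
The plan is to reduce the claim to the classical fact that multivariate Gaussian densities with pairwise distinct parameters are linearly independent, and then to use the analyticity of $\bm{m}(\cdot, a)$ and $\bm{\Sigma}(\cdot, a)$ (assumption (a2), which is in force throughout the non-linear Gaussian case) to upgrade the pointwise separation guaranteed by unique indexing (Eq.~(\ref{eq:unique_indexing_conditional_gaussian})) into simultaneous separation of all indices at a single conditioning value. Concretely, suppose toward linear independence under finite mixtures that for some finite $\A_0 \subset \A$ and coefficients $\{\gamma_a\}_{a \in \A_0}$ we have $\sum_{a \in \A_0} \gamma_a\, p_a(\z_t \mid \z_{t-1}) = 0$ for all $(\z_t, \z_{t-1}) \in \R^m \times \R^m$. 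Fixing $\z_{t-1} = \bm{\beta}$, this is a vanishing linear combination of the Gaussian densities $\z_t \mapsto \mathcal{N}(\z_t; \bm{m}(\bm{\beta}, a), \bm{\Sigma}(\bm{\beta}, a))$; if at that particular $\bm{\beta}$ all of these Gaussians were pairwise distinct, classical independence of Gaussians would force every $\gamma_a = 0$.

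First I would make precise the classical ingredient: a finite family of Gaussian densities whose parameter pairs $(\bm{\mu}, \bm{\Sigma})$ are pairwise distinct is linearly independent. The cleanest route passes to characteristic functions, where each density becomes $\bm{\omega} \mapsto \exp\!\big(i\bm{\omega}^\top\bm{\mu} - \tfrac{1}{2}\bm{\omega}^\top\bm{\Sigma}\bm{\omega}\big)$, and exponentials with pairwise distinct affine-quadratic exponents are linearly independent.

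The second and central step is to locate a conditioning value $\bm{\beta}^\ast$ at which $\{p_a(\cdot \mid \bm{\beta}^\ast) : a \in \A_0\}$ are all pairwise distinct. For each pair $a \neq a'$ in $\A_0$ I would define the analytic scalar function $D_{a,a'}(\bm{\beta}) = \| \bm{m}(\bm{\beta}, a) - \bm{m}(\bm{\beta}, a')\|^2 + \| \bm{\Sigma}(\bm{\beta}, a) - \bm{\Sigma}(\bm{\beta}, a')\|^2$, which vanishes exactly where the two conditional Gaussians coincide. Unique indexing (Eq.~(\ref{eq:unique_indexing_conditional_gaussian})) guarantees $D_{a,a'} \not\equiv 0$, and analyticity (assumption (a2)) guarantees that the zero set of a not-identically-zero real-analytic function has Lebesgue measure zero. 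Taking the union over the finitely many pairs in $\A_0$, the ``bad'' set where some two Gaussians collide is still null, so its complement has full measure; any $\bm{\beta}^\ast$ in it separates all indices at once, and Step~1 then yields $\gamma_a = 0$ for every $a \in \A_0$.

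The hard part, and the reason analyticity is not optional, is exactly this passage from pairwise to simultaneous separation. Unique indexing only asserts that each pair of indices is separated at some (possibly pair-dependent) $\bm{\beta}$; with merely continuous mean functions one can arrange that no single $\bm{\beta}$ separates all indices simultaneously and that a genuine cancellation such as $p_{a_1} - p_{a_2} + p_{a_3} - p_{a_4} \equiv 0$ survives, so the fixed-$\bm{\beta}$ grouping argument alone is insufficient. Analyticity rules this out by forcing every pairwise coincidence set $\{D_{a,a'} = 0\}$ to be null, and I would flag this as the place where assumption (a2) does the real work; the same device underlies the verification of the non-zero-measure linear-independence conditions (b3)--(b4) for the Gaussian families in the subsequent steps.
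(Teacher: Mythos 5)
Your proposal is correct (under assumption (a2)), but it is \emph{not} the paper's argument, and the difference matters. The paper's proof is by contradiction and never invokes analyticity: it fixes an arbitrary $\z_{t-1}$, views $\sum_{a\in\A_0}\gamma_a\,\mathcal{N}(\z_t;\bm{m}(\z_{t-1},a),\bm{\Sigma}(\z_{t-1},a))=0$ as a vanishing combination of Gaussians in $\z_t$, and concludes from \citet{yakowitz1968identifiability} (the same classical ingredient you sketch via characteristic functions) that $\bm{m}(\z_{t-1},a)=\bm{m}(\z_{t-1},a')$ and $\bm{\Sigma}(\z_{t-1},a)=\bm{\Sigma}(\z_{t-1},a')$ for \emph{all} $a\neq a'\in\A_0$ and \emph{every} $\z_{t-1}$, contradicting Eq.~(\ref{eq:unique_indexing_conditional_gaussian}). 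That inference is precisely the step you flagged as the crux: at a fixed $\z_{t-1}$, linear dependence only forces a partition of $\A_0$ into groups of coinciding Gaussians whose coefficients sum to zero within each group, and this partition may change with $\z_{t-1}$; it does not force all pairs to coincide. Your warning that a cancellation $p_{a_1}-p_{a_2}+p_{a_3}-p_{a_4}\equiv 0$ can survive under merely continuous means is not just a heuristic concern but a genuine counterexample to the proposition as literally stated: in dimension $m=1$ with common unit variance, the continuous means $m_1(\beta)=0$, $m_2(\beta)=\max(\beta,0)$, $m_3(\beta)=\beta$, $m_4(\beta)=\min(\beta,0)$ satisfy unique indexing (each pair differs somewhere), yet for $\beta\leq 0$ the first two and last two densities coincide pairwise, and for $\beta\geq 0$ the first/fourth and second/third coincide pairwise, so $p_1-p_2+p_3-p_4\equiv 0$ everywhere (and the construction is easily perturbed to make all means non-linear and non-analytic). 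Hence the statement is true only with the analyticity hypothesis you import, and your repair --- each pairwise coincidence set $\{D_{a,a'}=0\}$ is Lebesgue-null by \citet{mityagin2015zero} since $D_{a,a'}$ is analytic and not identically zero, a finite union of null sets is null, so a single $\bm{\beta}^*$ separates all indices simultaneously and the classical independence of pairwise-distinct Gaussians kills every coefficient --- is the correct one. It is also exactly the device the paper itself uses only later, when deriving condition (d3) from (a2) via the zero-set theorem in the concluding proof of Theorem~\ref{thm:identifiability_msm}; pulling it forward into this proposition, as you do, is what makes the proposition sound. In short: the paper's route would be shorter and assumption-free if it were valid, but it over-claims at the grouping step; your route costs the extra hypothesis (a2), which your counterexample shows is unavoidable and which is anyway in force wherever the proposition is applied.
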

\begin{proof}
Assume the statement is false, then there exists $\A_0 \subset \A$ and a set of non-zero values $\{ \gamma_a | a \in \A_0 \}$, such that 
$$\sum_{a \in \A_0} \gamma_a \mathcal{N}(\z_t; \bm{m}(\z_{t-1}, a), \bm{\Sigma}(\z_{t-1}, a)) = 0, \quad \forall \z_t, \z_{t-1} \in \mathbb{R}^m.$$
In particular, this equality holds for any $\z_{t-1} \in \mathbb{R}^m$, meaning that a weighted sum of Gaussian distributions (defined on $\z_t$) equals to zero. Note that \citet{yakowitz1968identifiability} proved that multivariate Gaussian distributions with different means and/or covariances are linearly independent. Therefore the equality above implies for any $\z_{t-1}$
$$\bm{m}(\z_{t-1}, a) = \bm{m}(\z_{t-1}, a') \quad \text{and} \quad \bm{\Sigma}(\z_{t-1}, a) = \bm{\Sigma}(\z_{t-1}, a') \quad \forall a, a' \in A_0, a \neq a',$$
a contradiction to the unique indexing assumption.
\end{proof}

We now draw some connections from the previous Gaussian families to assumptions (b1-b6) in Lemma \ref{lemma:linear_independence_two_nonlinear_gaussians}.
\begin{proposition}
\label{prop:conditional_gaussian_properties}
The conditional Gaussian distribution family $\mathcal{G}_{\A}$ (Eq.~(\ref{eq:gaussian_transition_family}), under the unique indexing assumption (Eq.~(\ref{eq:unique_indexing_conditional_gaussian}), satisfies assumptions (b1), (b2) and (b5) in Lemma \ref{lemma:linear_independence_two_nonlinear_gaussians}, if we define $\mathcal{V}_J := \mathcal{G}_{\A}, \z: = \z_t$ and $\y := \z_{t-1}$.
\end{proposition}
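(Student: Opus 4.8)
The plan is to verify the three assumptions one at a time under the identification $\mathcal{V}_J := \mathcal{G}_{\A}$, $\z := \z_t$ and $\y := \z_{t-1}$, so that $v_a(\z_t,\z_{t-1}) = \mathcal{N}(\z_t;\bm{m}(\z_{t-1},a),\bm{\Sigma}(\z_{t-1},a))$. Assumptions (b1) and (b2) are essentially immediate, while (b5) carries all the content, so I would dispatch the first two quickly and concentrate on the last.

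For (b1), each $v_a$ is a Gaussian density with a positive-definite covariance $\bm{\Sigma}(\z_{t-1},a)$ (this positive-definiteness being implicit in $v_a$ being a valid density), and such a density is strictly positive on all of $\mathbb{R}^m$; hence $v_a(\z_t,\z_{t-1}) > 0$ everywhere. For (b2), the reverse implication is trivial, and the forward implication is exactly the content of the unique indexing assumption Eq.~(\ref{eq:unique_indexing_conditional_gaussian}): if $a \neq a'$ then there exists $\z_{t-1}$ with $\bm{m}(\z_{t-1},a)\neq\bm{m}(\z_{t-1},a')$ or $\bm{\Sigma}(\z_{t-1},a)\neq\bm{\Sigma}(\z_{t-1},a')$, and two Gaussians with distinct mean or covariance differ at some $\z_t$, so $v_a(\z_t,\z_{t-1})\neq v_{a'}(\z_t,\z_{t-1})$ for that pair.

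The substance is (b5). Fix any $\bm{\beta}\in\mathbb{R}^m$, any finite $J_0\subset\A$, and any non-zero measure subset $\mathcal{Z}\subset\mathbb{R}^m$, and suppose $\{v_j(\z_t,\bm{\beta}) \mid j\in J_0\}$ is linearly dependent on $\mathcal{Z}$, i.e.\ there exist coefficients $\{\gamma_j\}$, not all zero, with $g(\z_t) := \sum_{j\in J_0}\gamma_j\,\mathcal{N}(\z_t;\bm{m}(\bm{\beta},j),\bm{\Sigma}(\bm{\beta},j)) = 0$ for all $\z_t\in\mathcal{Z}$. The key observation is that, with $\bm{\beta}$ fixed, each summand is a fixed Gaussian density in $\z_t$, which is real-analytic on $\mathbb{R}^m$; hence $g$ is real-analytic. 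I would then invoke the standard fact that a real-analytic function on $\mathbb{R}^m$ vanishing on a set of positive Lebesgue measure vanishes identically, which upgrades the dependence on $\mathcal{Z}$ to $g\equiv 0$ on all of $\mathbb{R}^m$. At that point the finite family $\{\mathcal{N}(\cdot;\bm{m}(\bm{\beta},j),\bm{\Sigma}(\bm{\beta},j))\}_{j\in J_0}$ is linearly dependent on the whole space, so by the Yakowitz--Spragins result already used in Prop.~\ref{prop:gaussian_linear_independence} (distinct-parameter Gaussians are linearly independent), the parameter pairs cannot all be distinct: there exist $j\neq j'\in J_0$ with $\bm{m}(\bm{\beta},j)=\bm{m}(\bm{\beta},j')$ and $\bm{\Sigma}(\bm{\beta},j)=\bm{\Sigma}(\bm{\beta},j')$. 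For these indices $v_j(\z_t,\bm{\beta})=v_{j'}(\z_t,\bm{\beta})$ for all $\z_t\in\mathbb{R}^m$, which is exactly the conclusion required by (b5).

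The one genuinely non-routine step, and the step I expect to be the main obstacle, is the analyticity argument: moving from ``linearly dependent on the positive-measure set $\mathcal{Z}$'' to ``linearly dependent on all of $\mathbb{R}^m$''. Continuity alone is insufficient here, since a continuous function can vanish on a positive-measure set without being identically zero, and $\mathcal{Z}$ need not contain any open ball, so I must rely on the full strength of the real-analytic zero-set lemma rather than an analytic-continuation-from-an-open-set shortcut. Once that bridge is in place, the remainder reduces to the linear independence of distinct Gaussians, which the paper already uses.
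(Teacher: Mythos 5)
Your proof is correct and takes essentially the same approach as the paper, whose entire argument here is the one-line remark that Gaussian densities are analytic in $\z_t$: you have simply made explicit the chain that the paper leaves implicit, namely analyticity of the finite linear combination, the real-analytic zero-set lemma (which the paper itself invokes via \citet{mityagin2015zero} when concluding the proof of Theorem~\ref{thm:identifiability_msm}), and the Yakowitz--Spragins linear independence of Gaussians with distinct parameters already used in Proposition~\ref{prop:gaussian_linear_independence}. Your observation that continuity alone would not suffice, since $\mathcal{Z}$ need not contain an open ball, is exactly the right reason the analyticity hypothesis is doing real work.
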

\begin{proposition}
\label{prop:marginal_gaussian_properties}
The initial Gaussian distribution family $\mathcal{I}_{\A}$ (Eq.~(\ref{eq:gaussian_initial_family}), under the unique indexing assumption (Eq.~(\ref{eq:unique_indexing_marginal_gaussian}), satisfies assumptions (b1), (b2) and (b3) in Lemma \ref{lemma:linear_independence_two_nonlinear_gaussians}, if we define $\mathcal{U}_I := \mathcal{I}_{\A}, \y: = \z_1$ and $\x = \mathcal{X} = \emptyset$.
\end{proposition}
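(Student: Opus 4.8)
The plan is to verify each of the three assumptions (b1), (b2), (b3) for the family $\mathcal{U}_I := \mathcal{I}_{\A}$, where $u_a(\y = \z_1) = \mathcal{N}(\z_1; \bm{\mu}(a), \bm{\Sigma}_1(a))$ and the $\x$-argument drops out since $\mathcal{X} = \emptyset$. The first two are immediate. For (b1), every Gaussian density with positive-definite covariance is strictly positive on all of $\R^m$, so $u_a(\z_1) > 0$ everywhere. For (b2), I would note that two Gaussian densities coincide as functions on $\R^m$ if and only if their means and covariances agree; hence for $a \neq a'$ the two functions differ at some point precisely when $\bm{\mu}(a) \neq \bm{\mu}(a')$ or $\bm{\Sigma}_1(a) \neq \bm{\Sigma}_1(a')$, which is exactly the unique indexing assumption Eq.~(\ref{eq:unique_indexing_marginal_gaussian}), giving both directions of the equivalence required by (b2).

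The substantive step is (b3): for every non-zero measure subset $\mathcal{Y} \subset \R^m$, the functions $\{u_a(\z_1) \mid a \in \A_0\}$ must be linearly independent on $\mathcal{Y}$ for every finite $\A_0 \subset \A$. I would argue by contradiction: suppose there exist coefficients $\{\gamma_a\}_{a \in \A_0}$, not all zero, with $\sum_{a \in \A_0} \gamma_a \mathcal{N}(\z_1; \bm{\mu}(a), \bm{\Sigma}_1(a)) = 0$ for all $\z_1 \in \mathcal{Y}$. The key observation is that each Gaussian density is real-analytic in $\z_1$ on all of $\R^m$, so the finite combination $g(\z_1) := \sum_{a \in \A_0} \gamma_a \mathcal{N}(\z_1; \bm{\mu}(a), \bm{\Sigma}_1(a))$ is itself real-analytic. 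Since the zero set of a non-trivial real-analytic function on $\R^m$ has Lebesgue measure zero, vanishing on the positive-measure set $\mathcal{Y}$ forces $g \equiv 0$ on all of $\R^m$. At that point the classical fact that Gaussians with distinct $(\bm{\mu}, \bm{\Sigma})$ are linearly independent \citep{yakowitz1968identifiability} (the same fact invoked in Prop.~\ref{prop:gaussian_linear_independence}), combined with (b2) ensuring distinct indices yield distinct parameters, gives $\gamma_a = 0$ for all $a \in \A_0$, contradicting the assumption.

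I expect the main obstacle to be precisely this measure-theoretic upgrade in (b3) from ``$g$ vanishes on $\mathcal{Y}$'' to ``$g$ vanishes on all of $\R^m$,'' which is where real-analyticity is essential: linear independence of Gaussians is classically stated on the full domain, whereas Lemma~\ref{lemma:linear_independence_two_nonlinear_gaussians} demands independence when restricted to an arbitrary positive-measure subset. Once the analytic-continuation argument closes this gap, the remaining pieces reduce to standard properties of Gaussian densities. I would emphasise that the analyticity of $g$ hinges only on the means $\bm{\mu}(a)$ and covariances $\bm{\Sigma}_1(a)$ being constant for each fixed $a$, so no additional regularity assumption on the indexing is needed here, in contrast to the transition family where analyticity in $\z_{t-1}$ must be assumed separately via (a2).
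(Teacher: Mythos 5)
Your proof is correct and takes essentially the same approach as the paper, which treats (b1) and (b2) as immediate consequences of Gaussian positivity and unique indexing, and justifies (b3) by exactly the analyticity argument you give: a non-trivial finite combination of Gaussian densities is real-analytic, hence cannot vanish on a positive-measure set without vanishing identically, at which point the classical linear independence of distinct Gaussians \citep{yakowitz1968identifiability} yields the contradiction. The paper compresses this into the one-line remark that Gaussian densities are analytic and that ``similar ideas apply to show that $\mathcal{I}_{\A}$ satisfies (b3)''; your write-up simply makes that sketch explicit.
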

To see why $\mathcal{G}_{\A}$ satisfies (b5), notice Gaussian densities are analytic in $\z_t$. Similar ideas apply to show that $\mathcal{I}_{\A}$ satisfies (b3). With the previous results, we can rewrite the previous result for the non-linear Gaussian case.

\begin{theorem}
\label{thm:linear_independence_nonlinear_gaussian}
Define the following joint distribution family under the non-linear Gaussian model
\begin{equation}\label{eq:joint_distrib_fam}
\mathcal{P}_{\A}^T = \Bigg\{ p_{a_1, a_{2:T}}(\z_{1:T}) = p_{a_1}(\z_1) \prod_{t=2}^T p_{a_t}(\z_t | \z_{t-1}),\\[-.5em]
a_t\in \A,\quad p_{a_1} \in \mathcal{I}_{\A}, \quad p_{a_t} \in \mathcal{G}_{\A},\quad t = 2, ..., T \Bigg\},
\end{equation}
with $\mathcal{G}_{\A}$, $\mathcal{I}_{\A}$ defined by Eqs. (\ref{eq:gaussian_transition_family}), (\ref{eq:gaussian_initial_family}) respectively. Assume:
\begin{itemize}
    \item[(d1)] Unique indexing for $\mathcal{G}_{\A}$ and $\mathcal{I}_{\A}$: Eqs.~(\ref{eq:unique_indexing_conditional_gaussian}), ~(\ref{eq:unique_indexing_marginal_gaussian}) hold;
    \item[(d2)] Continuity for the conditioning input: distributions in $\mathcal{G}_{\A}$ are continuous w.r.t.~$\z_{t-1} \in \mathbb{R}^m$;
    \item[(d3)] Zero-measure intersection in certain region: there exists a non-zero measure set $\mathcal{X}_0 \subset \mathbb{R}^m$ s.t. $\{ \z_{t-1} \in \mathcal{X}_0 | \bm{m}(\z_{t-1}, a) = \bm{m}(\z_{t-1}, a'), \bm{\Sigma}(\z_{t-1}, a) = \bm{\Sigma}(\z_{t-1}, a') \}$ has zero measure, for any $a \neq a'$;
\end{itemize}
Then, the joint distribution family contains linearly independent distributions for $(\z_{1:T-1}, \z_T) \in \mathbb{R}^{(T-1)m} \times \mathbb{R}^m$.
\end{theorem}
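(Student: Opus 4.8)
The plan is to reduce this statement to the non-parametric result already obtained in Theorem~\ref{thm:linear_independence_joint_non_parametric}: it suffices to check that the Gaussian initial family $\mathcal{I}_{\A}$ (in the role of $\Pi_{\A}$, identified via $\mathcal{U}_I := \mathcal{I}_{\A}$, $\y := \z_1$, $\x = \mathcal{X} = \emptyset$) and the Gaussian transition family $\mathcal{G}_{\A}$ (in the role of $\mathcal{P}_{\A}$, identified via $\mathcal{V}_J := \mathcal{G}_{\A}$, $\z := \z_t$, $\y := \z_{t-1}$) jointly satisfy conditions (c1)--(c3), i.e.\ the underlying assumptions (b1)--(b6) of Lemma~\ref{lemma:linear_independence_two_nonlinear_gaussians}. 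Once these are in place, the induction carried out in Theorem~\ref{thm:linear_independence_joint_non_parametric} yields linear independence of $\mathcal{P}_{\A}^T$ on $(\z_{1:T-1},\z_T)\in\R^{(T-1)m}\times\mathcal{Z}$ for every $T\geq 2$ and every non-zero measure $\mathcal{Z}\subset\R^m$, and the stated conclusion follows by taking $\mathcal{Z}=\R^m$.

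Most of the required conditions are already available from the preceding propositions. Positivity (b1) and unique indexing (b2) for both families are immediate from the Gaussian form together with assumption (d1), and are recorded in Propositions~\ref{prop:marginal_gaussian_properties} and~\ref{prop:conditional_gaussian_properties}. Condition (b3) for $\mathcal{I}_{\A}$ is exactly Proposition~\ref{prop:marginal_gaussian_properties}: the initial densities are constant-parameter Gaussians that are distinct whenever their indices differ, so analyticity forces linear independence on every non-zero measure subset of $\R^m$. Condition (b5) for $\mathcal{G}_{\A}$ is Proposition~\ref{prop:conditional_gaussian_properties}, again via analyticity of Gaussian densities in $\z_t$, and condition (b6) is precisely the continuity assumption (d2).

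The one remaining, and main, step is to verify condition (b4) for $\mathcal{G}_{\A}$, which is where assumption (d3) enters. I would set $\mathcal{Y} := \mathcal{X}_0$, the non-zero measure set supplied by (d3). Fix arbitrary non-zero measure subsets $\mathcal{Y}'\subset\mathcal{X}_0$ and $\mathcal{Z}\subset\R^m$, a finite $\A_0\subset\A$, and coefficients $\{\gamma_a\}$ with $\sum_{a\in\A_0}\gamma_a\,p_a(\z_t\mid\z_{t-1})=0$ for all $(\z_t,\z_{t-1})\in\mathcal{Z}\times\mathcal{Y}'$. The argument has two moves. First, for each fixed $\z_{t-1}\in\mathcal{Y}'$ the left-hand side is a finite combination of Gaussians, hence real-analytic in $\z_t$; since the zero set of a nonzero real-analytic function has measure zero, vanishing on the positive-measure set $\mathcal{Z}$ forces vanishing on all of $\R^m$, giving $\sum_{a\in\A_0}\gamma_a\,\N(\z_t;\bm{m}(\z_{t-1},a),\bm{\Sigma}(\z_{t-1},a))=0$ for every $\z_t\in\R^m$ and every $\z_{t-1}\in\mathcal{Y}'$. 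Second, by (d3) the set of $\z_{t-1}$ on which any two indices $a\neq a'\in\A_0$ induce identical mean and covariance has measure zero inside $\mathcal{X}_0$; a finite union over the pairs in $\A_0$ is still measure zero, so one may choose $\z_{t-1}^*\in\mathcal{Y}'$ at which the $|\A_0|$ Gaussians $\N(\,\cdot\,;\bm{m}(\z_{t-1}^*,a),\bm{\Sigma}(\z_{t-1}^*,a))$ have pairwise distinct mean/covariance pairs. Distinct Gaussians are linearly independent (the Yakowitz fact used in Proposition~\ref{prop:gaussian_linear_independence}), so evaluating at $\z_{t-1}^*$ forces $\gamma_a=0$ for all $a\in\A_0$. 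This establishes (b4).

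The crux is this last step, and it is the only place where the Gaussian structure is genuinely exploited beyond the earlier propositions: analyticity in $\z_t$ upgrades the constraint sampled on $\mathcal{Z}$ to a global identity on $\R^m$, and (d3) guarantees a single conditioning value $\z_{t-1}^*$ that simultaneously separates all finitely many mixture components. I expect the only delicate bookkeeping to be (i) keeping the exceptional sets measure zero after the finite union over index pairs, which is routine since $\A_0$ is finite, and (ii) confirming that the choice $\mathcal{Y}=\mathcal{X}_0$ is compatible with the way Theorem~\ref{thm:linear_independence_joint_non_parametric} threads the existentially quantified non-zero measure set through its induction over $T$.
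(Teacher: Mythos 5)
Your proposal is correct and follows essentially the same route as the paper's proof: verify assumptions (b1)--(b3) and (b5) via Propositions \ref{prop:marginal_gaussian_properties} and \ref{prop:conditional_gaussian_properties}, identify (b6) with (d2), establish (d3)$\implies$(b4) with $\mathcal{Y}:=\mathcal{X}_0$, and then invoke Theorem \ref{thm:linear_independence_joint_non_parametric}. Your two-step verification of (b4) --- using analyticity in $\z_t$ to upgrade vanishing on a positive-measure $\mathcal{Z}$ to vanishing on all of $\R^m$, then picking a separating point $\z_{t-1}^*\in\mathcal{Y}'$ outside the finite union of zero-measure moment-intersection sets so that pairwise-distinct Gaussians force $\gamma_a=0$ --- is in fact a cleaner and more explicit rendering of the paper's rather terse argument for this implication.
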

\begin{proof}
Note that assumptions (b1) - (b3) and (b5) are satisfied due to Propositions \ref{prop:conditional_gaussian_properties} and \ref{prop:marginal_gaussian_properties}, and assumptions (b6) and (d2) are equivalent, and assumption (b4) holds due to assumption (d3). 
To show (d3)$\implies$(b4), We first define $\mathcal{V}_J := \mathcal{G}_{\A}$, $\z:=\z_t$, and $\y:=\z_{t-1}$ from Prop. \ref{prop:conditional_gaussian_properties}. From (d3), $\mathcal{Y}:=\mathcal{X}_0$ and note that $\mathcal{V}_J$ contains linear independent functions on $(\z, \y)\in \mathcal{M}\subset\mathcal{Z}\times\mathcal{Y}$ if $\mathcal{M}\neq\mathcal{Z}\times\mathcal{D}$, where $\mathcal{D}$ denotes the set where intersection of moments happen within $\mathcal{Y}$. Also by (d3), $\mathcal{D}$ has measure zero and thus, (b4) holds since $\mathcal{Y}'$ is a non-zero measure set.

Then, the statement holds by Theorem \ref{thm:linear_independence_joint_non_parametric}.
\end{proof}

\subsection{Concluding the proof}
Below we formally state the proof for Theorem \ref{thm:identifiability_msm} by further assuming parametrisations of the Gaussian moments via analytic functions, i.e. assumption (a2).
\begin{proof}
(a1) and (d1) are equivalent. Following (a2), let $\bm{m}(\cdot, a): \mathbb{R}^m \rightarrow \mathbb{R}^m$ be a multivariate analytic function, which allows a multivariate Taylor expansion. The corresponding Taylor expansion of $\bm{m}(\cdot, a)$ implies (d2). Similar logic applies to $\bm{\Sigma}(\cdot, a)$. To show (d3), we note for any $a\neq a'$ the set of intersection of moments, i.e. $\{\z\in\mathbb{R}^m | \bm{m}(\z,a)=\bm{m}(\z,a'), \bm{\Sigma}(\z_{t-1}, a) = \bm{\Sigma}(\z_{t-1}, a') \}$ can be separated as the intersection of the sets $\{\z\in\mathbb{R}^m | \bm{m}(\x,a)=\bm{m}(\z,a')\}$ and $\{\z\in\mathbb{R}^m | \bm{\Sigma}(\z_{t-1}, a) = \bm{\Sigma}(\z_{t-1}, a')\}$. Wlog, the set $\{\z\in\mathbb{R}^m | \bm{m}(\z,a)=\bm{m}(\z,a')\}$ is the zero set of an analytic function $\bm{f}:= \bm{m}(\cdot,a)-\bm{m}(\cdot,a')$. Proposition 0 in \citet{mityagin2015zero} shows that the zero set of a real analytic function on $\mathbb{R}^m$ has zero measure unless $\bm{f}$ is identically zero. Hence, the intersection of moments has zero measure from our premise of unique indexing. 

Since (d1-d3) are satisfied, by Theorem \ref{thm:linear_independence_nonlinear_gaussian} we have linear independence of the joint distribution family, which by Theorem \ref{thm:identifiability_conditional} implies identifiability of the MSM in the sense of Def. \ref{def:identifiability_cond_transition}.
\end{proof}

\section{Properties of the MSM}\label{app:msm_properties}

In this section, we present some results involving MSMs for convenience. First, we start with the result on first-order stationary Markov chains presented in section \ref{sec:markov_switching_models}.

\begin{corollary}
    Consider an identifiable MSM from Def. \ref{def:identifiability_cond_transition}, where the prior distribution of the states $p(\s_{1:T})$ follows a first-order stationary Markov chain, i.e $p(\s_{1:T})=\pi_{s_1}Q_{s_1,s_2}\dots Q_{s_{T-1},s_{T}}$, where $\bm{\pi}$ denotes the initial distribution: $p(s_1=k)=\pi_k$, and $Q$ denotes the transition matrix: $p(s_t=k|s_{t-1}=l)=Q_{l,k}$. 
    Then, $\bm{\pi}$ and $Q$ are identifiable up to permutations.
\end{corollary}
\begin{proof}
    From Def. \ref{def:identifiability_cond_transition}, we have $K=\hat{K}$ and for every $1\leq i\leq K^T$ there is some $1\leq j \leq \hat{K}^T$ such that $c_i=\hat{c}_j$. Now writing $\s_{1:T} = (s^i_1, ..., s^i_T) = \varphi(i)$ and $\hat{\s}_{1:T} = (\hat{s}^j_1, ..., \hat{s}^j_T) = \varphi(j)$, we have
    $$c_i=\pi_{s^i_1}Q_{s^i_1,s^i_2}\dots Q_{s^i_{T-1},s^i_{T}}=\hat{\pi}_{\hat{s}^j_1}\hat{Q}_{\hat{s}^j_1,\hat{s}^j_2}\dots \hat{Q}_{\hat{s}^j_{T-1},\hat{s}^j_{T}} = \hat{c}_j.$$
    Since the joint distributions are equal on $\s_{1:T}$, they must also be equal on $\s_{1:T-1}$. Therefore, we have $Q_{s^i_{T-1},s^i_{T}}=\hat{Q}_{\hat{s}^j_{T-1},\hat{s}^j_{T}}$. Similar logic applies to $t\geq 1$, which also implies $\pi_{s^i_1} = \hat{\pi}_{\hat{s}^j_1}$.

    For $t=1$, the above implies that for each $i\in \{1, ..., K \}$, there exists some $j\in \{1, ..., K \}$, such that $\pi_{i} = \hat{\pi}_j$. This indicates permutation equivalence. We denote $\sigma(\cdot)$ as such permutation function, so that for all $i \in \{1, ..., K \}$ and the corresponding $j$,
    $\pi_i = \tilde{\pi}_{j} = \pi_{\sigma(j)}$.

    For $t>1$, the previous implication gives us that for $i,j\in\{1,\dots,K\}, \exists k,l\in\{1,\dots,K\}$ such that
    $Q_{i,j} = \hat{Q}_{k,l}$. Following the previous logic, we can define permutations that match $Q$ and $\hat{Q}$: $i=\sigma_1(k), j=\sigma_2(l)$. We observe from the second requirement in Def. \ref{def:identifiability_cond_transition} that if $i=j$, then $k=l$ and since $\sigma_1(k)=\sigma_2(l)$, we have that the permutations $\sigma_1(\cdot)$ and $\sigma_2(\cdot)$ must be equal. Therefore, we have
    $Q_{i,j} = \hat{Q}_{k,l} = Q_{\sigma_1(k),\sigma_1(l)}$.
    
    Finally, we can use the second requirement in Def. \ref{def:identifiability_cond_transition} to see that $\sigma(\cdot)$ and $\sigma_1(\cdot)$ must be equal. 
\end{proof}

\begin{proposition}\label{prop:msm_affine_transform}
    The joint distribution of the Markov Switching Model with Gaussian analytic transitions and Gaussian initial distributions is closed under factored invertible affine transformations, $\z'_{1:T} = \mathcal{H}(\z_{1:T})$: $\z_{t}' = A\z_{t} + \mathbf{b}$, $1\leq t \leq T$.
\end{proposition}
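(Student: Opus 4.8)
The plan is to compute the pushforward density $\mathcal{H}_{\#}p$ directly via the change-of-variables formula and show that it again factorises as a first-order MSM with Gaussian analytic transitions and Gaussian initial distributions. Because $\mathcal{H}$ is factored with each block given by the invertible affine map $h(\z) = A\z + \Bb$, its inverse acts componentwise as $\z_t = A^{-1}(\z_t' - \Bb)$, and the Jacobian of $\mathcal{H}^{-1}$ is block-diagonal with determinant $|\det A|^{-T}$. First I would substitute $\z_{1:T} = \mathcal{H}^{-1}(\z'_{1:T})$ into $p(\z_{1:T}) = \sum_i c_i p_{a_1^i}(\z_1)\prod_{t=2}^T p_{a_t^i}(\z_t\svert\z_{t-1})$ and distribute the global factor $|\det A|^{-T}$ as one factor of $|\det A|^{-1}$ per Gaussian (the initial one and each of the $T-1$ transitions), so that the mixture weights $c_i$ and the discrete index structure $a_{1:T}^i$ are left untouched.

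The core step is then to transform each Gaussian factor using the elementary identity that if $\z\sim\N(\bm{\mu},\bm{\Sigma})$ then $A\z+\Bb\sim\N(A\bm{\mu}+\Bb, A\bm{\Sigma}A^\top)$, which already absorbs the per-step Jacobian $|\det A|^{-1}$ because $|\det(A\bm{\Sigma}A^\top)|^{1/2} = |\det A|\,|\det\bm{\Sigma}|^{1/2}$. For the initial factor this yields $\N(\z_1'; A\bm{\mu}(a)+\Bb, A\bm{\Sigma}_1(a)A^\top)$. For each transition factor, the conditioning variable $\z_{t-1}$ is merely relabelled as $A^{-1}(\z'_{t-1}-\Bb)$ while the density transforms in the variable $\z_t$, so the same identity gives the conditional Gaussian $\N\big(\z_t'; A\bm{m}(A^{-1}(\z'_{t-1}-\Bb),a)+\Bb,\ A\bm{\Sigma}(A^{-1}(\z'_{t-1}-\Bb),a)A^\top\big)$, i.e.\ exactly the transformed mean and covariance functions of Eq.~(\ref{eq:transition_means}).

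It then remains to verify that the transformed families still lie in $\mathcal{I}_\A$ and $\mathcal{G}_\A$. The new mean $\bm{m}'(\cdot,a)=A\bm{m}(A^{-1}(\cdot-\Bb),a)+\Bb$ and covariance $\bm{\Sigma}'(\cdot,a)=A\bm{\Sigma}(A^{-1}(\cdot-\Bb),a)A^\top$ are compositions of an analytic function with an invertible affine map, hence analytic, preserving assumption (a2); and since $A$ is invertible, the map $(\bm{m},\bm{\Sigma})\mapsto(A\bm{m}+\Bb, A\bm{\Sigma}A^\top)$ is injective, so distinct indices remain distinct Gaussians and the unique-indexing assumption (a1) is preserved. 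Collecting these observations, $\mathcal{H}_{\#}p$ has precisely the form of a member of $\mathcal{M}^T(\mathcal{I}_\A,\mathcal{G}_\A)$, establishing closure.

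The main obstacle is bookkeeping rather than conceptual: one must check that the single global Jacobian $|\det A|^{-T}$ splits cleanly across the first-order Markov factorisation and that each per-step factor $|\det A|^{-1}$ is exactly the normalisation correction that turns the shifted Gaussian into the correctly normalised transformed Gaussian. The only genuine subtlety is handling the transition terms correctly: they are \emph{conditional} densities, so the change of variables must be applied in the conditioned variable $\z_t$ with $\z_{t-1}$ held as a relabelled parameter, rather than as a joint transformation of $(\z_{t-1},\z_t)$.
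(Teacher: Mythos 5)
Your proposal is correct and takes essentially the same route as the paper's own proof: change of variables through the block-diagonal (factored) Jacobian, splitting the global factor $|\det A|^{-T}$ into one $|\det A|^{-1}$ per Gaussian factor, and applying the standard affine-transformation identity to each factor---with the conditioning variable merely relabelled as $A^{-1}(\z'_{t-1}-\Bb)$---to arrive at exactly the transformed initial and transition Gaussians of Eq.~(\ref{eq:transition_means}). Your closing check that the transformed mean and covariance functions remain analytic and uniquely indexed (so the result genuinely stays inside $\mathcal{M}^T(\mathcal{I}_{\A},\mathcal{G}_{\A})$) is a small strengthening that the paper leaves implicit.
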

\begin{proof}
Consider the following affine transformation $\z_{t}' = A\z_{t} + \mathbf{b}$ for $1\leq t \leq T$, and the joint distribution of a Markov Switching Model with $T$ timesteps
$$p(\z_{1:T}) = \sum_{\s_{1:T}} p(\s_{1:T}) p(\z_1|s_1) \prod_{t=2}^T p(\z_t|\z_{t-1},s_t),$$
where we denote the initial distribution as $p(\z_1|s_1=i) = \N(\z_1;\bm{\mu}(i), \bm{\Sigma}_{1}(i))$ and the transition distribution as $p(\z_t|\z_{t-1}, s_t=i) = \N(\z_t;\m(\z_{t-1},i), \bm{\Sigma}(\z_{t-1},i))$. We need to show that the distribution still consists of Gaussian initial distributions and Gaussian analytic transitions. Let us consider the change of variables rule, which we apply to $p(\z_{1:T})$
$$p_{\z_{1:T}'}(\z_{1:T}') = \frac{p_{\z_{1:T}}\left(A^{-1}_{1:T}\left(\z_{1:T}' - \Bb_{1:T} \right)\right)}{\det (A_{1:T})},$$
where we use the subscript $\z_{1:T}'$ to indicate the probability distribution in terms of $\z_{1:T}'$, but we drop it for simplicity. Note that the inverse of a block diagonal matrix can be computed as the inverse of each block, and we use similar properties for the determinant, i.e. $\det (A_{1:T}) = \det(A)\cdots\det(A)$. The distribution in terms of the transformed variable is expressed as follows:
\begin{align*}
    p(\z_{1:T}') & = \sum_{s_{1:T}} p(s_{1:T}) \frac{p\left(A^{-1}\left(\z_1' - \Bb \right)|s_1\right)}{\det(A)}\prod_{t=2}^T \frac{p\left(A^{-1}\left(\z_t' - \Bb \right)|\left(A^{-1}\left(\z_{t-1}' - \Bb \right)\right),s_t\right)}{\det(A)}\\
    & = \sum_{i_1,\dots,i_T} p\left(s_{1:T} = \{i_1, \dots, i_T\}\right) \N\left(\z_1'; A\bm{\mu}(i_1) + \Bb, A\bm{\Sigma}_{1}(i_1)A^T\right)\\
    &\quad \prod_{t=2}^T \frac{1}{\sqrt{\left(2\pi\right)^m \det\left(A\bm{\Sigma}\left( 
A^{-1}\left(\z_{t-1}' - \Bb \right) ,i_t\right)A^T\right)}}\\
&\quad \exp\Bigg(-\frac{1}{2}\Big(A^{-1}\left(\z_t' - \Bb \right) - \bm{m}\left(A^{-1}\left(\z_{t-1}' - \Bb\right),i_t\right)\Big)^T \\
    &\quad  \bm{\Sigma}\Big( 
A^{-1}\left(\z_{t-1}' - \Bb \right) ,i_t\Big)^{-1} \Big(A^{-1}\left(\z_t' - \Bb \right) - \bm{m}\left(A^{-1}\left(\z_{t-1}' - \Bb\right),i_t\right)\Big) \Bigg)\\
& = \sum_{i_1,\dots,i_T} p\left(s_{1:T} = \{i_1, \dots, i_T\}\right) \N\left(\z_1'; A\bm{\mu}(i_1) + \Bb, A\bm{\Sigma}_{1}(i_1)A^T\right)\\
    &\quad \prod_{t=2}^T \frac{1}{\sqrt{\left(2\pi\right)^m \det\left(A\bm{\Sigma}\left( 
A^{-1}\left(\z_{t-1}' - \Bb \right) ,i_t\right)A^T\right)}}\\
&\quad\exp\Bigg(-\frac{1}{2}\Big(\z_t' - A\bm{m}\left(A^{-1}\left(\z_{t-1}' - \Bb\right),i_t\right)-\Bb\Big)^T \\
    &\quad  A^{-1}\bm{\Sigma}\Big( 
A^{-1}\left(\z_{t-1}' - \Bb \right) ,i_t\Big)^{-1} A^{-T} \Big(\z_t' - A\bm{m}\left(A^{-1}\left(\z_{t-1}' - \Bb\right),i_t\right) - \Bb\Big) \Bigg)\\
& = \sum_{i_1,\dots,i_T} p\left(s_{1:T} = \{i_1, \dots, i_T\}\right) \N\left(\z_1'; A\bm{\mu}(i_1) + \Bb, A\bm{\Sigma}_{1}(i_1)A^T\right)\\
    &\quad \prod_{t=2}^T \N\left(\z_t'; A\bm{m}\left(A^{-1}\left(\z_{t-1}' - \Bb\right),i_t\right) + \Bb, A\bm{\Sigma}\Big( 
A^{-1}\left(\z_{t-1}' - \Bb \right) ,i_t\Big)A^T\right)
\end{align*}
We observe that the resulting distribution is a Markov Switching Model with changes in the Gaussian initial and transition distributions, where the analytic transitions are transformed as follows: $bm{m}'(\z_{t-1}',i_t) = A\bm{m}\left(A^{-1}\left(\z_{t-1}' - \Bb\right),i_t\right) + \Bb$, and $\bm{\Sigma}'(\z_{t-1}',i_t) = A\bm{\Sigma}\Big( 
A^{-1}\left(\z_{t-1}' - \Bb \right) ,i_t\Big)A^T$ for any $i_t\in\{1,\dots,K\}$.
\end{proof}

\section{Proof of SDS identifiability}\label{app:sds}

\subsection{Preliminaries}

We need to introduce some definitions and results that will be used in the proof. These have been previously defined in \citet{kivva2022identifiability}.

\begin{definition}
    Let $D_0\subseteq D\subseteq \R^n$ be open sets. Let $f_0:D_0\rightarrow\R$. We say that an analytic function $f:D\rightarrow\R$ is an analytic continuation of $f_0$ onto $D$ if $f(\x) = f_0(\x)$ for every $\x\in D_0$.
\end{definition}
\begin{definition}
    Let $\x_0\in\R^m$ and $\delta>0$. Let $p:B(\x_0,\delta)\rightarrow\R$. Define
    $$\text{Ext}(p):\R^m\rightarrow\R$$
    to be the unique analytic continuation of $p$ on the entire space $\R^m$ if such a continuation exists, and to be 0 otherwise.
\end{definition}
\begin{definition}
    Let $D_0\subset D$ and $p:D\rightarrow \R$ be a function. We define $p|_{D_0}:D\rightarrow \R$ to be a restriction of $p$ to $D_0$, namely a function that satisfies $p|_{D_0}(\x)=p(\x)$ for every $\x\in D_0$.
\end{definition}
\begin{definition}
    Let $f:\R^m\rightarrow\R^n$ be a piece-wise affine function. We say that a point $\x\in f(\R^m)\subseteq\R^n$ is generic with respect to $f$ if the pre-image $f^{-1}(\{\x\})$ is finite and there exists $\delta>0$, such that $f:B(\z,\delta)\rightarrow\R^{n}$ is affine for every $\z\in f^{-1}(\{\x\})$.
\end{definition}
\begin{lemma}\label{lemma:generic_point}
    If $f:\R^m\rightarrow\R^n$ is a piece-wise affine function such that $\{\x\in\R^n:|f^{-1}(\{\x\})|>1\}\subseteq f(\R^m)$ has measure zero with respect to the Lebesgue measure on $f(\R^m)$, then $\dim(f(\R^m))=m$ and almost every point in $f(\R^m)$ is generic with respect to $f$.
\end{lemma}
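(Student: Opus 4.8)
The plan is to exploit the polyhedral structure of $f$ directly. Since $f$ is piece-wise affine, I would first fix a finite partition $\R^m = \bigcup_{i=1}^N \overline{P_i}$ into closed polyhedral cells with disjoint interiors, on each of which $f$ agrees with an affine map $\x \mapsto A_i\x + \Bb_i$. The entire argument then reduces to tracking the ranks $\mathrm{rank}(A_i)$ and the lower-dimensional ``seams'' $S = \bigcup_i \partial P_i$, which form a finite union of polyhedra of dimension at most $m-1$ and hence are Lebesgue-null in $\R^m$.

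For the dimension claim I would argue by contradiction: suppose $\dim f(\R^m) = d < m$. First observe that no full-dimensional cell can map with full rank, since the image of an $m$-dimensional polyhedron under a rank-$m$ affine map is $m$-dimensional. Using continuity of $f$ (a continuous piece-wise affine map restricted to a boundary face coincides with the adjacent cell's affine map, so every cell's image is dominated by that of an adjacent full-dimensional cell), the dimension $d$ is attained on the image of some full-dimensional cell $P_{i_0}$, for which $\mathrm{rank}(A_{i_0}) = d < m$. Then $\ker A_{i_0} \neq \{\bz\}$, so for every $\x$ in the relative interior of $f(P_{i_0})$ the fiber $(f|_{P_{i_0}})^{-1}(\{\x\})$ is a polyhedron of dimension $m - d \geq 1$ and thus contains infinitely many points. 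This exhibits a subset of positive $d$-dimensional measure of $f(\R^m)$ on which $|f^{-1}(\{\x\})| > 1$, contradicting the hypothesis that this set is null. Hence $\dim f(\R^m) = m$, and the relevant ``Lebesgue measure on $f(\R^m)$'' is the $m$-dimensional one.

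For the genericity claim I would split the two requirements in the definition of a generic point. Finiteness of the fiber comes for free: by hypothesis $\{\x : |f^{-1}(\{\x\})| > 1\}$ is null, so for almost every $\x \in f(\R^m)$ we have $|f^{-1}(\{\x\})| = 1$, in particular finite. For the local-affineness requirement, note that a point $\x$ with a unique preimage $\z$ fails to be generic only if $\z$ lies on a seam, i.e. $\x \in f(S)$. Since $S$ has dimension at most $m-1$ and $f$ is affine on each cell, $f(S)$ is a finite union of sets of dimension at most $m-1$, hence has $m$-dimensional measure zero in $f(\R^m)$. Combining, the non-generic set is contained in $\{\x : |f^{-1}(\{\x\})| > 1\} \cup f(S)$, a null set, which is exactly the claim.

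The main obstacle I anticipate is the dimension step, and specifically the bookkeeping that the top dimension of $f(\R^m)$ is realised on a full-dimensional cell rather than only on a lower-dimensional seam; this is where continuity of $f$ (or an explicit argument that boundary images are dominated by adjacent cell images) is essential, and it is also what guarantees that the later use of $f(S)$ being at most $(m-1)$-dimensional is genuinely negligible relative to the $m$-dimensional ambient measure. The rest is routine polyhedral geometry together with the elementary fact that a rank-deficient affine map has positive-dimensional fibers.
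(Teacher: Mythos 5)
The first thing to note is that the paper does not prove this lemma at all: it is quoted, together with the definition of a generic point, directly from \citet{kivva2022identifiability} as a preliminary for the SDS proofs (``These have been previously defined in \citet{kivva2022identifiability}''), so the only proof to compare against is the one in that reference. Your argument is correct and is essentially that argument: decompose $\R^m$ into finitely many full-dimensional polyhedral cells on which $f$ is affine; if the maximal image dimension were $d<m$, it would be attained by some full-dimensional cell whose affine map has rank $d$, so every point of the relative interior of that cell's image --- a set of positive $d$-dimensional measure, where $d$-dimensional measure is exactly ``Lebesgue measure on $f(\R^m)$'' under the contradiction hypothesis --- has a fiber of dimension $m-d\geq 1$, contradicting the measure-zero assumption; genericity then follows because every non-generic point lies either in the multi-preimage set or in the image of the $(m-1)$-dimensional seams, both of which are null once $\dim f(\R^m)=m$. (The only other step deserving a word is your claim that fibers over $\mathrm{relint}(f(P_{i_0}))$ are $(m-d)$-dimensional; this needs the standard convexity fact that every point of $\mathrm{relint}(f(P_{i_0}))$ already has a preimage in $\mathrm{int}(P_{i_0})$.)

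The one step you treat too casually is the opening reduction. You posit a partition into \emph{closed} full-dimensional cells on which $f$ is affine, and later invoke ``continuity of $f$'', but continuity is nowhere in the hypotheses --- indeed Theorem \ref{thm:identifiability_sds:i}, where this lemma is used, deliberately assumes only weak injectivity, not continuity. Both facts must instead be derived from the definition of piece-wise affine adopted in \citet{kivva2022identifiability}: the pieces are finitely many \emph{closed} polyhedra whose union is $\R^m$. Under that convention the lower-dimensional pieces are redundant (the full-dimensional ones already cover $\R^m$, since a finite union of closed nowhere-dense sets has empty interior), and $f$ is automatically continuous (it is affine, hence continuous, on every member of a finite closed cover), which retroactively justifies your setup and makes your parenthetical about boundary faces unnecessary. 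This is not idle pedantry: under the laxer reading in which $\R^m$ is merely partitioned into possibly lower-dimensional pieces with no consistency across them, the lemma is false. For example, take $f:\R^2\to\R^2$ with $f\equiv(0,0)$ on $\{z_2>0\}$, $f\equiv(1,0)$ on $\{z_2<0\}$, and $f(z_1,0)=(z_1,5)$ on the line $\{z_2=0\}$: the multi-preimage set is $\{(0,0),(1,0)\}$, which is null for the $1$-dimensional Lebesgue measure on $f(\R^2)$, yet $\dim f(\R^2)=1\neq 2$ and no point of the line $\{(z_1,5)\}$ is generic. So state which definition of piece-wise affine you are using and derive the closed-cell decomposition (and continuity) from it rather than positing them; with that addition your proof is complete.
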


\subsection{Proof of theorem \ref{thm:identifiability_sds:i}}

We extend the results from \citet{kivva2022identifiability} to using our MSM family as prior distribution for $\z_{1:T}$. The strategy requires finding some open set where the transformations $\mathcal{F}$ and $\mathcal{G}$ from two equally distributed SDSs are invertible, and then use analytic function properties to establish the identifiability result. First, we need to show that the points in the pre-image of a piece-wise factored mapping $\mathcal{F}$ can be computed using the MSM prior. 

\begin{lemma}\label{lemma:extension_msm_preimage}
    Consider a random variable $\z_{1:T}$ which follows a Markov Switching Model distribution. Let us consider $f:\mathbb{R}^m \rightarrow \mathbb{R}^m$, a piece-wise affine mapping which generates the random variable $\x_{1:T} = \mathcal{F}(\z_{1:T})$ as $\x_t = f(\z_t), 1\leq t \leq T$. Also, consider $\x^{(0)} \in \mathbb{R}^{m}$ a generic point with respect to $f$. Then, $\x^{(0)}_{1:T} = \{\x^{(0)},\dots,\x^{(0)}\}\in\mathbb{R}^{Tm}$ is also a generic point with respect to $\mathcal{F}$ and the number of points in the pre-image $\mathcal{F}^{-1}(\{\x^{(0)}_{1:T}\})$ can be computed as
    $$\left|\mathcal{F}^{-1}\left(\left\{\x^{(0)}_{1:T}\right\}\right)\right| = \lim_{\delta\rightarrow 0} \int_{\x_{1:T}\in\mathbb{R}^{Tm}}\text{Ext}\left(p|_{B(\x^{(0)}_{1:T},\delta)}\right) d\x_{1:T}$$
\end{lemma}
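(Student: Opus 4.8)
The plan is to lift the corresponding static statement of \citet{kivva2022identifiability} to the factored map $\mathcal{F}$, exploiting two facts: (i) genericity and local affine invertibility transfer coordinatewise from $f$ to $\mathcal{F}$, and (ii) the MSM density $p(\z_{1:T})$ is real-analytic and strictly positive on all of $\R^{Tm}$, so that each local inverse branch of the pushforward extends to a global analytic density that integrates to one.

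First I would establish that $\x^{(0)}_{1:T}$ is generic with respect to $\mathcal{F}$. Since $\mathcal{F}$ is factored, $\mathcal{F}(\z_{1:T}) = \x^{(0)}_{1:T}$ holds iff $f(\z_t) = \x^{(0)}$ for every $t$, so $\mathcal{F}^{-1}(\{\x^{(0)}_{1:T}\}) = f^{-1}(\{\x^{(0)}\}) \times \cdots \times f^{-1}(\{\x^{(0)}\})$ is finite with cardinality $|f^{-1}(\{\x^{(0)}\})|^T$. For each pre-image point, genericity of $\x^{(0)}$ gives a radius $\delta_t$ on which $f$ is affine around $\z_t$; taking the minimum radius, $\mathcal{F}$ is affine on a product of balls and hence on a ball around $(\z_1,\dots,\z_T)$, so $\x^{(0)}_{1:T}$ is generic for $\mathcal{F}$. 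Moreover, by Lemma \ref{lemma:generic_point}, $\dim f(\R^m) = m$, which forces each local affine piece of $f$ to be an invertible square map; the matching local affine piece of $\mathcal{F}$ is block-diagonal in these pieces and therefore invertible as well.

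Next I would record that the prior density is analytic: by Eq. (\ref{eq:msm:family}) with the Gaussian families (\ref{eq:gaussian_transition_family}), (\ref{eq:gaussian_initial_family}), $p(\z_{1:T})$ is a finite sum of finite products of Gaussian densities whose means and covariances are analytic in the conditioning variable (assumption (a2)); since Gaussians with analytic parameters are analytic and analyticity survives products and finite sums, $p$ is real-analytic and everywhere strictly positive on $\R^{Tm}$. With genericity and local invertibility in hand, on a sufficiently small ball $B = B(\x^{(0)}_{1:T},\delta)$ the density of $\x_{1:T}$ (the pushforward $\mathcal{F}_{\#}p$, which is what $p|_B$ denotes in the statement) decomposes as $\sum_{l=1}^{L} g_l$, where $L = |\mathcal{F}^{-1}(\{\x^{(0)}_{1:T}\})|$ and $g_l(\x_{1:T}) = p(A_l^{-1}(\x_{1:T} - \beta_l))\,|\det A_l|^{-1}$ is the contribution of the $l$-th pre-image branch through its invertible local affine representation $\x_{1:T} = A_l \z_{1:T} + \beta_l$. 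Each $g_l$ is the composition of the analytic $p$ with a global affine isomorphism, hence analytic on all of $\R^{Tm}$; thus $\sum_l g_l$ is a global analytic function agreeing with the density on $B$, and by uniqueness of analytic continuation $\text{Ext}(p|_B) = \sum_l g_l$. The substitution $\z_{1:T} = A_l^{-1}(\x_{1:T} - \beta_l)$ gives $\int_{\R^{Tm}} g_l\,d\x_{1:T} = \int_{\R^{Tm}} p(\z_{1:T})\,d\z_{1:T} = 1$, so summing the $L$ branches yields $\int_{\R^{Tm}} \text{Ext}(p|_B)\,d\x_{1:T} = L$, a value independent of $\delta$ once $\delta$ is small enough; the limit as $\delta \to 0$ then returns $L = |\mathcal{F}^{-1}(\{\x^{(0)}_{1:T}\})|$.

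The main obstacle is the claim that, on a small enough ball, the pushforward density is \emph{exactly} the sum of the $L$ analytic branches, so that its analytic continuation is well defined and equals $\sum_l g_l$. One must rule out additional pre-images of points $\x_{1:T} \in B$ arising from affine pieces of $f$ that contain none of the fixed pre-image points, since these could inject non-analytic kinks into the density and invalidate the extension. Here I would use that $f$ has finitely many affine pieces together with the genericity of $\x^{(0)}_{1:T}$: any branch contributing near $\x^{(0)}_{1:T}$ either corresponds to one of the $L$ interior pre-image points, or its source point must converge to an already-counted pre-image of $\x^{(0)}_{1:T}$ or escape to infinity (where the Gaussian-mixture density vanishes), so shrinking $\delta$ removes the spurious contributions and makes $p|_B$ genuinely analytic. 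This localisation argument, rather than the change-of-variables bookkeeping, is the delicate part of the proof.
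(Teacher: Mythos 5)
Your proposal follows essentially the same route as the paper's proof: genericity transfers to $\mathcal{F}$ through the Cartesian-product structure of the pre-image, the pushforward density on a small ball is written as a sum of $L$ globally analytic branch contributions (the paper packages each branch as an MSM density via Prop.~\ref{prop:msm_affine_transform}, you use the change-of-variables formula directly --- the same computation), and the unique analytic extension then integrates to $L$ since each branch integrates to one. The only notable difference is that you explicitly flag the localisation step --- ruling out density contributions near $\x^{(0)}_{1:T}$ from affine pieces of $f$ containing none of the counted pre-image points --- which the paper's choice of $\delta_0$ passes over silently, so your version is, if anything, slightly more careful on that point.
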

\begin{proof}
    If $\x^{(0)}\in\mathbb{R}^{m}$ is a generic point with respect to $f$, $\x^{(0)}_{1:T}$ is also a generic point with respect to $\mathcal{F}$ since the pre-image is $\mathcal{F}(\{\x^{(0)}_{1:T}\})$ now larger but still finite. In other words, $\mathcal{F}(\{\x^{(0)}_{1:T}\})$ is the Cartesian product $\mathcal{Z}\times\mathcal{Z}\times\dots\times\mathcal{Z}$, where $\mathcal{Z}=\{\z_1,\z_2,\dots,\z_n\}$ are the points in the pre-image $f(\{\x^{(0)}\})$.  Considering this, we have well defined affine mappings $\mathcal{G}_{i_1,\dots,i_T}:B(\{\z_{i_1},\dots,\z_{i_T}\}, \epsilon)\rightarrow\mathbb{R}^m$, $i_t\in\{1,\dots,n\}$ for $1\leq t \leq T$, such that $\mathcal{G}_{i_1,\dots,i_T}=\mathcal{F}(\z_{1:T}), \forall\z_{1:T}\in B(\{\z_{i_1},\dots,\z_{i_T}\},\epsilon)$. This affine mapping $\mathcal{G}_{i_1,\dots,i_T}$ is factored as follows:
    $$g_{i_t}(\z_t) = f(\z_t), \quad \forall \z_t\in B(\z_i,\epsilon)$$

\[\mathcal{G}_{i_1,\dots,i_T}=\begin{pmatrix}
A_{i_1} & \dots & \bm{0}\\
\vdots & \ddots & \vdots \\
\bm{0} & \dots & A_{i_T}
\end{pmatrix} \begin{pmatrix}
    \z_1 \\
    \vdots \\
    \z_T
\end{pmatrix} + \begin{pmatrix}
    b_{i_1} \\
    \vdots \\
    b_{i_T}
\end{pmatrix}
\]

Let $\delta_0 > 0$ such that
$$B(\x^{(0)}_{1:T},\delta_0) \subseteq \bigcap_{i_1, \dots, i_T}^n \mathcal{G}_{i_1,\dots,i_T}(B(\{\z_{i_1},\dots,\z_{i_T}\}, \epsilon))$$
we can compute the likelihood for every $\x_{1:T}\in B(\x^{(0)}_{1:T},\delta')$ with $0 < \delta' < \delta_0$ using Prop. \ref{prop:msm_affine_transform} where the MSM is closed under factored affine transformations.
\begin{multline*}
p|_{B(\x^{(0)}_{1:T},\delta)} =  \sum_{i_1,\dots,i_T}^n \sum_{j_1,\dots,j_T}^K p\left(s_{1:T} = \{j_1, \dots, j_T\}\right) \N\left(\x_1; A_{i_1}\bm{\mu}(j_1) + \Bb_{i_1}, A_{i_1}\bm{\Sigma}_{1}(j_1)A_{i_1}^T\right)\\
     \prod_{t=2}^T \N\left(\x_t; A_{i_t}\m\left(A_{i_{t-1}}^{-1}\left(\x_{t-1} - \Bb_{i_{t-1}}\right),j_t\right) + \Bb_{i_t}, A_{i_t}\bm{\Sigma}\Big( 
A_{i_t}^{-1}\left(\x_{t-1} - \Bb_{i_{t-1}} \right) ,j_t\Big)A_{i_t}^T\right)
\end{multline*}
Where the previous density is an analytic function which is defined on an open neighbourhood of $\x^{(0)}_{1:T}$. Then from the identity theorem of analytic functions the resulting density defines the analytic extension of $p|_{B(\x^{(0)}_{1:T},\delta)}$ on $\mathbb{R}^{m}$. Then, we have
\begin{align*}
&\int_{\x_{1:T}\in\mathbb{R}^{Tm}} \text{Ext}\left(p|_{B(\x^{(0)}_{1:T},\delta)}\right) d\x_{1:T} \\
& = \sum_{i_1,\dots,i_T}^s \sum_{j_1,\dots,j_T}^K p\left(s_{1:T} = \{j_1, \dots, j_T\}\right) \N\left(\x_1; A_{i_1}\bm{\mu}(j_1) + \Bb_{i_1}, A_{i_1}\bm{\Sigma}_{1}(j_1)A_{i_1}^T\right)\\
    & \qquad\prod_{t=2}^T \N\Bigg(\x_t; A_{i_t}\bm{m}\left(A_{i_{t-1}}^{-1}\left(\x_{t-1} - \Bb_{i_{t-1}}\right),j_t\right) + \Bb_{i_t}, A_{i_t}\bm{\Sigma}\Big( 
A_{i_{t-1}}^{-1}\left(\x_{t-1} - \Bb_{i_{t-1}} \right) ,j_t\Big)A_{i_{t}}^T\Bigg)\\
& = \sum_{i_1,\dots,i_T}^n \int_{\x_{1:T}\in\mathbb{R}^{Tm}} \sum_{j_1,\dots,j_T}^K p\left(s_{1:T} = \{j_1, \dots, j_T\}\right) \N\left(\x_1; A_{i_1}\bm{\mu}(j_1) + \Bb_{i_1}, A_{i_1}\bm{\Sigma}_{1}(j_1)A_{i_1}^T\right)\\
    & \qquad\prod_{t=2}^T \N\Bigg(\x_t; A_{i_t}\bm{m}\left(A_{i_{t-1}}^{-1}\left(\x_{t-1} - \Bb_{i_{t-1}}\right),j_t\right) + \Bb_{i_t},\\
    & \qquad A_{i_t}\bm{\Sigma}\Big( 
A_{i_{t-1}}^{-1}\left(\x_{t-1} - \Bb_{i_{t-1}} \right) ,j_t\Big)A_{i_t}^T\Bigg) d\x_{1:T}\\
& = \sum_{i_1,\dots,i_T}^n 1 = n^T = \left|\mathcal{F}^{-1}(\{\x^{(0)}_{1:T}\})\right|
\end{align*}
\end{proof}



We can deduce the following corollary as in \citet{kivva2022identifiability}.
\begin{corollary}\label{cor:invertibility}
    Let $\mathcal{F}$, $\mathcal{G} : \mathbb{R}^{Tm}\rightarrow\mathbb{R}^{Tn}$ be factored piece-wise affine mappings, with $\x_t := f(\z_t)$ and  $\x_t' := g(\z_t')$,  for $1\leq t \leq T$. Assume $f$ and $g$ are weakly-injective (Def. \ref{def:weakly_injective}). Let $\z_{1:T}$ and $\z_{1:T}'$ be distributed according to the identifiable MSM family. Assume $\mathcal{F}(\z_{1:T})$ and $\mathcal{G}(\z_{1:T}')$ are equally distributed. Assume that for $\x_0\in\R^{n}$ and $\delta>0$, $f$ is invertible on $B(\x_0,2\delta)\cap f(\R^m)$.


    Then, for $\x^{(0)}_{1:T}=\{\x_0,\dots,\x_0\}\in\R^{Tn}$ there exists $\x^{(1)}_{1:T}\in B(\x^{(0)}_{1:T},\delta)$ and $\delta_1 >0$ such that both $\mathcal{F}$ and $\mathcal{G}$ are invertible on $B(\x^{(1)}_{1:T},\delta_1)\cap \mathcal{F}(\R^{Tm})$.
\end{corollary}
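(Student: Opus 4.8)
The plan is to work entirely with the common pushforward measure $\mu := \mathcal{F}_{\#}p = \mathcal{G}_{\#}p'$ on $\R^{Tn}$ (the hypothesis that $\mathcal{F}(\z_{1:T})$ and $\mathcal{G}(\z_{1:T}')$ are equally distributed), and to locate a single point $\x^{(1)}_{1:T}$ inside $B(\x^{(0)}_{1:T},\delta)$ that is simultaneously \emph{generic} and has a \emph{singleton} pre-image under both $\mathcal{F}$ and $\mathcal{G}$; then a small $\delta_1$ finishes the job. I first dispose of $\mathcal{F}$: since $\|\x_{1:T}-\x^{(0)}_{1:T}\|<\delta$ forces $\|\x_t-\x_0\|<\delta<2\delta$ for every $t$, each coordinate lies in the region $B(\x_0,2\delta)\cap f(\R^m)$ where $f$ is invertible, so the factored structure gives $|\mathcal{F}^{-1}(\{\x_{1:T}\})|=\prod_{t=1}^T|f^{-1}(\{\x_t\})|=1$ throughout $B(\x^{(0)}_{1:T},\delta)\cap\mathcal{F}(\R^{Tm})$. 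Hence $\mathcal{F}$ is already invertible on this whole ball, and by Lemma \ref{lemma:extension_msm_preimage} generic points of $f$ lift to generic points of $\mathcal{F}$.

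Next I would show the ball carries positive $\mu$-mass. The MSM density $p$ is a finite mixture of products of Gaussian initial and transition densities, hence strictly positive on all of $\R^{Tm}$ (positivity, as in (b1)). Because $\mathcal{F}$ is piece-wise affine onto the $Tm$-dimensional manifold $\mathcal{F}(\R^{Tm})$ (Lemma \ref{lemma:generic_point}), the pre-image $\mathcal{F}^{-1}\big(B(\x^{(0)}_{1:T},\delta)\cap\mathcal{F}(\R^{Tm})\big)$ has positive Lebesgue measure in $\R^{Tm}$, so $\mu\big(B(\x^{(0)}_{1:T},\delta)\big)=p\big(\mathcal{F}^{-1}(\cdots)\big)>0$. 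Since $\mu=\mathcal{G}_{\#}p'$ is also concentrated on $\mathcal{G}(\R^{Tm})$, this positive mass must in fact lie on the triple intersection $B(\x^{(0)}_{1:T},\delta)\cap\mathcal{F}(\R^{Tm})\cap\mathcal{G}(\R^{Tm})$.

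I would then remove the points that are bad for $\mathcal{G}$ (and symmetrically for $\mathcal{F}$). Using the factored product structure $\mathcal{G}(\R^{Tm})=g(\R^m)^T$ and the weak injectivity of $g$, the multi-preimage set $\{\,|\mathcal{G}^{-1}|>1\,\}$ is a finite union of slabs over the Lebesgue-null set $\{\,|g^{-1}|>1\,\}\subset g(\R^m)$, hence null in $\mathcal{G}(\R^{Tm})$; then Lemma \ref{lemma:generic_point} makes the non-generic points of $\mathcal{G}$ null as well. Pulling these null sets back through $\mathcal{G}$ — piece-wise affine and, on the top-dimensional pieces, full-rank onto the $Tm$-manifold — yields Lebesgue-null subsets of $\R^{Tm}$, so by absolute continuity of $p'$ they carry $\mu$-measure zero; the same holds for the bad set of $\mathcal{F}$. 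Subtracting all of them from the positive-mass set of the previous step leaves a set of positive $\mu$-measure, in particular nonempty, from which I pick $\x^{(1)}_{1:T}\in B(\x^{(0)}_{1:T},\delta)$ that is generic with singleton pre-image under both maps.

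Finally, at such $\x^{(1)}_{1:T}$ both $\mathcal{F}$ and $\mathcal{G}$ are affine with a unique pre-image on a neighbourhood, hence invertible on small balls. To reconcile the domain $\cap\,\mathcal{F}(\R^{Tm})$ in the statement with $\mathcal{G}$'s natural domain $\cap\,\mathcal{G}(\R^{Tm})$, I would note that the local affine piece of $\mathcal{F}(\R^{Tm})$ through $\x^{(1)}_{1:T}$ carries strictly positive $\mu$-density (the local pushforward is itself a positive MSM density by Prop.~\ref{prop:msm_affine_transform}), so it lies in $\text{supp}(\mu)\subseteq\mathcal{G}(\R^{Tm})$; two $Tm$-dimensional affine pieces, one containing an open subset of the other, must coincide, giving $B(\x^{(1)}_{1:T},\delta_1)\cap\mathcal{F}(\R^{Tm})=B(\x^{(1)}_{1:T},\delta_1)\cap\mathcal{G}(\R^{Tm})$ for $\delta_1$ small. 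Choosing $\delta_1$ also small enough that $B(\x^{(1)}_{1:T},\delta_1)\subseteq B(\x^{(0)}_{1:T},\delta)$ and sits inside both local-affine neighbourhoods then makes both maps invertible on this common set. I expect the main obstacle to be precisely this last reconciliation together with the $\mu$-null claim for $\mathcal{G}$'s bad set: since $\mu$ is singular in $\R^{Tn}$ (supported on a $Tm$-dimensional set), one cannot argue with ambient Lebesgue measure and must instead push the positivity of the MSM density and the pull-back estimates through carefully.
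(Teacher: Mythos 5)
Your opening step (coordinate-wise invertibility of $\mathcal{F}$ on the whole ball $B(\x^{(0)}_{1:T},\delta)\cap\mathcal{F}(\R^{Tm})$) matches the paper, and your overall plan --- find one point in the ball that is generic with a singleton pre-image under both maps --- is the right target. But the step that carries all the weight is flawed. You claim that the multi-pre-image set $\{|\mathcal{G}^{-1}|>1\}$ is $\mu$-null because ``pulling these null sets back through $\mathcal{G}$ \dots yields Lebesgue-null subsets of $\R^{Tm}$.'' That pull-back claim is false for general weakly injective piece-wise affine maps: weak injectivity (Def. \ref{def:weakly_injective}) does \emph{not} exclude rank-deficient affine pieces of positive Lebesgue measure. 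Take $m=n=1$ and $g(z)=z$ for $z<0$, $g(z)=0$ for $z\in[0,1]$, $g(z)=z-1$ for $z>1$: this $g$ is continuous, piece-wise affine and weakly injective (the multi-pre-image set is the single point $\{0\}$, null in $g(\R)=\R$), yet $g^{-1}(\{0\})=[0,1]$ has Lebesgue measure one, and since MSM densities are strictly positive, $\mu(\{|g^{-1}|>1\})\geq p'([0,1])>0$. In general, any degenerate piece maps onto an $\mathcal{H}^{Tm}$-null subset of the image consisting entirely of points with infinitely many pre-images, so its full $p'$-mass sits on your ``bad'' set. Your parenthetical restriction to ``top-dimensional, full-rank pieces'' silently assumes away exactly these pieces, and nothing in the hypotheses justifies that. (The gap is repairable, but only by routing the argument through $\mathcal{F}$ rather than $\mathcal{G}$: inside the ball $\mathcal{F}$ is injective, hence its positive-measure pieces are full rank there, so $\mu$ restricted to the ball \emph{is} absolutely continuous with respect to the $Tm$-dimensional surface measure, and only then does $\mathcal{H}^{Tm}$-nullity of $\mathcal{G}$'s bad set give $\mu$-nullity \emph{within the ball}. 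You did not make this move, and your closing paragraph shows you sensed the singular-measure danger without resolving it.)

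The paper avoids this entirely with a different mechanism, which is the real content of Lemma \ref{lemma:extension_msm_preimage} and which you never invoke (you cite the lemma only for lifting genericity from $f$ to $\mathcal{F}$). At a point of the form $\x^{(1)}_{1:T}=\{\x^{(1)},\dots,\x^{(1)}\}$ that is generic with respect to both $f$ and $g$ (such points are dense in the ball by Lemma \ref{lemma:generic_point}), the lemma expresses the pre-image count as $\lim_{\delta\rightarrow 0}\int \text{Ext}\big(p|_{B(\x^{(1)}_{1:T},\delta)}\big)\,d\x_{1:T}$, i.e.\ as a functional of the \emph{pushforward distribution alone} (using Prop.~\ref{prop:msm_affine_transform} to identify the local density as an analytic MSM density). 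Since $\mathcal{F}_{\#}p=\mathcal{G}_{\#}p'$ by hypothesis, the same integral computes both $|\mathcal{F}^{-1}(\{\x^{(1)}_{1:T}\})|$ and $|\mathcal{G}^{-1}(\{\x^{(1)}_{1:T}\})|$, so invertibility of $\mathcal{F}$ at that point forces $|\mathcal{G}^{-1}(\{\x^{(1)}_{1:T}\})|=1$, with no global measure-theoretic accounting of $\mathcal{G}$'s bad set needed and full robustness to degenerate pieces. To complete your proof you should either adopt this counting argument or insert the $\mathcal{F}$-side absolute-continuity fix described above; as written, the subtraction step does not go through.
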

\begin{proof}
    First, we observe that since $\mathcal{F}$ is a factored mapping, if $f$ is invertible on $B(\x_0,2\delta)\cap f(\R^m)$, we can compute the inverse of $\mathcal{F}$ on $B(\x^{(0)}_{1:T},2\delta)\cap \mathcal{F}(\R^m)$ for $\x^{(0)}_{1:T}=\{\x_0,\dots,\x_0\}\in\R^{Tn}$ as $\mathcal{F}^{-1}(\x_{1:T}) = \{f^{-1}(\x_1),\dots,f^{-1}(\x_T)\}\in\R^{Tm}$, for $\x_t\in B(\x_0,2\delta)\cap f(\R^m)$, $1\leq t \leq T$. Then, $\mathcal{F}$ is invertible on $B(\x^{(0)}_{1:T},2\delta)\cap \mathcal{F}(\R^m)$.

    By Lemma \ref{lemma:generic_point}, almost every point $\x\in B(\x^{(0)},\delta)\cap f(\R^m)$ is generic with respect to $f$ and $g$, as both mappings are weakly injective. As discussed previously, if $\x^{(0)}\in f(\R^m)$ is a generic point with respect to $f$, the point $\x^{(0)}_{1:T}=\{\x^{(0)},\dots,\x^{(0)}\}\in\mathcal{F}(\R^{Tm})$ is also generic with respect to $\mathcal{F}$, as the finite points in the preimage $f^{-1}(\{\x^{(0)}\})$ extend to finite points in the preimage $\mathcal{F}^{-1}(\{\x^{(0)}_{1:T}\})$. Then, almost every point $\x_{1:T}\in B(\x^{(0)}_{1:T},\delta)\cap \mathcal{F}(\R^{Tm})$ is generic with respect to $\mathcal{F}$ and $\mathcal{G}$.

    Consider now $\x^{(1)}_{1:T} = \{\x^{(1)},\dots, \x^{(1)}\}\in B(\x^{(0)}_{1:T},\delta)$ such a generic point. 
    From the invertibility of $\mathcal{F}$ on $B(\x^{(1)}_{1:T},\delta)$, we have $|\mathcal{F}^{-1}(\{\x^{(1)}_{1:T}\})|=1$. By Lemma \ref{lemma:extension_msm_preimage}, we have that $|\mathcal{G}^{-1}(\{\x^{(1)}_{1:T}\})|=1$, as $\x^{(1)}_{1:T}$ is generic with respect to $\mathcal{F}$ and $\mathcal{G}$. Then, there exists, $\delta > \delta_1 > 0$ such that on $\left(B(\x^{(1)}_{1:T},\delta_1)\cap \mathcal{F}(\R^{Tm})\right) \subset \left(B(\x^{(0)}_{1:T},2\delta)\cap \mathcal{F}(\R^{Tm})\right)$ the function $\mathcal{G}$ is invertible.
\end{proof}

We need an additional result to prepare the proof for Theorem \ref{thm:identifiability_sds:i}.

\begin{theorem}\label{thm:identifiability-affine-msm}
    Let $\mathcal{F}$, $\mathcal{G} : \mathbb{R}^{Tm}\rightarrow\mathbb{R}^{Tn}$ be factored piece-wise affine mappings, with $\x_t := f(\z_t)$ and  $\x_t' := g(\z_t')$,  for $1\leq t \leq T$. Let $\z_{1:T}$ and $\z_{1:T}'$ be distributed according to the identifiable MSM family. Assume $\mathcal{F}(\z_{1:T})$ and $\mathcal{G}(\z_{1:T}')$ are equally distributed, and that there exists $\x^{(0)}_{1:T}\in\R^{Tn}$ and $\delta>0$ such that $\mathcal{F}$ and $\mathcal{G}$ are invertible on $B(\x^{(0)}_{1:T}, \delta) \cap f(\mathbb{R}^{Tm})$. Then there exists an invertible factored affine transformation $\mathcal{H}$ such that $\mathcal{H}(\z_{1:T}) = \z_{1:T}'$.
\end{theorem}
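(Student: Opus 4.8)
The plan is to adapt the analytic-continuation argument of \citet{kivva2022identifiability} to the factored, temporal setting. First I would shrink the neighbourhood: since $\mathcal{F}$ and $\mathcal{G}$ are piece-wise affine and invertible on $B(\x^{(0)}_{1:T},\delta)\cap\mathcal{F}(\R^{Tm})$, I would choose $\delta_1\leq\delta$ small enough that on the associated latent neighbourhood $\mathcal{W}\subset\R^{Tm}$ each map restricts to a \emph{single} affine piece. Because (by the notation inherited from Lemma~\ref{lemma:extension_msm_preimage} and Cor.~\ref{cor:invertibility}) $\x^{(0)}_{1:T}=\{\x_0,\dots,\x_0\}$ has identical time-coordinates and $\mathcal{F}$ is factored, the unique pre-images $f^{-1}(\x_t)$ all lie near the single point $f^{-1}(\x_0)$, so $f$ coincides with one affine map $\z\mapsto A_f\z+\bm{a}_f$ on all $T$ blocks; likewise $g$ coincides with $\z\mapsto A_g\z+\bm{a}_g$. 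Consequently $\mathcal{G}^{-1}\circ\mathcal{F}$ is, on $\mathcal{W}$, the \emph{factored} affine map $\mathcal{H}$ composed by the invertible affine $h(\z)=A_g^{-1}(A_f\z+\bm{a}_f-\bm{a}_g)$, which I then extend to all of $\R^{Tm}$ by the same closed-form expression.

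Second, I would turn the equality of pushforwards into a local density identity by pulling everything back to the full-dimensional latent space. Using that both maps are global bijections onto a common image patch together with $\mathcal{F}_{\#}p=\mathcal{G}_{\#}p'$, for every measurable $U\subset\mathcal{W}$ one has $p(U)=\mathcal{F}_{\#}p(\mathcal{F}(U))=\mathcal{G}_{\#}p'(\mathcal{F}(U))=p'(\mathcal{H}(U))$, i.e.\ $\mathcal{H}_{\#}p=p'$ on $\mathcal{W}$. Passing to Lebesgue densities on $\R^{Tm}$ and using that $\mathcal{H}$ is affine with constant Jacobian, this reads
\begin{equation*}
p(\z_{1:T})=\lvert\det D\mathcal{H}\rvert\, p'(\mathcal{H}(\z_{1:T})),\qquad \z_{1:T}\in\mathcal{W}.
\end{equation*}
Working on $\R^{Tm}$ (rather than on the $Tm$-dimensional image $\mathcal{F}(\R^{Tm})\subset\R^{Tn}$) is what keeps all densities full-dimensional and well defined.

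Third---the crucial step---I would globalise this identity by analytic continuation. By assumption (a2) the transition moments $\bm{m}(\cdot,a),\bm{\Sigma}(\cdot,a)$ are analytic and the covariances are nondegenerate, so each MSM density $p$ and $p'$ is a finite sum of products of Gaussians and hence real-analytic on $\R^{Tm}$; precomposing the analytic $p'$ with the affine $\mathcal{H}$ keeps the right-hand side analytic as well. Two real-analytic functions on the connected set $\R^{Tm}$ that agree on the non-empty open set $\mathcal{W}$ agree everywhere, so the displayed identity holds for all $\z_{1:T}$, which is exactly $\mathcal{H}_{\#}p=p'$, i.e.\ $\mathcal{H}(\z_{1:T})\stackrel{d}{=}\z'_{1:T}$ with $\mathcal{H}$ an invertible factored affine map. (By Prop.~\ref{prop:msm_affine_transform}, $\mathcal{H}_{\#}p$ is moreover again an element of $\mathcal{M}^T_{NL}$, which is what later lets one invoke permutation identifiability to recover the transition parameters.)

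The main obstacle I anticipate is guaranteeing that $\mathcal{H}$ is genuinely \emph{factored} rather than merely block-diagonal: this rests entirely on all $T$ observation coordinates near $\x^{(0)}_{1:T}$ falling into the \emph{same} affine region of $f$ (and of $g$), so that a single pair $(A_f,A_g)$ governs every block. This is precisely why the equal-coordinate structure of $\x^{(0)}_{1:T}$ supplied by Lemma~\ref{lemma:extension_msm_preimage} and Cor.~\ref{cor:invertibility} is essential, and it is the point where the temporal problem genuinely departs from the static result of \citet{kivva2022identifiability}; the analytic-continuation engine of the third step, by contrast, is inherited more or less directly once analyticity of the MSM densities is in hand.
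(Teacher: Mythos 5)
Your proposal is correct and follows essentially the same route as the paper's own proof: restrict to a patch where $\mathcal{F}$ and $\mathcal{G}$ coincide with invertible \emph{factored} affine maps (factoredness coming from the factored structure together with the equal-coordinate base point), equate the two pushforward distributions on that patch, and then globalise using that MSM densities with analytic Gaussian moments are real-analytic. The only cosmetic differences are that you pull the comparison back to the latent space and state the identity-theorem step explicitly, whereas the paper compares the two pushforward MSMs on the image subspace $L$ via the maps $\mathcal{H}_{\mathcal{F}},\mathcal{H}_{\mathcal{G}}$ and leaves the analytic-continuation step implicit behind Prop.~\ref{prop:msm_affine_transform}.
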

\begin{proof}
    From the invertibility of $\mathcal{F}$ and $\mathcal{G}$ in $B(\x^{(0)}_{1:T}, \delta) \cap f(\mathbb{R}^{Tm})$ we can find a $Tm$-dimensional affine subspace $B(\x^{(1)}_{1:T}, \delta_1) \cap L$, where $\delta_1>0$, $B(\x^{(1)}_{1:T}, \delta_1)\subseteq B(\x^{(0)}_{1:T}, \delta)$, and $L \subseteq \mathbb{R}^{Tn}$ such that $\mathcal{H}_\mathcal{F}, \mathcal{H}_\mathcal{G}: \mathbb{R}^{Tm} \rightarrow L$ are a pair of invertible affine functions where $\mathcal{H}_\mathcal{F}^{-1}$ and $ \mathcal{H}_\mathcal{G}^{-1}$ coincide with $\mathcal{F}^{-1}$ and $\mathcal{G}^{-1}$ on $B(\x_1, \delta_1) \cap L$ respectively. The fact that $\mathcal{F}$ and $\mathcal{G}$ are factored implies that $\mathcal{H}_\mathcal{F}, \mathcal{H}_\mathcal{G}$ are also factored. To see this, we observe that the inverse of a block diagonal matrix is the inverse of each block, as an example for $\mathcal{F}$, we first have that $\mathcal{H}_\mathcal{F}^{-1}$ must be forcibly factored since it needs to coincide with $\mathcal{F}^{-1}$.
    $$
    \mathcal{H}_\mathcal{F}^{-1}(\x_{1:T}) = \begin{pmatrix}
\tilde{A}_f & \dots & \bm{0}\\
\vdots & \ddots & \vdots \\
\bm{0} & \dots & \tilde{A}_f
\end{pmatrix} \begin{pmatrix}
    \x_1 \\
    \vdots \\
    \x_T
\end{pmatrix} + \begin{pmatrix}
    \tilde{b}_f \\
    \vdots \\
    \tilde{b}_f
    \end{pmatrix} = \begin{pmatrix}
    f^{-1}(\x_1) \\
    \vdots \\
    f^{-1}(\x_T)
    \end{pmatrix}
    $$
    then we can take the inverse to obtain the factored $\mathcal{H}_\mathcal{F}$.
\begin{align*}
    \mathcal{H}_\mathcal{F} &= \begin{pmatrix}
\tilde{A}_f^{-1} & \dots & \bm{0}\\
\vdots & \ddots & \vdots \\
\bm{0} & \dots & \tilde{A}_f^{-1}
\end{pmatrix} \begin{pmatrix}
    \z_1 \\
    \vdots \\
    \z_T
\end{pmatrix} - \begin{pmatrix}
\tilde{A}^{-1}_f & \dots & \bm{0}\\
\vdots & \ddots & \vdots \\
\bm{0} & \dots & \tilde{A}^{-1}_f
\end{pmatrix}\begin{pmatrix}
    \tilde{b}_f \\
    \vdots \\
    \tilde{b}_f
    \end{pmatrix}\\
&= \begin{pmatrix}
A_f & \dots & \bm{0}\\
\vdots & \ddots & \vdots \\
\bm{0} & \dots & A_f
\end{pmatrix} \begin{pmatrix}
    \z_1 \\
    \vdots \\
    \z_T
\end{pmatrix} + \begin{pmatrix}
    b_f \\
    \vdots \\
    b_f
    \end{pmatrix}, \quad \text{where } A_f = \tilde{A}^{-1}_f, \text{ and } b_f = -\tilde{A}^{-1}_f\tilde{b}_f
\end{align*}
Since $\mathcal{F}(\z_{1:T})$ and $\mathcal{G}(\z_{1:T}')$ are equally distributed, we have that $\mathcal{H}_\mathcal{F}(\z_{1:T})$ and $\mathcal{H}_\mathcal{G}(\z_{1:T}')$ are equally distributed on $B(\x^{(1)}_{1:T}, \delta_1) \cap L$. From Prop \ref{prop:msm_affine_transform}, we know that $\mathcal{H}_\mathcal{F}(\z_{1:T})$ and $\mathcal{H}_\mathcal{G}(\z_{1:T}')$ are distributed according to the identifiable MSM family, which implies $\mathcal{H}_\mathcal{F}(\z_{1:T})=\mathcal{H}_\mathcal{G}(\z_{1:T}')$, and also $\mathcal{H}^{-1}_\mathcal{G}(\mathcal{H}_\mathcal{F}(\z_{1:T}))=\z_{1:T}'$, where $\mathcal{H} = \mathcal{H}^{-1}_\mathcal{G} \circ \mathcal{H}_\mathcal{F}$ is an affine transformation.
\end{proof}

From the previous result and Theorem \ref{thm:identifiability_msm}, there exists a permutation $\sigma(\cdot)$, such that $\bm{m}_{fg}(\z',k) = \bm{m}'(\z',\sigma(k))$ for $1\leq k \leq K$.
\begin{align*}
\bm{m}'(\z',\sigma(k)) & = \bm{m}_{fg}(\z',k) = A^{-1}_g\bm{m}_{f}\left(A_g\z' + \Bb_g, k\right) - A^{-1}_g\Bb_g\\
& = A^{-1}_gA_f\bm{m}\left(A_{f}^{-1}A_g\z' + A^{-1}_{f}\left(\Bb_g - \Bb_{f}\right),k\right) + A^{-1}_g\left(\Bb_f -\Bb_{g}\right)\\
& = A\bm{m}\left(A^{-1}(\z' - \Bb),k\right) + \Bb,
\end{align*}
where $A=A^{-1}_gA_f$ and $\Bb = A^{-1}_g\left(\Bb_f -\Bb_{g}\right)$. Similar implications apply for $\Sigma(\z, a), a\in\mathcal{A}$.

Now we have all the elements to prove Theorem \ref{thm:identifiability_sds:i}.
\begin{proof}
    We assume there exists another model that generates the same distribution from Eq.(\ref{eq:generative_model_sns}), whith a prior $p'\in \mathcal{M}^T_{NL}$ under assumptions (a1-a2), and non-linear emmision $\mathcal{F}'$, composed by $f$ which is weakly injective and piece-wise linear: i.e. $(\mathcal{F}\#p )(\x_{1:T}) = (\mathcal{F}'\#p' )(\x_{1:T})$.
    
    From weakly-injectiveness, at least for some $\x_0\in\R^{n}$ and $\delta>0$, $f$ is invertible on $B(\x_0,2\delta)\cap f(\R^m)$. This satisfies the preconditions from Corollary \ref{cor:invertibility}, which implies there exists $\x^{(1)}_{1:T}\in B(\x^{(0)}_{1:T},\delta)$ and $\delta_1 >0$ such that both $\mathcal{F}$ and $\mathcal{F}'$ are invertible on $B(\x^{(1)}_{1:T},\delta_1)\cap \mathcal{F}(\R^{Tm})$. Thus, by Theorem \ref{thm:identifiability-affine-msm}, there exists an affine transformation $\mathcal{H}$ such that $\mathcal{H}(\z_{1:T}) = \z_{1:T}'$, which means that $p\in\mathcal{M}^T_{NL}$ is identifiable up to affine transformations.
\end{proof}

\subsection{Proof of theorem \ref{thm:identifiability_sds:ii}}

So far we have proved the identifiability of the transition function on the latent MSM distribution up to affine transformations. By further assuming injectivity of the piece-wise mapping $\mathcal{F}$, we can prove identifiability of $\mathcal{F}$ up to affine transformations by re-using results from \citet{kivva2022identifiability}. We begin by stating the following known result.

\begin{lemma}\label{lemma:gmm_affine_id}
Let $Z\sim \sum_{j=1}^J \lambda_j \mathcal{N}(\bm{\mu}_j, \Sigma_j)$ where $Z$ is a GMM (in reduced form). Assume that $f:\R^m\rightarrow\R^m$ is a continuous piecewise affine function such that $f(Z)\sim Z$. Then $f$ is affine.
\end{lemma}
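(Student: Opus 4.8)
\emph{Proof proposal.} The plan is to exploit the rigidity of continuous piecewise-affine (PWA) maps together with the analyticity and full support of the Gaussian-mixture density. Write $p(\x)=\sum_{j=1}^J \lambda_j \mathcal{N}(\x;\mu_j,\Sigma_j)$ for the common density of $Z$ and of $f(Z)$; it is real-analytic and strictly positive on $\R^m$. A continuous PWA map is affine on each cell of a finite polyhedral subdivision $\R^m=\bigcup_i \overline{P_i}$, with $f|_{P_i}(\x)=A_i\x+b_i$ and $f$ globally continuous. First I would record the pushforward identity: for almost every $\y$ the fibre $f^{-1}(\y)$ is finite and $p(\y)=\sum_{\x\in f^{-1}(\y)} p(\x)/|\det A_{i(\x)}|$, where $i(\x)$ is the cell containing $\x$.

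Next I would show every full-dimensional cell has invertible linear part: if some $A_i$ were singular then $f(P_i)$ lies in a proper affine subspace of Lebesgue measure zero, yet $\mu_Z(P_i)>0$ by positivity of $p$, so $f_\#\mu_Z$ would charge a null set, contradicting $f_\#\mu_Z=\mu_Z$. With all $A_i$ invertible, fix an open region $V$ of $\y$-space over which the combinatorial type of the fibre is constant (the complement of the finitely many image-hyperplanes). Using $\mathcal{N}(A_i^{-1}(\y-b_i);\mu_j,\Sigma_j)=|\det A_i|\,\mathcal{N}(\y;A_i\mu_j+b_i,A_i\Sigma_j A_i^\top)$, the pushforward identity on $V$ becomes an equality between $p(\y)$ and a finite mixture of Gaussians in $\y$; since both sides are entire-analytic and agree on the open set $V$, they agree on all of $\R^m$ by the identity theorem. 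Integrating, the right-hand side has total mass equal to the number of fibre branches over $V$, while the left has mass $1$, forcing exactly one branch. Hence $f$ is generically injective, and each cell map is a measure-preserving affine symmetry of $\mu_Z$, so by uniqueness of reduced Gaussian-mixture representations $\{(A_i\mu_j+b_i,A_i\Sigma_jA_i^\top)\}_j=\{(\mu_k,\Sigma_k)\}_k$.

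The crux, and what I expect to be \emph{the main obstacle}, is gluing the finitely many cell maps into a single affine map. For two cells sharing a facet in a hyperplane $\{\langle a,\x\rangle=c\}$, continuity forces $A_i-A_{i'}=v a^\top$ for some $v$, and $g:=f|_{P_i}$, $g':=f|_{P_{i'}}$ agree on that hyperplane, so $\phi:=g'\circ g^{-1}$ is an affine symmetry of $\mu_Z$ fixing the image hyperplane $L=g(H)$ pointwise, with linear part $I-v\nu^\top$ where $\nu=A_i^{-\top}a$. The naive hope—that the symmetry group is finite, so a $\phi$ fixing a hyperplane must be the identity—fails, since for $\mathcal{N}(0,I)$ the symmetries form the infinite group $O(m)$ and include reflections fixing a hyperplane. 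I would instead use injectivity: since $f$ is generically one-to-one near the shared facet, $g(P_i)$ and $g'(P_{i'})$ lie on opposite sides of $L$, and a short sign computation shows this is equivalent to $\det\phi=1-\nu^\top v>0$; combined with $|\det\phi|=1$ (measure preservation) this gives $\nu^\top v=0$, so $\phi$ is a \emph{shear} (unipotent), not a reflection. Finally, as $\phi$ permutes the finitely many components, a suitable power $M=\phi^{k}$ fixes each of them, whence $M\Sigma_j M^\top=\Sigma_j$; but a unipotent matrix that is an isometry of the positive-definite form $\Sigma_j^{-1}$ is necessarily the identity, forcing $v=0$ and hence $A_i=A_{i'}$, $b_i=b_{i'}$. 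Since the adjacency graph of the full-dimensional cells is connected, all cell maps coincide, and $f$ is globally affine.
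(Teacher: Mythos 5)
There is nothing in the paper to compare your argument against: the paper introduces this statement with ``We begin by stating the following known result'' and never proves it --- it is imported as a black box from \citet{kivva2022identifiability} (where it is the key theorem on continuous piecewise-affine maps preserving a Gaussian mixture law) and is only invoked inside the proof of Theorem \ref{thm:identifiability-function-sds}. So the only question is whether your self-contained proof is correct, and as far as I can verify it is. The three pillars all hold: (i) a cell with singular linear part would push positive mass onto a Lebesgue-null set, contradicting absolute continuity of $f_{\#}\mu_Z=\mu_Z$; (ii) on a connected open set $V$ of regular values, the change-of-variables identity equates the real-analytic density $p$ with a finite Gaussian mixture whose total mass is the number of fibre branches $r$, and the identity theorem plus normalisation force $r=1$ --- this simultaneously gives generic injectivity and, applied cell by cell, shows that every cell map $g_i$ is a \emph{global} affine symmetry, $(g_i)_{\#}\mu_Z=\mu_Z$, hence permutes the components by uniqueness of reduced mixture representations; (iii) the gluing step. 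Step (iii) is the genuine crux, and you identified the obstruction correctly: the affine symmetry group of a reduced GMM can be infinite (a single spherical Gaussian has symmetry group $O(m)$), so a symmetry $\phi$ fixing the shared image hyperplane $L$ pointwise could a priori be the reflection across $L$ --- exactly the fold a non-affine continuous PWA map would create. Your resolution is sound: generic injectivity forces the two adjacent cell images onto opposite sides of $L$, which (as your sign computation asserts, and I verified: $\langle\nu,\phi(y)\rangle-c'=(1-\nu^{\top}v)(\langle\nu,y\rangle-c')$) is equivalent to $\det\phi=1-\nu^{\top}v>0$; measure preservation gives $|\det\phi|=1$; hence $\nu^{\top}v=0$, $\phi$ is unipotent, a suitable power of $\phi$ fixes each component, and a unipotent isometry of the positive-definite form $\Sigma_j^{-1}$ is necessarily the identity, giving $v=0$ and, by connectedness of the facet-adjacency graph, a single global affine map.

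A few facts you assert in passing are true but should be spelled out in a formal write-up: $|\det\phi|=1$ deserves a line (iterate $\phi$ and use absolute continuity of $\mu_Z$ against shrinking Lebesgue volume, or use the induced permutation of components to get $(\det\Phi)^{2J}\prod_{j}\det\Sigma_j=\prod_{j}\det\Sigma_j$); the local constancy of the fibre cardinality and of the assignment of fibre points to cells over components of regular values needs properness of $f$ (true, since there are finitely many pieces, each invertible affine on a closed cell, so $f$ restricted to the regular set is a finite covering); components of regular values with zero preimages must be ruled out by positivity of $p$; and the existence of a finite polyhedral subdivision is definitional for piecewise-affine maps in \citet{kivva2022identifiability}. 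None of these is a logical gap --- your proof independently reconstructs the cited result, in the same analytic-continuation spirit as the original, and the shear-versus-reflection analysis is exactly the content that makes the lemma true.
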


We can state the identification of $\mathcal{F}$.

\begin{theorem}\label{thm:identifiability-function-sds}
Let $\mathcal{F}, \mathcal{G}: \R^{mT}\rightarrow\R^{nT}$ be continuous invertible factored piecewise affine functions. Let $\z_{1:T}, \z'_{1:T}$ be random variables distributed according to MSMs. Suppose that $\mathcal{F}(\z_{1:T})$ and $\mathcal{G}(\z'_{1:T})$ are equally distributed.

Then there exists a factored affine transformation $\mathcal{H}:\R^{mT}\rightarrow\R^{mT}$ such that $\mathcal{H}(\z_{1:T}) = \z'_{1:T}$ and $\mathcal{G} = \mathcal{F}\circ\mathcal{H}^{-1}$.
\end{theorem}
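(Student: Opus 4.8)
The plan is to reduce this temporal statement to the static Gaussian-mixture result of Lemma \ref{lemma:gmm_affine_id}, exploiting the factored structure of $\mathcal{F},\mathcal{G}$ together with the fact that the time-marginal $p(\z_1)$ of an MSM is an honest finite Gaussian mixture. First I would invoke Theorem \ref{thm:identifiability-affine-msm}: since $\mathcal{F}$ and $\mathcal{G}$ are globally invertible, the invertibility precondition on a ball holds trivially, so there is a factored invertible affine map $\mathcal{H}_0$ (composed by some affine $h_0$) with $\mathcal{H}_0(\z_{1:T}) \stackrel{d}{=} \z'_{1:T}$. Writing $\Phi := \mathcal{G}\circ\mathcal{H}_0$, which is again continuous, injective, factored and piece-wise affine (composed by $g\circ h_0$), the equalities $\mathcal{F}(\z_{1:T})\stackrel{d}{=}\mathcal{G}(\z'_{1:T})$ and $\mathcal{H}_0(\z_{1:T})\stackrel{d}{=}\z'_{1:T}$ combine to give $\mathcal{F}(\z_{1:T})\stackrel{d}{=}\Phi(\z_{1:T})$.

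Next I would define the factored map $\mathcal{W} := \mathcal{F}^{-1}\circ\Phi$, composed by $w := f^{-1}\circ g\circ h_0 : \R^m\to\R^m$. Because $\mathcal{F}$ is injective, applying $\mathcal{F}^{-1}$ to both sides of $\mathcal{F}(\z_{1:T})\stackrel{d}{=}\Phi(\z_{1:T})$ yields $\mathcal{W}(\z_{1:T})\stackrel{d}{=}\z_{1:T}$. The key reduction is to pass to the first time coordinate: since $\mathcal{W}$ is factored, its first component is $w(\z_1)$, and the distributional identity for the joint implies $w(\z_1)\stackrel{d}{=}\z_1$ for the marginal. Under the MSM the marginal $p(\z_1) = \sum_{k}p(s_1=k)\,\mathcal{N}(\z_1;\bm{\mu}(k),\bm{\Sigma}_1(k))$ is a finite Gaussian mixture, which we may put in reduced form by merging components with identical moments. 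Hence Lemma \ref{lemma:gmm_affine_id} applies with $Z=\z_1$ and the continuous piece-wise affine $w$, and concludes that $w$ is affine; consequently $\mathcal{W}$ is a factored affine map.

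Finally I would assemble the two pieces. From $\mathcal{W}=\mathcal{F}^{-1}\circ\Phi$ we get $\Phi=\mathcal{F}\circ\mathcal{W}$, i.e.\ $\mathcal{G}\circ\mathcal{H}_0=\mathcal{F}\circ\mathcal{W}$, so setting $\mathcal{H}:=\mathcal{H}_0\circ\mathcal{W}^{-1}$ (a factored invertible affine map, since both factors are, using Prop.~\ref{prop:msm_affine_transform} for closure) gives $\mathcal{G}=\mathcal{F}\circ\mathcal{H}^{-1}$. It remains to verify $\mathcal{H}(\z_{1:T})\stackrel{d}{=}\z'_{1:T}$: because $\mathcal{W}$ is affine with $\mathcal{W}_{\#}p=p$ we also have $\mathcal{W}^{-1}_{\#}p=p$, so $\mathcal{W}^{-1}(\z_{1:T})\stackrel{d}{=}\z_{1:T}$ and therefore $\mathcal{H}(\z_{1:T})=\mathcal{H}_0(\mathcal{W}^{-1}(\z_{1:T}))\stackrel{d}{=}\mathcal{H}_0(\z_{1:T})\stackrel{d}{=}\z'_{1:T}$, as required.

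The step I expect to be the main obstacle is making $\mathcal{W}=\mathcal{F}^{-1}\circ\Phi$ globally well defined as a continuous piece-wise affine map on all of $\R^{Tm}$, equivalently ensuring $g(h_0(\R^m))\subseteq f(\R^m)$ so that $w=f^{-1}\circ g\circ h_0$ is defined everywhere. This requires an image-coincidence argument: since $\z_1$ has full support and $f(\z_1)\stackrel{d}{=}(g\circ h_0)(\z_1)$, the supports $\overline{f(\R^m)}$ and $\overline{(g\circ h_0)(\R^m)}$ agree, and one must promote this closure equality to an inclusion of the images for the injective piece-wise affine maps (handled exactly as in \citet{kivva2022identifiability} via restrictions and piece-wise-affine continuation). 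Everything else is routine bookkeeping of factored compositions.
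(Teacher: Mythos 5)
Your proposal is correct and follows essentially the same route as the paper's own proof: both first invoke Theorem \ref{thm:identifiability-affine-msm} to obtain a factored invertible affine map relating the two priors, then use the factored structure to reduce the distribution-preserving composite (your $\mathcal{W}=\mathcal{F}^{-1}\circ\mathcal{G}\circ\mathcal{H}_0$ is just the inverse of the paper's $\mathcal{H}_1^{-1}\circ\mathcal{G}^{-1}\circ\mathcal{F}$) to a single-time-step map preserving the Gaussian-mixture marginal $p(\z_1)$, and conclude via Lemma \ref{lemma:gmm_affine_id} before assembling $\mathcal{H}$. The image-coincidence/well-definedness issue you flag at the end is genuine but is equally present (and silently glossed over) in the paper's proof, so it does not constitute a difference in approach.
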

\begin{proof}
From Theorem \ref{thm:identifiability-affine-msm}, there exists an invertible affine transformation $\mathcal{H}_1: \R^{mT}\rightarrow\R^{mT}$ such that $\mathcal{H}_1 (\z_{1:T}) = \z'_{1:T}$. Then, $\mathcal{F}(\z_{1:T}) \sim \mathcal{G}(\mathcal{H}_1(\z_{1:T}))$. From the invertibility of $\mathcal{G}$, we have $\z_{1:T} \sim (\mathcal{H}_1^{-1}\circ \mathcal{G}^{-1}\circ \mathcal{F}) (\z_{1:T})$. We note that $\mathcal{H}_1, \mathcal{G}, \mathcal{F}$ are factored mappings, and structured as follows
$$
\left(\mathcal{H}_1^{-1}\circ \mathcal{G}^{-1}\circ \mathcal{F}\right)\left(\z_{1:T}\right) = \begin{pmatrix}
    \left(h^{-1}_1\circ g^{-1} \circ f\right) \left(\z_1\right)\\
    \vdots \\
    \left(h^{-1}_1\circ g^{-1} \circ f\right) \left(\z_T\right)
\end{pmatrix} \sim \begin{pmatrix}
     \left(\z_1\right)\\
    \vdots \\
     \left(\z_T\right)
\end{pmatrix},
$$
where the inverse of $\mathcal{H}_1$ is also factored, as observed from previous results. Since the transformation is equal for $1\leq t \leq T$, we can proceed for $t=1$ considering $\z_1$ is distributed as a GMM (in reduced form), as it corresponds to the initial distribution of the MSM. Then, by Lemma \ref{lemma:gmm_affine_id}, there exists an affine mapping $h_2:\R^{m}\rightarrow\R^{m}$ such that $h^{-1}_1\circ g^{-1} \circ f = h_2$. Then,
$$
\mathcal{F} = \begin{pmatrix}
    f\\
    \vdots \\
    f
\end{pmatrix} = \begin{pmatrix}
    g \circ h_1 \circ h_2 \\
    \vdots \\
    g \circ h_1 \circ h_2 \\
\end{pmatrix} = \begin{pmatrix}
    \mathcal{G} \circ \mathcal{H}
\end{pmatrix},
$$
where $h = h_1 \circ h_2$. Considering the invertibility of $\mathcal{G}$ and the fact that $\mathcal{F}(\z_{1:T})$ and $\mathcal{G}(\z'_{1:T})$ are equally distributed, we also have $\mathcal{H}(\z_{1:T}) = \z'_{1:T}$.
\end{proof}

We use the previous result to prove Theorem \ref{thm:identifiability_sds:ii}.
\begin{proof}
    We assume there exists another model that generates the same distribution from Eq.(\ref{eq:generative_model_sns}), whith a prior $p'\in \mathcal{M}^T_{NL}$ under assumptions (a1-a2), and non-linear emmision $\mathcal{F}'$, composed by $f$ which is continuous, injective and piece-wise linear: i.e. $(\mathcal{F}\#p )(\x_{1:T}) = (\mathcal{F}'\#p' )(\x_{1:T})$.

    These are the preconditions to satisfy Theorem \ref{thm:identifiability-function-sds}, which implies there exists an affine transformation $\mathcal{H}$ such that $\mathcal{H}(\z_{1:T}) = \z'_{1:T}$ and $\mathcal{F} = \mathcal{F}' \circ \mathcal{H}$. In other words, the prior $p\in \mathcal{M}^T_{NL}$, and $f$ which composes $\mathcal{F}$ are identifiable up to affine transformations.
\end{proof}

\section{Estimation details}\label{app:estimation}

We provide additional details from the descriptions in the main text. 

\subsection{Expectation Maximisation on MSMs}

For convenience, the expressions below are computed from samples $\{\z_{1:T}^b\}_{b=1}^B$ for a batch of size $B$. Recall we formulate our method in terms of the expectation maximisation (EM) algorithm. Given some arrangement of the parameter values ($\theta'$), the E-step computes the posterior distribution of the latent variables $p_{\theta'}(\s_{1:T}| \z_{1:T})$. This can then be used to compute the expected log-likelihood of the complete data (latent variables and observations),
\begin{equation}\label{eq:complete-log-likelihood}
    \mathcal{L}(\mparam, \mparam') := \frac{1}{B}\sum_{b=1}^B \mathbb{E}_{p_{\mparam'}(\s_{1:T}^b|\z_{1:T}^b)} \bigg[ \log p_{\mparam}(\z_{1:T}^b, \s_{1:T}^b)\bigg].
\end{equation}
Given a first-order stationary Markov chain, we denote the posterior probability $p_{\mparam}(s^b_t=k|\z_{1:T}^b)$ as $\gamma_{t,k}^b$, and the joint posterior of two consecutive states $p_{\mparam}(s^b_t=k, s^b_{t-1}=l|\z_{1:T}^b)$ as $\xi^b_{t,k,l}$. For this case, the result is equivalent to the HMM case and can be found in the literature, e.g. \citet{bishop2006pattern}. We can then compute a more explicit form of Eq. (\ref{eq:complete-log-likelihood}),
\begin{multline}
    \mathcal{L}(\mparam, \mparam') = \frac{1}{B}\sum_{b=1}^B \sum_{k=1}^K \gamma_{1,k}^b \log \pi_k + \frac{1}{B}\sum_{b=1}^B \sum_{t=2}^T\sum_{k=1}^K \sum_{l=1}^K \xi_{t,k,l}^b \log Q_{lk} + \\ \frac{1}{B}\sum_{b=1}^B \sum_{k=1}^K \gamma_{1,k}^b \log p_{\mparam}(\z^b_1|, s^b_1=k) + \frac{1}{B}\sum_{b=1}^B \sum_{t=2}^T \sum_{k=1}^K \gamma_{t,k}^b \log p_{\mparam}(\z^b_t| \z^b_{t-1}, s^b_t=k),
\end{multline}
where $\pi$ and $Q$ denote the initial and transition distribution of the Markov chain.
In the M-step, the previous expression is maximised to calculate the update rules for the parameters, i.e. $\mparam^{\text{new}} = \arg\max_{\mparam}\mathcal{L}(\mparam, \mparam')$. The updates for $\pi$ and $Q$ are also obtained using standard results for HMM inference (again see \citet{bishop2006pattern}). Assuming Gaussian initial and transition densities, we can also use standard literature results for updating the initial mean and covariance. For the transition densities, we consider a family with fixed covariance matrices, and only the means $\bm{m}_{\mparam}(\cdot, k)$ are dependent on the previous observation. In this case, the standard results can also be used to update the covariances of the transition distributions. We drop the subscript $\mparam$ for convenience.

The updates for the mean parameters are dependent on the functions we choose. For multivariate polynomials of degree $P$, we can recover an exact M-step by transforming the mapping into a matrix-vector operation:
\begin{equation}
     \bm{m}(\z_{t-1},k) = \sum_{c=1}^C A_{k,c}\hat{\z}_{c,t-1},\quad \hat{\z}_{t-1}^T = \begin{pmatrix}
        1 & z_{t-1,1} & \dots & z_{t-1,d} & z_{t-1,1}^2 & z_{t-1,1}z_{t-1,2} & \dots
    \end{pmatrix},
\end{equation}
where $\hat{\z}_{t-1}\in\mathbb{R}^{C}$ denotes the polynomial features of $\z_{t-1}$ up to degree $P$. The total number of features is $C = \binom{P+d}{d}$ and the exact update for $A_k$ is
\begin{equation}
    A_k \leftarrow \left(\sum_{b=1}^B \sum_{t=2}^T \gamma_{t,k}^b \z^b_{t} (\hat{\z}^b_{t-1})^T\right)\left(\sum_{b=1}^B \sum_{t=2}^T \gamma_{t,k}^b \hat{\z}^b_{t-1} (\hat{\z}^b_{t-1})^T\right)^{-1}.
\end{equation}

For exact updates such as the one above, we require $B$ to be sufficiently large to ensure consistent updates during training. In the main text, we already discussed the case where the transition means are parametrised by neural networks.

\subsection{Variational Inference for SDSs}\label{app:vi_sds}

We provide more details on the ELBO objective for SDSs.

\begin{align}
\log p_{\mparam}(\x_{1:T}) &=\log \int \sum_{\s_{1:T}} p_{\mparam}(\x_{1:T}, \z_{1:T}, \s_{1:T})d\z_{1:T}  \\
& \geq \mathbb{E}_{q_{\phi,\mparam}(\z_{1:T},\s_{1:T}| \x_{1:T})}\left[\log\frac{p_{\mparam}(\x_{1:T},\z_{1:T}| \s_{1:T})p_{\mparam}(\s_{1:T})}{q_{\phi}(\z_{1:T},\s_{1:T}| \x_{1:T})}  \right]\\
& \geq \mathbb{E}_{q_{\phi}(\z_{1:T}| \x_{1:T})}\left[\log\frac{p_{\mparam}(\x_{1:T}|\z_{1:T})}{q_{\phi}(\z_{1:T}| \x_{1:T})}  + \mathbb{E}_{p_{\mparam}(\s_{1:T}| \z_{1:T})}\left[\log\frac{p_{\mparam}(\z_{1:T}|\s_{1:T})p_{\mparam}(\s_{1:T})}{p_{\mparam}(\s_{1:T}| \z_{1:T})}\right]\right]\\
& \geq \mathbb{E}_{q_{\phi}(\z_{1:T}| \x_{1:T})}\left[\log\frac{p_{\mparam}(\x_{1:T}|\z_{1:T})}{q_{\phi}(\z_{1:T}| \x_{1:T})} +   \log p_{\mparam}(\z_{1:T})\right]\\
& \approx \log p_{\mparam}(\x_{1:T}|\z_{1:T}) + \log p_{\mparam}(\z_{1:T}) - \log q_{\phi}(\z_{1:T}|\x_{1:T}), \quad \z_{1:T} \sim q_{\phi}
\end{align}
where as as mentioned, we compute the ELBO objective using Monte Carlo integration with samples $\z_{1:T}$ from $q_{\phi}$, and $p_{\mparam}(\z_{1:T})$ is computed using Eq. (\ref{eq:alpha}). Alternatively, \citet{dong2020collapsed} proposes computing the gradients of the latent MSM using the following rule. 
\begin{equation}
    \nabla \log p_{\mparam}(\x_{1:T}, \z_{1:T}) = \mathbb{E}_{p_{\mparam}(\s_{1:T}| \z_{1:T})}\left[\nabla\log p_{\mparam}(\x_{1:T}, \z_{1:T}, \s_{1:T}\right]
\end{equation}
where the objective is similar to Eq.(\ref{eq:update_rule}). Below we reflect on the main aspects of each method.
\begin{itemize}
\item \citet{dong2020collapsed} computes the parameters of the latent MSM using a loss term similar to Eq. (\ref{eq:update_rule}). Although we need to compute the exact posteriors explicitly, we only take the gradient with respect to $\log p_{\mparam}(\z_t| \z_{t-1}, s_t=k)$ which is relatively efficient. Unfortunately, the approach is prone to state collapse and additional loss terms with annealing schedules need to be implemented.
\item \citet{ansari2021deep} does not require computing exact posteriors as the parameters of the latent MSM are optimized using the forward algorithm. The main disadvantage is that we require back-propagation to flow through the forward computations, which is more inefficient. Despite this, the objective used is less prone to state collapse and optimisation becomes simpler.
\end{itemize}
Although both approaches show good performance empirically, we observed that training becomes a difficult task and requires careful hyper-parameter tuning and multiple initialisations. Note that the methods are not (a priori) theoretically consistent with the previous identifiability results. Since exact inference is not tractable in SDSs, one cannot design a consistent estimator such as MLE. Future developments should focus on combining the presented methods with tighter variational bounds \citep{maddison2017filtering} to design consistent estimators for such generative models.

\section{Experiment details}
\label{app:experiment_details}

\subsection{Metrics}\label{app:dist_comp}

\paragraph{Markov Switching Models} Consider $K$ components, where as described the evaluation is performed by computing the averaged sum of the distances between the estimated function components. Since we have identifiability of the function forms up to permutations, we need to compute distances with all the permutation configurations to resolve this indeterminacy. Therefore, we can quantify the estimation error as follows
\begin{equation}\label{eq:error_equation}
\text{err} := \min_{\textbf{k}=\text{perm}\left(\{1,\dots,K\}\right)} \frac{1}{K}\sum_{i=1}^K d(\bm{m}(\cdot,i), \hat{\bm{m}}(\cdot,k_i)),
\end{equation}
where $d(\cdot, \cdot)$ denotes the L2 distance between functions. We compute an approximate L2 distance by evaluating the functions on points sampled from a random region of $\R^m$ and averaging the Euclidean distance, more specifically we sample $10^5$ in the $[-1,1]^d$ interval for each evaluation.
\begin{align}
    d(f, g) &:= \int_{\x\in[-1,1]^d}\sqrt{\big|\big|f(\x) - g(\x) \big|\big|^2}d\x \\
    & \approx \frac{1}{10^5}\sum_{i=1}^{10^5}\sqrt{\big|\big|f\left(\x^{(i)}\right) - g\left(\x^{(i)}\right) \big|\big|^2},\quad \x^{(i)}\sim Uniform([-1,1]^m)
\end{align}

Note that resolving the permutation indeterminacy has a cost of $\mathcal{O}(K!)$, which for $K>5$ already poses some problems in both monitoring performance during training and testing. To alleviate this computational cost, we take a greedy approach, where for each estimated function component we pair it with the ground truth function with the lowest L2 distance. Note that this can return a suboptimal result when the functions are not estimated accurately, but the computational cost is reduced to $\mathcal{O}(K^2)$.

\paragraph{Switching Dynamical Systems} To compute the $L_2$ distance for the transitions means in SDSs, we first need to resolve the linear transformation in Eq.(\ref{eq:transition_means}). Thus, we compute the following
\begin{equation}
\argmin_{h}\bigg\{d\Big(f, (f'\circ h)\Big)\bigg\}
\end{equation}
where $f$,$f'$ compose the groundtruth $\mathcal{F}$ and estimated $\mathcal{F}'$ non-linear emissions respectively, and $h$ denotes the affine transformation. We compute the above $L_2$ norm using $500$ generated observations from a held out test dataset. Finally, we compute the error as in Eq. (\ref{eq:error_equation}), and with the following $L_2$ norm.
\begin{equation}
    d\bigg(\bm{m}\big(\cdot,i\big), A\hat{\bm{m}}\big(A^{-1}(\z - \Bb),\sigma(i)\big) + \Bb)\bigg)
\end{equation}
which is taken from Eq.(\ref{eq:transition_means}). Similarly, we compute the norm using samples from the ground truth latent variables, generated from the held out test dataset. To resolve the permutation $\sigma(\cdot)$, we first compute the $F_1$ score on the segmentation task as indicated, by counting the total true positives, false negatives and false positives. Then, $\sigma(\cdot)$ is determined from the permutation with highest $F_1$ score.

\subsection{Averaged Jacobian and causal structure computation}\label{app:jacobi_comp}

Regarding regime-dependent causal discovery, our approach can be considered as a functional causal model-based method (see \citet{glymour2019review} for the complete taxonomy). In such methods, the causal structure is usually estimated by inspecting the parameters that encode the dependencies between data, rather than performing independence tests \citep{tank2018neural}. In the linear case, we can threshold the transition matrix to obtain an estimate of the causal structure \citep{pamfil2020dynotears}. The non-linear case is a bit more complex since the transition functions are not separable among variables, and the Jacobian can differ considerably for different input values. With the help of locally connected networks, \citet{zheng2018dags} aim to encode the variable dependencies in the first layer, and perform similar thresholding as in the linear case. To encourage that the causal structure is captured in the first layer and prevent it from happening in the next ones, the weights in the first layer are regularised with L1 loss to encourage sparsity, and all the weights in the network are regularised with L2 loss. 

In our experiments, we observe this approach requires enormous finetuning with the potential to sacrifice the flexibility of the network. Instead, we estimate the causal structure by thresholding the averaged absolute-valued Jacobian with respect to a set of samples. We denote the Jacobian of $\hat{\bm{m}}(\z,k)$ as $\bm{J}_{\hat{\bm{m}}(\cdot,k)} (\z)$. To ensure that the Jacobian captures the effects of the regime of interest, we use samples from the data set and classify them with the posterior distribution. In other words, we will create $K$ sets of variables, where each set $\mathcal{Z}_k$ with size $N_K=|\mathcal{Z}_k|$ contains variables that have been selected using the posterior, i.e. $\z^{(i)}\in\mathcal{Z}_k$ if $k=\argmax p_{\mparam}(\s^{(i)}|\z_{1:T})$, where we use the index $i$ to denote that $\z^{(i)}$ is associated with $\s^{(i)}$. Then, for a given regime $k$, the matrix that encodes the causal structure $\hat{G}_k$ is expressed as
\begin{equation}
    \hat{G}_k := \1\left(  \frac{1}{N_k}\sum_{i=1}^{N_k}\left|\bm{J}_{\hat{\bm{m}}(\cdot,k)} \left(\z^{(i)}\right)\right| > \tau \right), \quad \z^{(i)}\in\mathcal{Z}_k,
\end{equation}
where $\1(\cdot)$ is an indicator function which equals to 1 if the argument is true and 0 otherwise. We $\tau=0.05$ in our experiments. Finally, we evaluate the estimated $K$ regime-dependent causal graphs can be evaluated in terms of the average F1-score over components.

\subsection{Training specifications}\label{app:training_specs}

All the experiments are implemented in Pytorch \citep{paszke2017automatic} and carried out on NVIDIA RTX 2080Ti GPUs, except for the experiments with videos (synthetic and salsa), where we used NVIDIA RTX A6000 GPUs. 

\paragraph{Markov Switching Models} When training polynomials (including the linear case), we use the exact batched M-step updates with batch size $500$ and train for a maximum of $100$ epochs, and stop when the likelihood plateaus. When considering updates in the form of Eq. (\ref{eq:update_rule}), e.g. neural networks, we use ADAM optimiser \citep{kingma2014adam} with an initial learning rate $7\cdot 10^{-3}$ and decrease it by a factor of $0.5$ on likelihood plateau up to 2 times. We vary the batch size and maximum training time depending on the number of states and dimensions. For instance, for $K=3$ and $d=3$, we use a batch size of $256$ and train for a maximum of $25$ epochs. For other configurations, we decrease the batch size and increase the maximum training time to meet GPU memory requirements. Similar to related approaches \citep{halva2020hidden}, we use random restarts to achieve better parameter estimates.

\paragraph{Switching Dynamical Systems} Since training SDSs requires careful hyperparameter tuning for each setting, we provide details for each setting separately.
\begin{itemize}
    \item For the synthetic experiments, we use batch size $100$, and we train for $100$ epochs. We use ADAM optimiser \citep{kingma2014adam} with an initial learning rate $5\cdot 10^{-4}$, and decrease it by a factor of $0.5$ every 30 epochs. To avoid state collapse, we perform an initial warm-up phase for 5 epochs, where we train with fixed discrete state parameters $\pi$ and $Q$, which we fix to uniform distributions. We run multiple seeds and select the best model on the ELBO objective. Regarding the network architecture, we estimate the transition means using two-layer networks with cosine activations and 16 hidden dimensions, and the non-linear emission using two-layer networks with Leaky ReLU activations with 64 hidden dimensions. For the inference network, the bi-directional RNN has 2 hidden layers and 64 hidden dimensions, and the forward RNN has an additional 2 layers with 64 hidden dimensions. We use LSTMs for the RNN updates.
    \item For the synthetic videos, we vary some of the above configurations. We use batch size $64$ and train for $200$ epochs with the same optimiser and learning rate, but we now decrease it by a factor of $0.8$ every $80$ epochs. Instead of running an initial warm-up phase, we devise a three-stage training. First, we pre-train the encoder (emission) and decoder networks, where for $10$ epochs the objective ignores the terms from the MSM prior. The second phase is inspired by \citet{ansari2021deep}, where we use softmax with temperature on the logits of $\pi$ and $Q$. To illustrate, we use softmax with temperature as follows
    \begin{equation}
        Q_{k,:} = p(s_t|s_{t-1}=k) = Softmax(\mathbf{o}_k/\tau), \quad t\in\{2,\dots ,T\}
    \end{equation}
    where $\mathbf{o}_k$ are the logits of $p(s_t|s_{t-1}=k)$ and $\tau$ is the temperature. We start with $\tau=10$, and decay it exponentially every $50$ iterations by a factor of $0.99$ after the pre-training stage. The third stage begins when $\tau=1$, where we train the model as usual. Again, we run multiple seeds and select the best model on the ELBO. The network architecture is similar, except that we use additional CNNs for inference and generation. For inference, we use a 5-layer CNN with $64$ channels, kernel size $3$, and padding $1$, Leaky ReLU activations, and we alternate between using stride $2$ and $1$. We then run a 2-layer MLP with Leaky ReLU activations and 64 hidden dimensions and forward the embedding to the same RNN inference network we described before. For generation, we use a similar network, starting with a 2-layer MLP with Leaky ReLU activations and 64 hidden dimensions, and use transposed convolutions instead of convolutions (with the same configuration as before).
    \item For the salsa dancing videos, we use batch size $8$ and train for a maximum of $400$ epochs. We use the same optimiser and an initial learning rate of $10^{-4}$ and stop on ELBO plateau. We use a similar three-stage training as before, where we pre-train the encoder-decoder networks for $10$ epochs. For the second stage, we start with $\tau=100$ and decay it by a factor of $0.975$ every 50 iterations after the pre-training phase. As always, we run multiple seeds and select the best model on ELBO. For the network architectures, we use the same as in the previous synthetic video experiment, but we use 7-layer CNNs, we increase all the network sizes to 128, and we use a latent MSM of $256$ dimensions with $K=3$ components. The transitions of the continous latent variables are parametrised with 2-layer MLPs with SofPlus activations and 256 hidden dimensions.
\end{itemize}

\subsection{Synthetic experiments}\label{app:synthetic_exp}

For data generation, we sample $N=10000$ sequences of length $T=200$ in terms of a stationary first-order Markov chain with $K$ states. The transition matrix $Q$ is set to maintain the same state with probability $90\%$ and switch to the next state with probability $10\%$, and the initial distribution $\pi$ is the stationary distribution of $Q$. The initial distributions are Gaussian components with means sampled from $\mathcal{N}(0,0.7^2\mathbf{I})$ and the covariance matrix is $0.1^2\mathbf{I}$. The covariance matrices of the transition distributions are fixed to $0.05^2\mathbf{I}$, and the mean transitions $\bm{m}(\z,k), k=1,\dots,K$ are parametrised using polynomials of degree $3$ with random weights, random networks with cosine activations, or random networks with softplus activations. For the locally connected networks \citep{zheng2018dags}, we use cosine activation networks, and the sparsity is set to allow 3 interactions per element on average. All neural networks consist of two-layer MLPs with 16 hidden units. 

When experimenting with SDSs, we use $K=3$, $T=200$, and $N=5000$ of a 2D latent MSM with cosine network transitions and fixed covariances . We then further generate observations using two-layer Leaky ReLU networs with 8 hidden units. For synthetic videos, we render a ball on $32\times 32$ coloured images from the 2D coordinates of the latent variables. When rendering images, the MSM trajectories are scaled to ensure the ball is always contained in the image canvas.

In Table \ref{tab:network_type_exp} and Table \ref{tab:dim_exp_msm} we show the mean and $95$-CI from the experiments reported in Figure \ref{fig:functions} and Figure \ref{fig:l2_dist} respectively; where we use datasets generated using 5 different seeds.  In Figure \ref{fig:synth_viz}, we show visualisations of some function responses for the experiment considering increasing variables and states (figs. \ref{fig:l2_dist} and \ref{fig:f1_score}). For $K=3$ states and $m=3$ dimensions, we achieve $3\cdot 10^{-3}$ L2 distance error on average and the responses in Figure \ref{fig:synth_3_dims_3_states} show low discrepancies with respect to the ground truth. Similar observations can be made with $K=10$ states in Figure \ref{fig:synth_3_dims_10_states}, where the $L_2$ distance error is $11^{-2}$ on average. Furthermore, Table \ref{tab:table_sds} shows details of the results reported in Figure \ref{fig:exp_sds}.

\begin{table}
    \centering
\caption{Mean and $95$-CI of the $L_2$ distances reported in Figure \ref{fig:functions} (computed across 5 datasets), where we experiment with increasing sequence lengths and different network types.}
\begin{tabular}{c|c|c|c}
\toprule
\multirow{2}{*}{Network type}& \multicolumn{3}{c}{Sequence length ($T$)} \\
 & 10 & 100 & 1000 \\
 \midrule
Cubic & $4.466 \pm 0.814$ & $0.795 \pm 0.448$ & $0.077 \pm 0.115$ \\
\hline
Cosine & $0.276 \pm 0.238$ & $0.041 \pm 0.032$ & $0.037 \pm 0.020$ \\
\hline
Softplus & $2.249 \pm 1.045$ & $0.078 \pm 0.137$ & $0.005 \pm 0.002$ \\
\bottomrule
\end{tabular}

    \label{tab:network_type_exp}
\end{table}

\begin{table}
    \centering
    \caption{Mean and $95$-CI of the $L_2$ distances reported in Figure \ref{fig:l2_dist} (computed across 5 datasets), where we experiment with increasing dimensions and states.}
    \begin{tabular}{c|c|c|c|c|c|c}
\toprule
\multirow{2}{*}{States ($K$)}& \multicolumn{6}{c}{Observation dimensions ($m$)} \\ 
 & 3 & 5 & 10 & 20 & 30 & 50 \\
\midrule
3 & $0.003 \pm 0.005$ & $0.004 \pm 0.004$ & $0.004 \pm 0.002$ & $0.015 \pm 0.005$ & $0.012 \pm 0.006$ & $0.064 \pm 0.010$ \\
\hline
5 & $0.004 \pm 0.002$ & $0.008 \pm 0.003$ & $0.009 \pm 0.005$ & $0.023 \pm 0.007$ & $0.032 \pm 0.005$ & $0.052 \pm 0.029$ \\
\hline
10 & $0.011 \pm 0.008$ & $0.014 \pm 0.007$ & $0.013 \pm 0.003$ & $0.031 \pm 0.017$ & $0.042 \pm 0.005$ & $0.054 \pm 0.023$ \\
\bottomrule
\end{tabular}
    \label{tab:dim_exp_msm}
\end{table}

\begin{figure}
    \centering
    \begin{subfigure}{\linewidth}
    \centering
    \quad
    \begin{minipage}{0.04\linewidth}
    \centering
      \caption{}
      \label{fig:synth_3_dims_3_states}
    \end{minipage}\hfill
    \begin{minipage}{0.9\linewidth}
    \includegraphics[width=\linewidth]{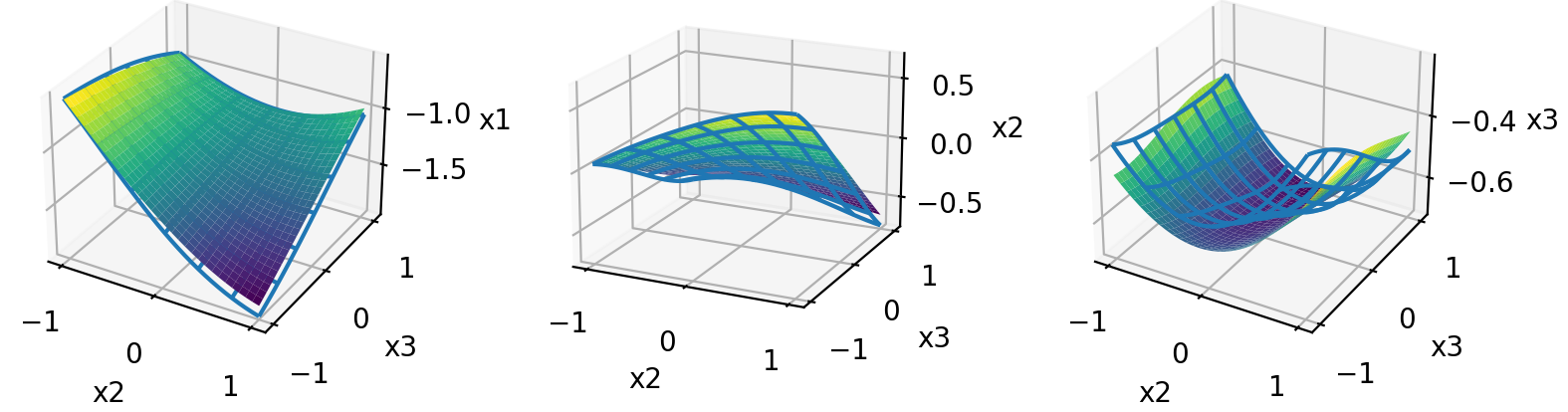}
    \end{minipage}
\end{subfigure}
\begin{subfigure}{\linewidth}
    \centering
    \quad
    \begin{minipage}{0.04\linewidth}
    \centering
      \caption{}
      \label{fig:synth_3_dims_10_states}
    \end{minipage}\hfill
    \begin{minipage}{0.9\linewidth}
    \includegraphics[width=\linewidth]{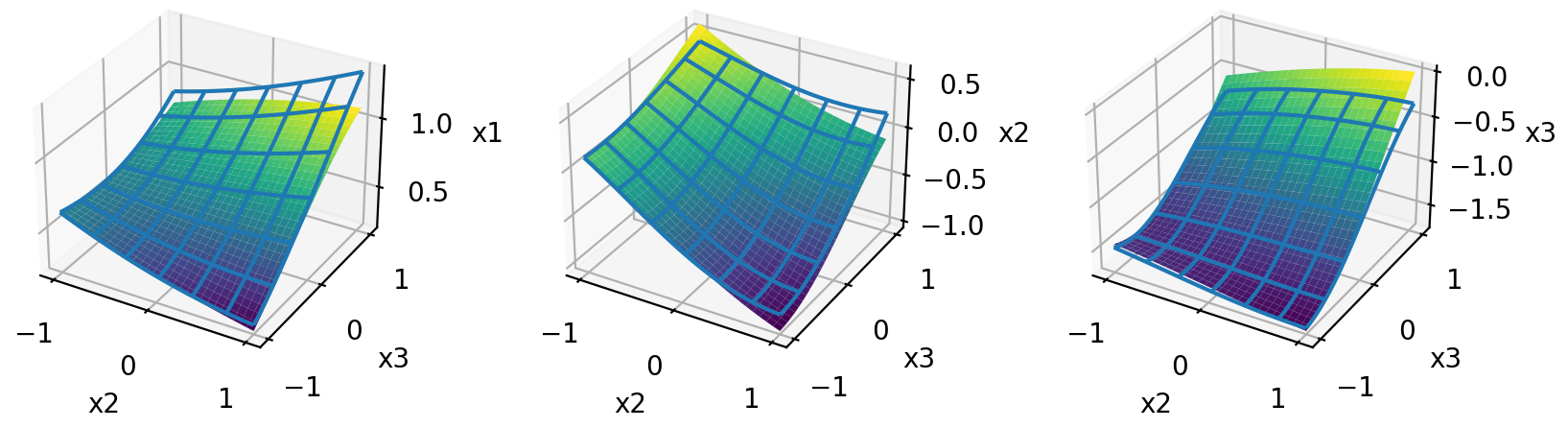}
    \end{minipage}
\end{subfigure}
  \vspace{-.5em}
  \caption{Function responses using $3$ dimensions where we vary $x_2$ and $x_3$. Column $i$ shows the response with respect to the $i$-th dimension. The blue grid shows the ground truth function for (a) 3 states showing component 1, and (b) 10 states showing component 6.}
  \label{fig:synth_viz}
  \vspace{-1.5em}
\end{figure}

\begin{table}
    \centering
    \caption{Mean and $95$-CI of the $L_2$ distance and state $F_1$ scores reported in Figure \ref{fig:exp_sds} (computed across 5 datasets), where we experiment with increasing observed dimensions.}
\begin{tabular}{c|c|c|c|c|c|c}
\toprule
\multirow{2}{*}{Metric}& \multicolumn{6}{c}{Obs. dimensions} \\
 & 2 & 5 & 10 & 50 & 100 & Image \\
\midrule
$L_2$ dist. & $0.083 \pm 0.083$ & $0.036 \pm 0.043$ & $0.033 \pm 0.063$ & $0.082 \pm 0.136$ & $0.054 \pm 0.083$ & $0.384 \pm 0.257$ \\
\hline
State $F_1$  & $0.991 \pm 0.008$ & $0.999 \pm 0.001$ & $0.998 \pm 0.003$ & $0.994 \pm 0.015$ & $0.999 \pm 0.001$ & $0.978 \pm 0.019$ \\
\bottomrule
\end{tabular}
    \label{tab:table_sds}
\end{table}

\subsection{ENSO-AIR experiment}\label{app:enso_exp}

\begin{wrapfigure}{R}{0.6\textwidth}
\centering
\vspace{-2.5em}
\includegraphics[width=\linewidth]{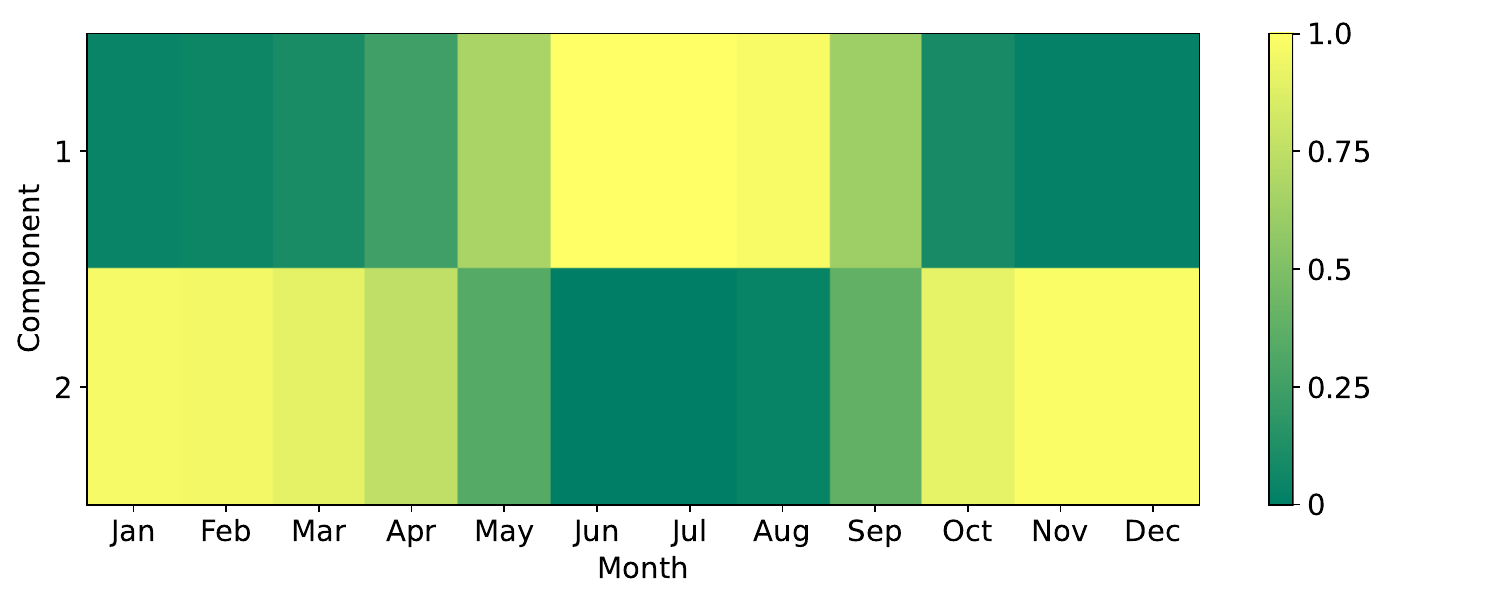}
\vspace{-2em}
\caption{Posterior distribution grouped by month.}
\label{fig:posterior_enso}
\vspace{-1em}
\end{wrapfigure}%

The data consists of monthly observations of El Ni\~{n}o Southern Oscillation (ENSO) and All India Rainfall (AIR), starting from 1871 to 2016. Following the setting in \citet{saggioro2020reconstructing}, we more specifically use the indicators \textit{Niño 3.4 SST Index}\footnote{Extracted from \url{https://psl.noaa.gov/gcos_wgsp/Timeseries/Nino34/}.} and \textit{All-India Rainfall precipitation}\footnote{Extracted from \url{https://climexp.knmi.nl/getindices.cgi?STATION=All-India_Rainfall&TYPE=p&WMO=IITMData/ALLIN}.} respectively. In total, we have $N=1$ samples with $T=1752$ time steps and consider $K=2$ components.

In the main text, we claim that our approach captures regimes based on seasonality. To visualise this, We group the posterior distribution by month (fig. \ref{fig:posterior_enso}), where similar groupings arise from both models, and observe that one component is assigned to Summer months (from May to September), and the other is assigned to Winter months (from October to April). To better illustrate the seasonal dependence present in this data. We show the function responses assuming linear and non-linear (softplus networks) transitions in Figures \ref{fig:enso_viz_linear} and \ref{fig:enso_viz_nonlinear} respectively. In the linear case, we observe that the function responses on the ENSO variable are invariant across regimes. However, the response on the AIR variable varies across regimes, as we observe that the slope with respect to the ENSO input is zero in Winter, and increases slightly in Summer. This visualisation is consistent with the results reported in the regime-dependent graph (fig. \ref{fig:results_enso}). In the non-linear case, we now observe that the responses of the ENSO variable are slightly different, but the slope differences in the responses of the AIR variable with respect to the ENSO input are harder to visualise. The noticeable difference is that the self-dependency of the AIR variable changes non-linearly across regimes, contrary to the linear case where the slope with respect to AIR input was constant.

\begin{figure}
    \centering
    \begin{subfigure}{0.49\linewidth}
      \includegraphics[width=\linewidth]{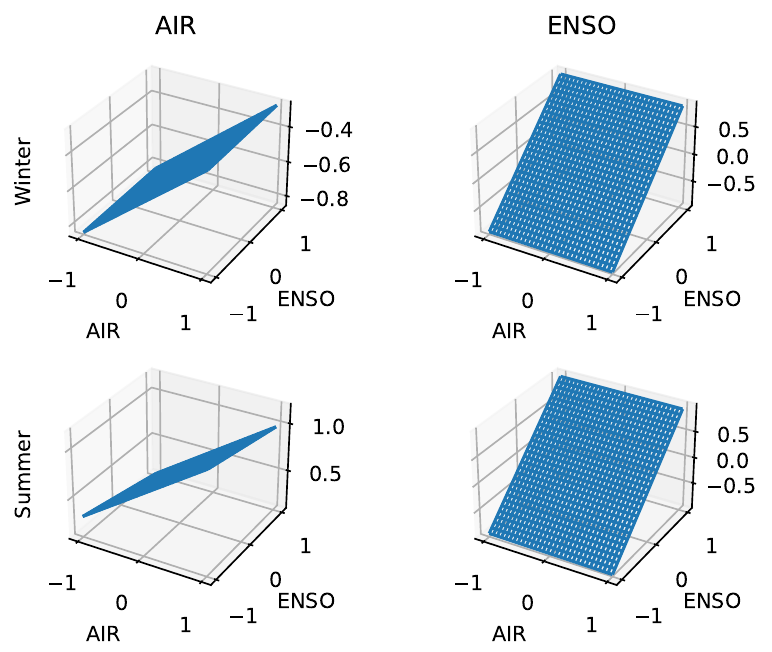}
      \caption{}
      \label{fig:enso_viz_linear}
    \end{subfigure}
    \begin{subfigure}[b]{0.49\linewidth}
      \includegraphics[width=\linewidth]{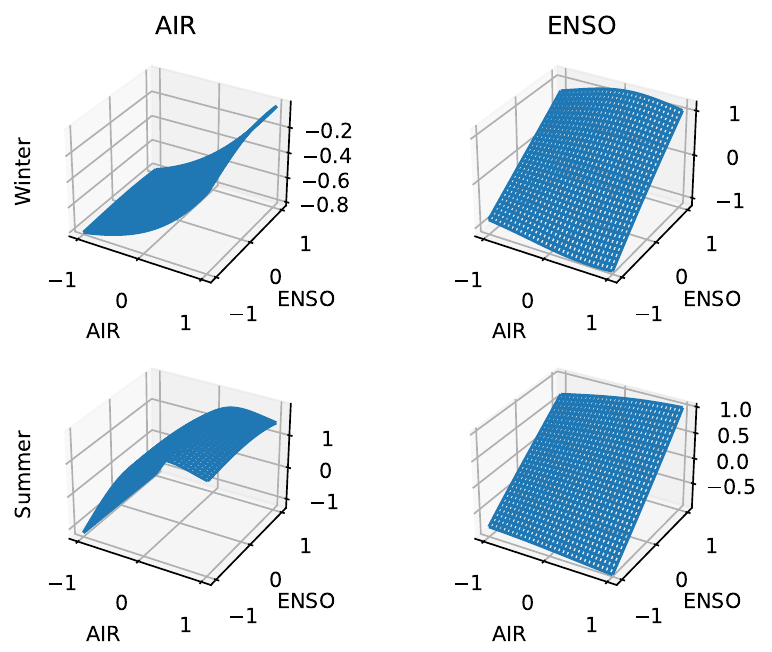}
      \caption{}
      \label{fig:enso_viz_nonlinear}
    \end{subfigure}
  \caption{Function responses of the ENSO-AIR experiment assuming (a) linear and (b) non-linear softplus networks. Each row shows the function responses for Winter and Summer respectively.}
  \label{fig:enso_viz}
\end{figure}

\subsection{Salsa experiment}\label{app:salsa_exp}
  
\begin{wrapfigure}{R}{0.4\textwidth}
\vspace{-1em}
  \centering
    \includegraphics[width=\linewidth]{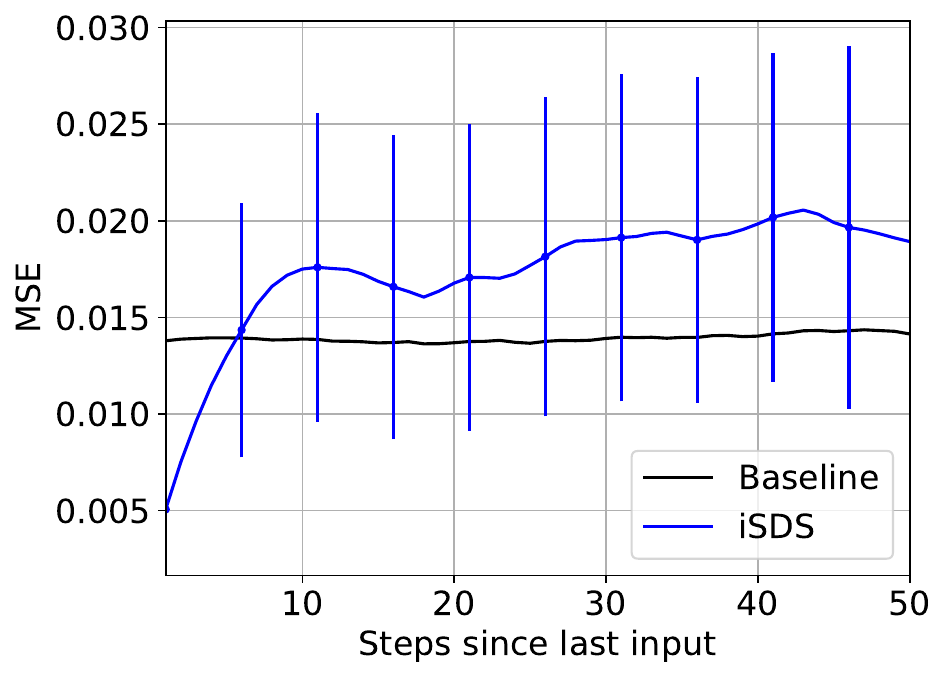}
    \caption{Forecasting averaged pixel MSE and standard deviation (vertical lines) of our iSDS using test data sequences.}
    \vspace{-1.75em}
    \label{fig:mse_through_time_salsa}
\end{wrapfigure}%
  
\paragraph{Mocap sequences} The data we consider for this experiment consists on $28$ \textit{salsa dancing} sequences from the \href{http://mocap.cs.cmu.edu/}{CMU mocap data}. Each trial consists of a sequence with varying length, where the observations represent 3D positions of 41 joints of both participants. Following related approaches \citep{dong2020collapsed}, we use information of one of the participants, which should be sufficient for capturing dynamics, with a total of $41\times 3$ observations per frame. Then, we subsample the data by a factor of 4, normalise the data, and clip each sequence to $T=200$.

\paragraph{Video sequences} To train SDSs, we generate $64\times64$ video sequences of length $T=200$. The dancing sequences are originally obtained from the same \href{http://mocap.cs.cmu.edu/}{CMU mocap data}, but they are processed into human meshes using \citet{amass2019} and available on the \href{https://amass.is.tue.mpg.de/}{AMASS dataset}. The total processed samples from CMU salsa dancing sequences are $14$. To generate videos, we subsample the sequences by a factor of $8$, and augment the data by rendering human meshes with rotated perspectives and offsetting the subsampled trajectories. To do so, we adapt the \href{https://github.com/nghorbani/amass}{available code} from \citet{amass2019}, and generate $10080$ train samples and $560$ test samples. In figure \ref{fig:salsa_reconstructions} we show examples of reconstructed salsa videos from the test dataset using our identifiable SDS, where we observe that the method achieves high-fidelity reconstructions. 

Additionally, in Figure \ref{fig:mse_through_time_salsa} we provide forecasting results on the test data for $50$ future frames from the last observation. As a reference, at each frame we compare the iSDS predictions with a black image (indicated as baseline). As we observe, the prediction error increases rapidly. More specifically, the averaged pixel MSE for the first $20$ predicted frames is $0.0148$, which is high compared to the reconstruction error ($2.26\cdot 10^{-4}$). Nonetheless, this result is expected for the following reasons. First, the discrete transitions are independent of the observations, which can trigger dynamical changes that are not aligned with the groundtruth transitions. Second, the errors are accumulated over time, which combined with the previous point can rapidly cause disparities in the predictions. Finally, our formulation does not train the model based on prediction explicitly. Although the predicted frames are not aligned with the ground truth, in Figure \ref{fig:salsa_forecasts} we show that our iSDS is able to produce reliable future sequences, despite some exceptions (see 6th forecast). In general, we observe that our proposed model can be used to generate reliable future dancing sequences. We note that despite the restrictions assumed to achieve identifiability guarantees (e.g. removed feedback from observations in comparison to \citet{dong2020collapsed}), our iSDS serves as a generative model for high-dimensional sequences.

\begin{figure}
    \centering
    \includegraphics[width=.9\linewidth]{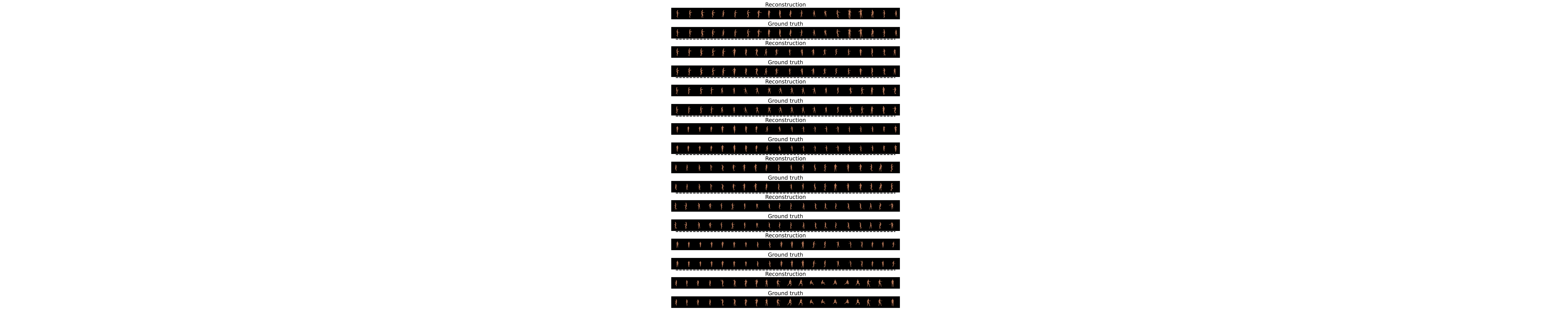}
    \caption{Reconstruction and ground truth of salsa dancing videos from the test dataset.}
    \label{fig:salsa_reconstructions}
\end{figure}

\begin{figure}
    \centering
    \includegraphics[width=\linewidth]{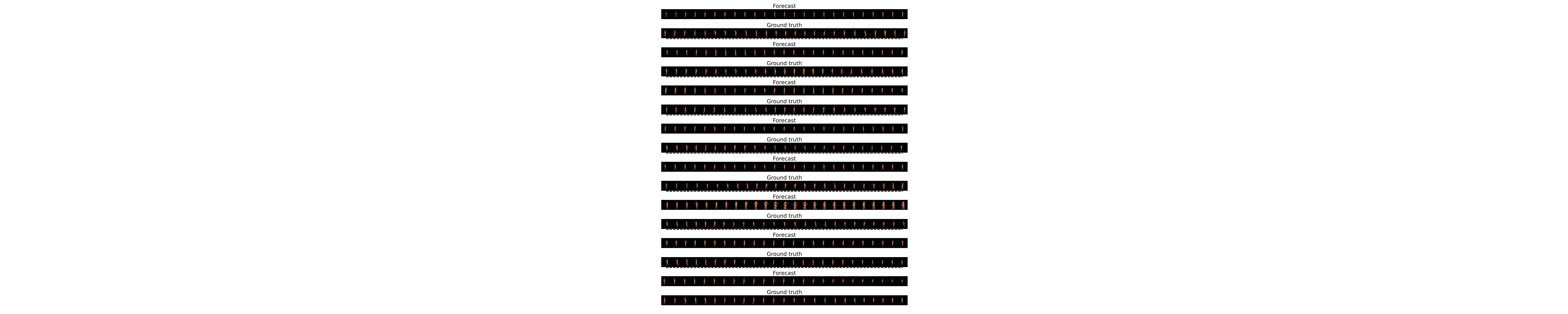}
    \caption{Forecasts and corresponding ground truth of future $T=50$ frames from salsa test samples.}
    \label{fig:salsa_forecasts}
\end{figure}

\subsection{Real dancing videos}\label{app:dancing_videos}

To further motivate the applications of our identifiable SDS in challenging realistic domains, we consider real dancing sequences from the AIST Dance DB \citep{aist-dance-db}. As in the previous semi-synthetic video experiment, we focus on segmenting dancing patterns from high-dimensional input. The data contains a total of $12670$ sequences of varying lengths, which include $10$ different dancing genres (with $1267$ sequences each), different actors, and camera orientations. We focus on segmenting sequences corresponding to the \textit{Middle Hip Hop} genre, where we leave $100$ sequences for testing. We process each sequence as follows: (i) we subsample the video by 4, (ii) we crop each frame to center the dancer position, (iii) we resize each frame to $64\times 64$, and (iv) we crop the length of the video to $T=200$ frames.

For training, we adopt the same architecture and hyper-parameters as in the salsa dancing video experiment (See Appendix \ref{app:training_specs}; except we set a batch size of 16 this time). Here we also include a pre-training stage for the encoder-decoder networks, but in this case we include all the available dancing sequences (all genres, except the test samples). In the second stage where the transitions are learned, we use only the \textit{Middle Hip Hop} sequences. We note that training the iSDS in this dataset is particularly challenging, as we find that the issue of state collapsed reported by related works \citep{dong2020collapsed, ansari2021deep} is more prominent in this scenario. To mitigate this problem, we train our model using a combination of the KL annealing schedule proposed in \citet{dong2020collapsed} and the temperature coefficient proposed in \citet{ansari2021deep}, which we already included previously. We start with a KL annealing term of $10^4$ and decay it by a factor of 0.95 every 50 iterations, and with $\tau=10^3$, where we decay it by a factor of 0.975 every 100 iterations. We run this second phase for a maximum of 1000 epochs.

We show reconstruction and segmentation results in Figure \ref{fig:aist_dancing}, where we observe our iSDS learns different components from the dancing sequences. In general we observe that the sequences are segmented according to different dancing moves, except for some cases where only a prominent mode is present. We also note that different prominent modes are present depending on background information. For example, the \textit{blue} mode is prominent for white background, and the \textit{teal} mode is prominent for combined black and white backgrounds. Such findings indicate the possibility of having data artifacts, in which the model performs segmentation based on the background information as they could be correlated with the dancing dynamics.  Furthermore, our iSDS reconstructs the video sequences with high fidelity, except for fine-grained details such as the hands. Quantitatively, our approach reconstructs the sequences with an averaged pixel MSE of $7.85\cdot 10^{-3}$. We consider this quantity is reasonable as it is an order of magnitude higher in comparison to the previous salsa videos, where the sequences were rendered on black backgrounds.

\begin{figure}
    \centering
    \includegraphics[width=\linewidth]{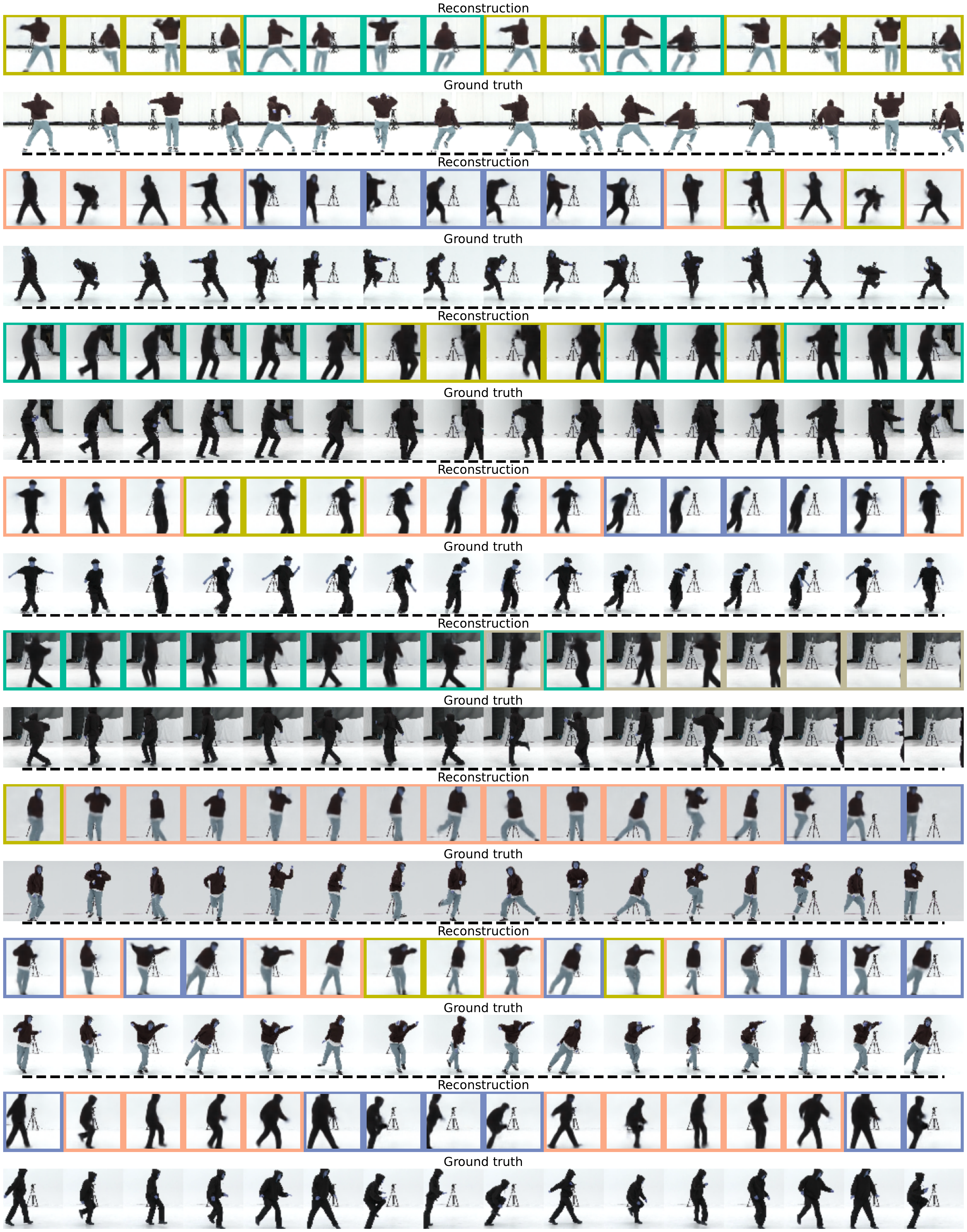}
    \vspace{-2em}
    \caption{Reconstruction, ground truth, and segmentation of dancing videos from the AIST Dance DB test set.}
    \label{fig:aist_dancing}
\end{figure}

\end{document}